\renewcommand\appendix{\par
\setcounter{section}{0}%
\setcounter{subsection}{0}%
\setcounter{table}{0}
\setcounter{figure}{0}
\gdef\thetable{\Alph{table}}
\gdef\thefigure{\Alph{figure}}
\section*{Appendix}
\gdef\thesection{\Alph{section}}
\setcounter{section}{0}}
\newtheorem{theorem}{Theorem}[section]
\newtheorem{lemma}[theorem]{Lemma}
\newtheorem{definition}[theorem]{Definition}
\newtheorem{proposition}[theorem]{Proposition}
\newenvironment{proof}{\par\noindent{\bf Proof:\ }}{\hfill$\Box$\\[2mm]}
\def\RR{\mathbb{R}}
\def\FC{\mathcal{F}}
\def\GC{\mathcal{G}}
\def\bx{\mathbf{x}}
\def\by{\mathbf{y}}
\def\bz{\mathbf{z}}
\def\bt{\mathbf{t}}
\def\tbx{\tilde{\mathbf{x}}}
\def\tby{\tilde{\mathbf{y}}}
\def\tbz{\tilde{\mathbf{z}}}
\def\tbt{\tilde{\mathbf{t}}}
\newcommand{\bu}{\mathbf{u}}
\def\bv{\mathbf{v}}
\def\MF{\textit{F}}
\def\MG{\textit{G}}
\def\<{\langle}
\def\>{\rangle}
\newcommand{\inner}[1]{\left\langle#1\right\rangle}
\def\X{\mathcal{X}}
\def\R{\mathbb{R}}
\newcommand{\norm}[1]{\left\|#1\right\|}
\newcommand{\abs}[1]{\left|#1\right|}
\def\Nc{\mathcal{N}}
\def\argmax{\mathop{\rm arg\,max}\limits}
\def\max{\mathop{\rm max}\nolimits}
\ifcvprfinal\pagestyle{empty}\fi
\begin{document}

\title{A Flexible Tensor Block Coordinate Ascent Scheme for Hypergraph Matching}



\author{
Quynh Nguyen Ngoc$^{1,2}$, Antoine Gautier$^2$, Matthias Hein$^2$ \\
$^1$Max Planck Institute for Informatics, Saarbr\"ucken, Germany \\
$^2$Saarland University, Saarbr\"ucken, Germany \\
}

\maketitle
\thispagestyle{empty}

\begin{abstract}
     The estimation of correspondences between two images resp. point sets is a core problem in computer vision.
One way to formulate the problem is graph matching leading to the quadratic assignment problem which is NP-hard.
Several so called second order methods have been proposed to solve this problem.
In recent years hypergraph matching leading to a third order problem became popular
as it allows for better integration of geometric information.
For most of these third order algorithms no theoretical guarantees are known. In this paper we propose
a general framework for tensor block coordinate ascent methods for hypergraph matching. We propose two algorithms
which both come along with the guarantee of monotonic ascent in the matching score on 
the set of discrete assignment matrices. In the experiments we show that our new algorithms
outperform previous work both in terms of achieving better matching scores and matching accuracy.
This holds in particular for very challenging settings where one has
a high number of outliers and other forms of noise.
\end{abstract}

\section{Introduction}\label{sec:intro}
%

Graph resp. hypergraph matching has been used in a variety of problems in computer vision
such as object recognition \cite{Low1999}, feature correspondences \cite{ChoLee2012,TorEtal2008}, shape matching 
\cite{DucJouPon2011,SharEtal2011,ZasEtl2009} and surface registration \cite{Zeng2010}. 
Given two sets of points, the task is to find the correspondences between them based on extracted features and/or geometric
information. In general, the graph matching problem is NP-hard, therefore, many approximation algorithms have been proposed
over the years.
They can be grouped into second order and third order methods.

Among recent second order approaches, Leordeanu and Hebert \cite{LeoHeb2005} proposed the Spectral Matching (SM) algorithm, 
and Cour \etal ~\cite{ShiEtal2007} the Spectral Matching with Affine Constraint (SMAC). 
Both of these methods are based on the best rank-1 approximation of an affinity matrix. 
Leordeanu \etal later on proposed the Integer Projected Fixed Point (IPFP) algorithm \cite{LeoHebSuk2009}, where they optimize the 
quadratic objective using a gradient-type algorithm interleaved with projection onto the discrete constraint set using
e.g. the hungarian method.
Lee \etal ~\cite{LeeChoLee2010} tackle the graph matching problem using stochastic sampling, whereas
Cho \etal ~\cite{ChoLeeLee2010} propose a reweighted random walk (RRWM) where the reweighting jumps are aiming
at enforcing the matching constraints.
Recently, Cho \etal ~\cite{ChoEtAl2014} proposed a novel max pooling matching (MPM), where they tweak the power method to 
better cope with noise in the affinities.
Although the algorithm comes without theoretical guarantees, it turns out to perform very well in practice, in particular, when
one has a large number of outliers.
Other second order methods include the work of Zhou and Torre \cite{ZhoTor2013} and Zaslavskiy \etal ~\cite{ZasEtl2009}, 
where they propose deformable graph matching (DGM) and a path-following algorithm respectively.

As second order methods are limited to pairwise similarities between two correspondences, higher order methods can integrate
better geometric information. While they have higher computational complexity, they can cope better with geometric transformations such as scaling or other forms of noise. Compared to second order methods, higher order approaches are less well studied in literature due to the 
difficulty in handling the third order optimization problem.

Duchenne \etal ~\cite{DucEtAl2011} formulated the hypergraph matching problem 
as a third order tensor optimization problem and proposed a higher order power method for solving the problem.
This approach has shown significant improvements over second order ones which were proposed earlier.
Zass and Shashua \cite{ZasSha2008} introduced a probabilistic view to the problem, and proposed
a Hypergraph Matching (HGM) algorithm. Their idea is to marginalize the tensor to a vector and solve again
a lower dimensional problem.
Chertok and Keller \cite{CheKel2010} extended this idea and marginalized 
the tensor to a matrix, resulting in a second order matching problem which is solved by spectral methods.
Both methods are based on tensor marginalization, which leads to a loss of information. 
Moreover, 
they cannot effectively handle the one-to-one matching constraint during the iterations which is only considered at the final discretization step.
Jungminlee \etal ~\cite{LeeChoLee2011} extended the reweighted random walk approach of \cite{LeeChoLee2010}
to the third order setting. Their algorithm aims at enforcing the matching constraint via
a bi-stochastic normalization scheme done for each iteration.
In \cite{Zeng2010}, Zeng \etal proposed to use pseudo-boolean optimization \cite{Boros2002} for 3D surface matching,
where higher order terms are decomposed into second order terms and then 
a quadratic pseudo-boolean optimization (QPBO) \cite{Kolmogorov2007} algorithm is used to solve the problem.

In this paper, we propose an algorithmic framework for solving the hypergraph matching problem
based on a tensor block coordinate scheme.
The key idea of our framework is to use a multilinear reformulation of the original objective function.
In particular, we solve the third order graph matching problem by solving an equivalent fourth order one.
Based on this reformulation, we derive two new hypergraph matching algorithms.
We can guarantee for both algorithms monotonic ascent in the third order matching score. In the experiments we show that our new algorithms
   outperform previous work both in terms of achieving better matching score and accuracy. This holds in particular for very challenging settings where one has
   a high number of outliers and other forms of noise. 

\section{Hypergraph Matching Formulation}\label{sec:form}
%
We formulate the correspondence problem as a hypergraph matching problem. 
Hypergraphs allow modeling of relations which are not only pairwise as in graphs but involve groups of vertices. 
In this paper, we consider $3$-uniform hypergraphs, that is each hyperedge contains $3$ vertices. 
$k$-uniform hypergraphs can be modelled as $k$-th order tensors which is the point of view we adopt in this paper.

Given two attributed point sets $G=(V,A)$ and $G'=(V',A')$ with $n_1= |V|\leq n_2= |V'|$, the matching problem can be formulated as finding a subset $\X$ in the set of correspondences $V \times V'$. 
The subset $\X$ can be represented by the binary assignment matrix $X \in \{0,1\}^{n_1 \times n_2}$ where $X_{ij}=1$ if $v_{i} \in V$ matches $v'_{j} \in V'$ and $X_{ij}=0$ else. A one-to-one matching $X$ is an element of the set
\[ M=\big\{X \in \{0,1\}^{n_1 \times n_2} \,\big|\, \sum_{i=1}^{n_1} X_{ij}\leq 1, \quad \sum_{j=1}^{n_2} X_{ij} = 1\big\},\]
that is we consider matchings which assign each vertex of $V$ to exactly one vertex in $V'$.
We further assume that we have a function $\FC^3: (V \times V')^3\to\R_+$ which assigns to each chosen triple $\{v_{i_1},v_{i_2},v_{i_3}\}\subset V$ and 
$\{v'_{j_1},v'_{j_2},v'_{j_3}\} \subset V'$ its similarity weight $\FC^3_{(i_1,j_1),(i_2,j_2),(i_3,j_3)}$. 
Finally, the score $S$ of a matching $X$ can then be defined as
\[ S(X)=\sum_{i_1,i_2,i_3}^{n_1} \sum_{j_1,j_2,j_3}^{n_2} \FC^3_{(i_1,j_1),(i_2,j_2),(i_3,j_3)} X_{i_1 j_1} X_{i_2 j_2} X_{i_3 j_3}.\]
In order to facilitate notation we introduce a linear ordering in $V \times V'$  and thus can rewrite $X$ as a vector $\bx \in \{0,1\}^{n}$ with $n=n_1 n_2$
and $\FC^3$ becomes a tensor in $\R^{n \times n \times n}$ so that the matching score, $S^3:\R^n \to\R$, is finally written as
\begin{equation}\label{eq:score}
S^3(\bx)=\sum_{i,j,k=1}^n \,\FC^3_{ijk}\, \bx_i\, \bx_j\, \bx_k.
\end{equation}
An $m$-th order tensor $\GC\in\R^{n\times \ldots\times n}$ is called symmetric if its entries $\GC_{i_1\ldots i_m}$ are invariant under any permutation of their indices $\{i_1,\ldots,i_m\}$. Note that $S^3$ is the sum of the componentwise product of the tensor $\FC^3$ with the symmetric tensor $\GC^3_{ijk}=\bx_i\bx_j\bx_k$.
Thus any non-symmetric part of $\FC^3$ is ``averaged'' out
and we can without loss of generality assume that $\FC^3$ is a symmetric tensor\footnote{If $\FC^3$ is not already 
symmetric, then one can define 
$\tilde{\FC^3}_{ijk}=\frac{1}{3!}\sum_{\sigma \in\mathfrak{G}_3}\FC^3_{\sigma(i)\sigma(j)\sigma(k)}$, $i,j,k=1,\ldots,n$ 
where $\mathfrak{G}_3$ is the set of permutations of three elements.}.
In principle, one can integrate unary terms on the diagonal $\FC^3_{iii}$, $i=1,\ldots,n$
and pairwise potentials $\FC^3_{ijj}$, $i\neq j$. However, the tensors we consider in Section \ref{sec:experiments} have just terms of order $3$, that is
$\FC^3_{ijk}=0$ if $i=j$, $i=k$ or $j=k$.

\section{Mathematical Foundations for the Tensor-Block-Coordinate Ascent Method}
In this section we derive the basis for our tensor block coordinate ascent method to optimize $S^3$ directly over the set $M$ of assignment matrices. 
The general idea is to optimize instead of a homogeneous polynomial function the associated multilinear form.
\subsection{Lifting the Tensor and Multilinear Forms}
%
Instead of optimizing the third order problem in Equation \eqref{eq:score}, 
we define a corresponding fourth order problem by lifting the third order tensor $\FC^3$ to a fourth order tensor $\FC^4$,
\begin{equation}\label{eq:gmTensor4}
    \FC^4_{ijkl} = \FC^3_{ijk} + \FC^3_{ijl} + \FC^3_{ikl} + \FC^3_{jkl}.
\end{equation}
This might be counter-intuitive at first in particular as previous work \cite{ZasSha2008,CheKel2010} has done the opposite
way by reducing the third order problem to a second order problem.
However, lifting the tensor from a third order tensor to a fourth order tensor allows us to use structure of even order tensor which is not present
for tensors of odd order. In particular, the score function $S^3$ is not convex.
\begin{lemma}
Let $S^3$ be defined as in Equation \eqref{eq:score}. If $S^3$ is not constant zero, then $S^3:\R^n \to\R$ is not convex.
\end{lemma}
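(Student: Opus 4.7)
The plan is to exploit the fact that $S^3$ is a homogeneous polynomial of degree three, and use that no odd-degree homogeneous function on $\R$ other than the zero function is convex. The idea is to reduce the multivariate claim to this one-dimensional fact by restricting to a line.

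First I would pick, using the assumption that $S^3$ is not identically zero, a vector $\bx_0\in\R^n$ with $S^3(\bx_0)\neq 0$. Next I would define the one-variable restriction $\phi:\R\to\R$ by $\phi(t):=S^3(t\bx_0)$. By the definition in Equation~\eqref{eq:score} and the 3-homogeneity of the monomials $\bx_i\bx_j\bx_k$, one immediately has $\phi(t)=t^3\,S^3(\bx_0)=c\,t^3$ where $c:=S^3(\bx_0)\neq 0$.

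Now I would argue by contradiction. If $S^3$ were convex on $\R^n$, then since $t\mapsto t\bx_0$ is affine, the composition $\phi$ would be convex on $\R$. However, $\phi''(t)=6ct$ changes sign at $t=0$, so $\phi$ cannot be convex on all of $\R$ (equivalently, one can exhibit explicit points violating the midpoint inequality, e.g.\ $t_1=-2$, $t_2=1$, $\lambda=1/3$ give $\lambda t_1+(1-\lambda)t_2=0$ while $\lambda\phi(t_1)+(1-\lambda)\phi(t_2)=-2c$, which has the wrong sign whenever $c>0$; the case $c<0$ is symmetric).

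There is no real obstacle here: the whole argument is essentially the observation that a nonzero cubic in one variable is not convex on $\R$, and the only care needed is in verifying that restriction to a line preserves convexity, which is standard. The proof is therefore short and self-contained, and does not require any properties of $\FC^3$ beyond those already stated.
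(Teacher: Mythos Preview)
Your proof is correct. Both arguments ultimately rest on the same odd-homogeneity phenomenon, but the routes differ. The paper computes the full Hessian $H\!S^3(\bx)=6\,\MF^3(\bx,\cdot,\cdot)$ and observes that convexity would force $\MF^3(\bx,\by,\by)\geq 0$ for all $\bx,\by$; replacing $\bx$ by $-\bx$ then yields $\MF^3(\bx,\by,\by)\equiv 0$, whence $S^3\equiv 0$. You instead restrict $S^3$ to a single line $t\mapsto t\bx_0$ through a point with $S^3(\bx_0)\neq 0$, obtaining $\phi(t)=ct^3$, and note this scalar cubic is not convex. Your argument is more elementary and self-contained (no explicit Hessian, no multilinear-form notation), and it applies verbatim to any odd-degree homogeneous polynomial. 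The paper's version, on the other hand, extracts the slightly stronger intermediate conclusion that the entire bilinear form $\MF^3(\bx,\cdot,\cdot)$ vanishes, which fits naturally with the multilinear machinery used elsewhere in the paper.
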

\begin{proof}
The Hessian $H\!S^3$ of $S^3$ satisfies 
\begin{equation*}H\!S^3(\bx)_{jk} = 6 \sum_{i=1}^n \FC^3_{ijk} \bx_i,\qquad \forall 1\leq j,k\leq n,\end{equation*} 
for every $\bx\in\R^n$, i.e. $H\!S^3(\bx)=6\MF^3(\bx,\cdot,\cdot)$.
Now, if $S^3$ is convex, then 
\begin{equation*}0\leq \inner{\by,H\!S^3(\bx)\by} = 6 \MF^3(\bx,\by,\by)\qquad \forall \bx,\by\in\R^n.\end{equation*} 
It follows that \begin{equation*}0\leq \MF(\bx,\by,\by)\ \text{and}\ 0\leq\MF(-\bx,\by,\by)=-\MF(\bx,\by,\by)\end{equation*} 
for every $\bx,\by\in\R^n$. In particular, for $\bx=\by$ we get $0=\MF(\bx,\bx,\bx)=S^3(\bx)$ for every $\bx\in\R^n$.
\end{proof}

The convexity is crucial to derive our block coordinate ascent scheme.
As mentioned earlier, we do not work with the score function directly but with the multilinear form associated to it.
\begin{definition}[Multilinear Form]
The multilinear form $\MF^m\colon\R^n \times \ldots \times \R^n \to\R$ associated to an $m$-th order tensor $\FC^m$ is given by:
\begin{equation}
    \MF^m(\bx^1, \ldots, \bx^m) = \sum_{i_1, \ldots, i_m}^n \FC^m_{i_1 \ldots i_m} \bx_{i_1}^1 \ldots \bx_{i_m}^m \nonumber,
\end{equation}
and the score function $S^m\colon \R^n\to\R$ associated to $\FC^m$ is defined by $S^m(\bx)=\MF^m(\bx,\ldots,\bx)$.
\end{definition}
For simplicity, the superscript in the multilinear form can be omitted when there is no ambiguity.
For example, $\MF(\bx,\by,\bz,\bt)$ can be used interchangeably with $\MF^4(\bx,\by,\bz,\bt)$ in the same context 
since the number of variables of this form is four which already implies a fourth order tensor.
Also, we write $\MF^4(\bx,\bx,\bx,\cdot)$ to denote the vector in $\RR^n$ such that
$$\MF^4(\bx,\bx,\bx,\cdot)_l = \sum_{i, j, k=1}^n \FC^4_{ijkl} \bx_i \bx_j \bx_k ,\qquad \forall 1 \leq l \leq n.$$ 
Similarly, we write $\MF^4(\bx,\bx,\cdot,\cdot)$ to denote the matrix in $\RR^{n\times n}$ such that 
$$\MF^4(\bx,\bx,\cdot,\cdot)_{kl} = \sum_{i, j=1}^n \FC^4_{ijkl} \bx_i \bx_j ,\qquad \forall 1 \leq k,l \leq n.$$
Note that if $\MF^4$ is associated to a symmetric tensor, then the position of the dots does not matter. 
For example, if $\FC^4$ is symmetric then one can check that 
$\MF^4(\bx,\bx,\bx,\cdot) = \MF^4(\bx,\bx,\cdot,\bx) = \MF^4(\bx,\cdot,\bx,\bx) = \MF^4(\cdot,\bx,\bx,\bx)$.
Similar properties hold for multilinear forms of other orders.

It might seem at first that the lift from third to fourth order defined in \eqref{eq:gmTensor4} 
should lead to a huge computational overhead. However, 
\begin{equation}\label{eq:eqS4S3}
	S^4(\bx)
	= 4\,\MF^3(\bx,\bx,\bx) \,\sum_{i=1}^n \bx_i
	= 4\,S^3(\bx)\,\sum_{i=1}^n \bx_i
\end{equation}
and $\sum_{i=1}^n \bx_i=n_1,$ for all $\bx \in M$, thus we have the following equivalence of the
optimization problems,
\begin{equation}\label{eq:liftedProblem}
    \max\limits_{\bx \in M} \MF^4(\bx,\bx,\bx,\bx)\;\equiv\; \max\limits_{\bx \in M} \MF^3(\bx,\bx,\bx)
\end{equation} 

\subsection{Convex Score Functions and Equivalence of Optimization
Problems}
The use of multilinear forms corresponding to convex score
functions is crucial for the proposed algorithms.
The following lemma shows that even if we optimize the multilinear form $\MF^4$ instead of $S^4$ we
get ascent in $S^4$. We can either fix all but one argument or all but two arguments, which leads to the two
variants of the algorithm in Sec. \ref{sec:algorithms}.
\begin{lemma}
    Let $\FC^4$ be a fourth order symmetric tensor.
    If $S^4\colon\R^n \to\R$ is convex, then for all $\bx, \by, \bz,\bt \in \RR^n$:
    \begin{enumerate}[leftmargin=*]
	\item 
		$
		    \MF^4(\bx,\bx,\by,\by) \leq \max\limits_{\bu \in \{ \bx,\by\}} \MF^4(\bu,\bu,\bu,\bu),
		$
		\item
		$
		    \MF^4(\bx,\by,\bz,\bt) \leq \displaystyle \max\limits_{\bu \in \{ \bx,\by,\bz,\bt \}} \MF^4(\bu,\bu,\bu,\bu).
		$
    \end{enumerate}
    \label{lem:main}
\end{lemma}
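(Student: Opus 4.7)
The entire proof is driven by a single consequence of convexity. A direct computation using the symmetry of $\FC^4$ shows that the Hessian of $S^4$ at any point $\bw\in\R^n$ equals $12\,\MF^4(\cdot,\cdot,\bw,\bw)$, viewed as an $n\times n$ matrix. Convexity of $S^4$ is therefore equivalent to the nonnegativity $\MF^4(\bv,\bv,\bw,\bw)\ge 0$ for every $\bv,\bw\in\R^n$. Both bounds will be deduced by applying this nonnegativity to carefully chosen inputs and then expanding by multilinearity.

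For part (1), I would plug $\bv=\bx-\by$ and $\bw=\bx+\by$ into the inequality. Expanding $\MF^4(\bx-\by,\bx-\by,\bx+\by,\bx+\by)$ and using the symmetry of $\FC^4$ to collect permutations, every coefficient of a cross term that is odd in $\by$ (such as $\MF^4(\bx,\bx,\bx,\by)$ or $\MF^4(\bx,\by,\by,\by)$) cancels. What remains collapses to the clean identity $S^4(\bx)+S^4(\by)-2\,\MF^4(\bx,\bx,\by,\by)\ge 0$, which immediately gives $\MF^4(\bx,\bx,\by,\by)\le \tfrac{1}{2}(S^4(\bx)+S^4(\by))\le\max(S^4(\bx),S^4(\by))$.

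For part (2), I would apply the nonnegativity inequality \emph{twice}, with the two pairs $(\bv,\bw)=(\bx+\by,\bz-\bt)$ and $(\bv,\bw)=(\bx-\by,\bz+\bt)$, and then add the two resulting expansions. The sign pattern is chosen so that every term containing $\MF^4(\bx,\by,\cdot,\cdot)$ as well as $\MF^4(\bx,\bx,\bz,\bt)$ and $\MF^4(\by,\by,\bz,\bt)$ picks up opposite signs in the two inequalities and cancels in the sum. What survives rearranges into $4\,\MF^4(\bx,\by,\bz,\bt)\le \MF^4(\bx,\bx,\bz,\bz)+\MF^4(\bx,\bx,\bt,\bt)+\MF^4(\by,\by,\bz,\bz)+\MF^4(\by,\by,\bt,\bt)$. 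Each term on the right is now of the form treated in part (1), hence is bounded by $\max_{\bu\in\{\bx,\by,\bz,\bt\}}S^4(\bu)$, and averaging preserves the bound.

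The main obstacle is the combinatorial bookkeeping of part (2): a single expansion of some $\MF^4(\bv,\bv,\bw,\bw)\ge 0$ inevitably leaves cross terms that cannot be controlled by part (1). The nonobvious ingredient is guessing the symmetric $\pm$-split $(\bx\pm\by,\bz\mp\bt)$ which, once summed, kills exactly the unwanted cross terms while leaving four copies of the $\MF^4(\bu,\bu,\bv,\bv)$ form that part (1) already handles. After that choice, the rest of the argument is routine multilinear algebra.
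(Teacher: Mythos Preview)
Your proposal is correct and follows essentially the same argument as the paper: the Hessian characterization of convexity, the choice $\MF^4(\bx-\by,\bx-\by,\bx+\by,\bx+\by)\ge 0$ for part (1), and the two inequalities $\MF^4(\bx+\by,\bx+\by,\bz-\bt,\bz-\bt)\ge 0$ and $\MF^4(\bx-\by,\bx-\by,\bz+\bt,\bz+\bt)\ge 0$ summed for part (2), followed by an application of part (1) to each surviving term.
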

\begin{proof}
   First, we prove a characterization for the convexity of $S^4$.
   		The gradient and the Hessian of $S^4$ are given by
   		\begin{eqnarray*}
   		 \nabla S^4(\bx) \!\!\! &=& \!\!\! 4\MF^4(\bx,\bx,\bx,\cdot),\\
   			H\!S^4(\bx)\!\!\!  &=& \!\!\! 12\MF^4(\bx,\bx,\cdot,\cdot),
   		\end{eqnarray*}
   		where we used the symmetry of $\FC^4$.
   			$S^4$ is convex if and only if $H\!S^4(\bx)$
   			is positive semi-definite for all $\bx \in \R^n$ which is equivalent to
   			\begin{equation}\label{conv_charac} \inner{\by,H\!S^4(\bx)\,\by} = 12 \MF^4(\bx,\bx,\by,\by)\geq 0, \forall \bx,\by \in \R^n.
   			\end{equation}
	\begin{enumerate}
		    \item
			By \eqref{conv_charac} and the multilinearity of $\MF$, we have
	\begin{eqnarray}
	0\!\!\!&\leq &\!\!\!\MF(\bx-\by,\bx-\by,\bx+\by,\bx+\by)\\
	&= &\!\!\!\MF(\bx,\bx,\bx,\bx)+\MF(\by,\by,\by,\by)  -2\MF(\bx,\bx,\by,\by) \nonumber
	\end{eqnarray}
	for every $\bx, \by \in \RR^n.$ It follows that
			  \begin{eqnarray*}
				  2 \MF(\bx,\bx,\by,\by)\!\!\! &\leq&\!\!\! \MF(\bx,\bx,\bx,\bx) + \MF(\by,\by,\by,\by) \nonumber \\
				  &\leq& \!\!\! 2  \max\limits_{\bu \in \{ \bx,\by\}} \MF^4(\bu,\bu,\bu,\bu).
			  \end{eqnarray*}
			  for all $\bx, \by \in \RR^n.$
		    \item Similarly, for all $\bx,\by,\bz,\bt \in \RR^n$, we have
		    		  \begin{eqnarray} \label{ineq1}
		    			  0\!\!\! & \leq & \!\!\!\MF(\bx+\by, \bx+\by, \bz-\bt, \bz-\bt) \nonumber\\
		    			  &=&\!\!\! \MF(\bx,\bx,\bz,\bz) + \MF(\bx,\bx,\bt,\bt) +\MF(\by,\by,\bz,\bz)\nonumber \\ 
		    			  &&\!\!\!+\MF(\by,\by,\bt,\bt)+ 2\MF(\bx,\by,\bz,\bz)\\ 
		    			  &&\!\!\!+2\MF(\bx,\by,\bt,\bt)\nonumber-2\MF(\bx,\bx,\bz,\bt)\nonumber\\ 
		    			  &&\!\!\!- 2\MF(\by,\by,\bz,\bt)-4\MF(\bx,\by,\bz,\bt).\nonumber
		    		  \end{eqnarray}
			  Switching the variables from ($\bx,\by,\bz,\bt$) to ($\bz,\bt,\bx,\by$) and applying the same inequality, we get
			 \begin{eqnarray} \label{ineq2}
			 			  0\!\!\! &\leq& \!\!\!\MF(\bx-\by, \bx-\by, \bz+\bt, \bz+\bt) \nonumber\\
			 			  &=& \!\!\!\MF(\bx,\bx,\bz,\bz) + \MF(\bx,\bx,\bt,\bt) + \MF(\by,\by,\bz,\bz) \nonumber\\ 
			 			  && \!\!\!+ \MF(\by,\by,\bt,\bt) - 2\MF(\bx,\by,\bz,\bz) \\ 
			 			  &&\!\!\!- 2\MF(\bx,\by,\bt,\bt) + 2\MF(\bx,\bx,\bz,\bt) \nonumber\\
			 			 &&\!\!\!+ 2\MF(\by,\by,\bz,\bt) - 4\MF(\bx,\by,\bz,\bt). \nonumber
	        \end{eqnarray}
			  Summing up inequalities \eqref{ineq1} and \eqref{ineq2} we obtain:
			  \begin{eqnarray}
			    4\MF(\bx,\by,\bz,\bt)\!\!\! &\leq&\!\!\! \MF(\bx,\bx,\bz,\bz) + \MF(\bx,\bx,\bt,\bt) \\ 
				&& \!\!\!+ \MF(\by,\by,\bz,\bz) + \MF(\by,\by,\bt,\bt). \nonumber
			  \end{eqnarray}
			  Finally, applying the first result finishes the proof.
		\end{enumerate}
\end{proof}
The following theorem shows that the optimization of the multilinear form $\MF^4$ 
and the score function $S^4$ are equivalent in the sense that there exists
a globally optimal solution of the first problem which is also globally optimal for the second problem.
\begin{theorem}{
    Let $\FC^4$ be a fourth order symmetric tensor and suppose that $S^4\colon\R^n \to \R$ is convex. 
    Then it holds for any compact constraint set $D \subset \R^n$,
    \begin{eqnarray}
    \max\limits_{\bx \in D} \MF^4(\bx,\bx,\bx,\bx)\!\!\!
	    &=& \!\!\!\max\limits_{\bx,\by \in D} \MF^4(\bx,\bx,\by,\by) \\
	    &=& \!\!\!	\max\limits_{\bx,\by,\bz,\bt \in D} \MF^4(\bx,\by,\bz,\bt)  \nonumber
	  \end{eqnarray}
    \label{theo:main_order4}}
\end{theorem}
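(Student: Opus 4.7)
The plan is to show the two equalities by a pair of easy two-sided inequalities, using Lemma~\ref{lem:main} for the nontrivial direction in each case. Compactness of $D$ guarantees the maxima are attained, so we can work with actual maximizers instead of suprema.

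For the first equality, the inequality $\max_{\bx\in D}\MF^4(\bx,\bx,\bx,\bx)\le \max_{\bx,\by\in D}\MF^4(\bx,\bx,\by,\by)$ is immediate: any feasible $\bx$ in the left-hand problem gives a feasible pair $(\bx,\bx)$ in the right-hand problem with the same objective value, so the right-hand max dominates. For the reverse direction, let $(\bx^\star,\by^\star)$ be a maximizer of the right-hand problem (which exists by continuity of $\MF^4$ and compactness of $D\times D$). By Lemma~\ref{lem:main}(1), since $S^4$ is convex and $\FC^4$ is symmetric,
\begin{equation*}
\MF^4(\bx^\star,\bx^\star,\by^\star,\by^\star)\le \max_{\bu\in\{\bx^\star,\by^\star\}}\MF^4(\bu,\bu,\bu,\bu)\le \max_{\bx\in D}\MF^4(\bx,\bx,\bx,\bx),
\end{equation*}
which closes the loop.

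For the second equality, the easy direction again comes from specialization: taking $\by=\bx$ and $\bt=\bz$ shows that every value attained in the middle problem is also attained in the four-argument problem, so $\max_{\bx,\by\in D}\MF^4(\bx,\bx,\by,\by)\le \max_{\bx,\by,\bz,\bt\in D}\MF^4(\bx,\by,\bz,\bt)$. For the reverse, pick a maximizer $(\bx^\star,\by^\star,\bz^\star,\bt^\star)$ of the four-argument problem and apply Lemma~\ref{lem:main}(2) to get
\begin{equation*}
\MF^4(\bx^\star,\by^\star,\bz^\star,\bt^\star)\le \max_{\bu\in\{\bx^\star,\by^\star,\bz^\star,\bt^\star\}}\MF^4(\bu,\bu,\bu,\bu)\le \max_{\bx\in D}\MF^4(\bx,\bx,\bx,\bx),
\end{equation*}
which, chained with the first equality just proved, gives exactly the desired bound.

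There is no real obstacle here: the whole content sits in Lemma~\ref{lem:main}, which has already done the heavy lifting by exploiting the convexity of $S^4$ together with the symmetry of $\FC^4$. The only thing to be careful about is to apply the lemma to a maximizer rather than to arbitrary points (so that the right-hand side of the lemma's bound lies inside the diagonal max over $D$), and to note that no convexity or any other structural assumption on $D$ is used—only compactness, which we need solely to ensure attainment of the maxima in all three problems.
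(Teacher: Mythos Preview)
Your proof is correct and follows essentially the same route as the paper: establish the trivial chain of inequalities by specialization and close the loop via Lemma~\ref{lem:main}. The paper is marginally more economical, using only part~(2) of the lemma to get $\max_{\bx,\by,\bz,\bt\in D}\MF^4(\bx,\by,\bz,\bt)\le\max_{\bx\in D}\MF^4(\bx,\bx,\bx,\bx)$ in one step (which automatically sandwiches the middle term), whereas you handle the two equalities separately and invoke both parts of the lemma; the substance is identical.
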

\begin{proof}
   For any compact set $D \subset \RR^n$ it holds:
     \begin{eqnarray}
     \max\limits_{\bx \in D} \MF^4(\bx,\bx,\bx,\bx) \!\!\!  &\leq& \!\!\! \max\limits_{\bx,\by \in D} \MF^4(\bx,\bx,\by,\by)\\
     &\leq& \!\!\! \max\limits_{\bx,\by,\bz,\bt \in D} \MF^4(\bx,\by,\bz,\bt). \nonumber
     \end{eqnarray}
     However, the second inequality in Lemma \ref{lem:main} shows
     \begin{equation}
        \MF^4(\bx,\by,\bz,\bt) \leq \displaystyle \max\limits_{\bu \in \{ \bx,\by,\bz,\bt \}} \MF^4(\bu,\bu,\bu,\bu) 
     \end{equation} which leads to
     \begin{equation}
        \max\limits_{\bx,\by,\bz,\bt \in D} \MF^4(\bx,\by,\bz,\bt) \leq \max\limits_{\bx \in D} \MF^4(\bx,\bx,\bx,\bx),
     \end{equation}
     and the proof is done.
\end{proof}

As we cannot guarantee that $S^4$ is convex for our chosen affinity tensor, we propose a modification making it convex.
Importantly, this modification turns out to be constant on the set of matchings $M$.
\begin{proposition}
    Let $\FC^4$ be a fourth order symmetric tensor. 
    Then for any $\alpha \geq 3\norm{\FC^4}_2$, where $\norm{\FC^4}_2:=\sqrt{\sum_{i,j,k,l=1}^n (\FC_{ijkl}^4)^2}$, the function
    \begin{equation*}\label{eq:convexMod}
	 S^4_\alpha(\bx) := S^4(\bx) + \alpha \,\norm{\bx}_2^4
    \end{equation*}
     is convex on $\RR^n$ and, for any $\bx \in M$, we have
     \begin{equation*}\label{eq:Modconstant}
      S^4_\alpha(\bx)=S^4(\bx) + \alpha\, n_1^2.
     \end{equation*}
     \label{prop:convexify}
\end{proposition}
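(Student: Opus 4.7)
The plan is to verify the two claims separately: first the convexity of $S^4_\alpha$ on $\R^n$ via a positive semi-definite Hessian, and then the evaluation on $M$ by a direct combinatorial computation.

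For the convexity claim, I would compute the Hessian of the penalty term $g(\bx) = \|\bx\|_2^4 = (\bx^\top\bx)^2$ directly. Differentiating twice yields
\begin{equation*}
H g(\bx) = 4 \|\bx\|_2^2\, I + 8\, \bx\bx^\top.
\end{equation*}
Combined with the formula $H\!S^4(\bx) = 12\,\MF^4(\bx,\bx,\cdot,\cdot)$ already established in the proof of Lemma \ref{lem:main}, the convexity of $S^4_\alpha$ is equivalent to
\begin{equation*}
12\,\MF^4(\bx,\bx,\by,\by) + 4\alpha \|\bx\|_2^2\|\by\|_2^2 + 8\alpha (\bx^\top\by)^2 \geq 0
\end{equation*}
for all $\bx,\by\in\R^n$.

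The key estimate is then an upper bound on $|\MF^4(\bx,\bx,\by,\by)|$. I would apply the Cauchy--Schwarz inequality to the sum $\sum_{i,j,k,l} \FC^4_{ijkl}(\bx_i\bx_j\by_k\by_l)$, obtaining
\begin{equation*}
|\MF^4(\bx,\bx,\by,\by)| \leq \|\FC^4\|_2\,\|\bx\|_2^2\,\|\by\|_2^2,
\end{equation*}
since $\sum_{i,j,k,l}(\bx_i\bx_j\by_k\by_l)^2 = \|\bx\|_2^4\|\by\|_2^4$. Dropping the non-negative term $8\alpha(\bx^\top\by)^2$, this yields
\begin{equation*}
\langle \by, H\!S^4_\alpha(\bx)\,\by\rangle \geq (4\alpha - 12\|\FC^4\|_2)\,\|\bx\|_2^2\|\by\|_2^2,
\end{equation*}
which is non-negative precisely when $\alpha \geq 3\|\FC^4\|_2$, proving convexity.

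For the value of $S^4_\alpha$ on $M$, I would use that for any $\bx\in M$ the corresponding assignment matrix $X$ is binary with each of its $n_1$ rows summing to exactly one. Therefore $\sum_i \bx_i = n_1$, and since $\bx_i\in\{0,1\}$ implies $\bx_i^2 = \bx_i$, we get $\|\bx\|_2^2 = n_1$ and hence $\|\bx\|_2^4 = n_1^2$, giving the claimed identity. There is no real obstacle here; the only step requiring care is the Cauchy--Schwarz bound on the multilinear form, since one must recognize that the square of the tensor-vector contraction splits into the products of the squared norms of the vectors.
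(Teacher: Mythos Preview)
Your proposal is correct and follows essentially the same approach as the paper: compute the Hessian of $S^4_\alpha$, reduce convexity to the inequality $12\,\MF^4(\bx,\bx,\by,\by) + 4\alpha\|\bx\|_2^2\|\by\|_2^2 + 8\alpha\langle\bx,\by\rangle^2 \geq 0$, bound $|\MF^4(\bx,\bx,\by,\by)|$ by $\|\FC^4\|_2\|\bx\|_2^2\|\by\|_2^2$ via Cauchy--Schwarz, and conclude using $\alpha \geq 3\|\FC^4\|_2$; the second claim is handled in both cases by $\|\bx\|_2^2 = n_1$ on $M$. Your write-up is slightly more explicit about dropping the nonnegative $8\alpha\langle\bx,\by\rangle^2$ term and about why $\|\bx\|_2^2 = n_1$ (via $\bx_i^2=\bx_i$), but there is no substantive difference.
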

\vspace{-20pt}
\begin{proof}
The gradient and the Hessian of $S^4_\alpha$ can be computed as:
\begin{eqnarray*}
      \nabla S^4_\alpha(\bx)\!\!\!  &=&\!\!\!  4 \MF^4(\bx,\bx,\bx,\cdot) + 4 \alpha \norm{\bx}_2^2 \bx,\\
      H\!S^4_\alpha(\bx)\!\!\!  &=& \!\!\! 12 \MF^4(\bx,\bx,\cdot,\cdot) + 8 \alpha \bx \bx^T + 4 \alpha \norm{\bx}_2^2 I,\qquad
\end{eqnarray*} where $I$ is the identity matrix.
$S^4_\alpha$ is convex if and only if \begin{equation*}\< \by, H\!S^4_\alpha(\bx) \by\> \geq 0\qquad \forall \bx,\by\in\R^n.\end{equation*} 
This is equivalent to
\begin{equation}\label{eq:hessian_S_alpha}
    12 \MF^4(\bx,\bx,\by,\by)  + 8 \alpha \inner{\bx,\by}^2 + 4 \alpha \norm{\bx}_2^2\norm{\by}^2_2 \geq 0
\end{equation} 
for every $\bx,\by\in\R^n$. By the Cauchy-Schwarz inequality we have 
\begin{eqnarray*}
\abs{\MF^4(\bx,\bx,\by,\by)}\!\!\! &=&\!\!\! \abs{\sum_{i,j,k,l=1}^n \FC^4_{ijkl} \bx_i \bx_j \by_k \by_l} \nonumber \\
		&\leq&\!\!\! \sqrt{\sum_{i,j,k,l=1}^n \big(\FC^4_{ijkl}\big)^2 \sum_{i,j,k,l=1}^n \bx_i^2 \bx_j^2 \by_k^2 \by_l^2} \nonumber\\
		&=&\!\!\! \norm{\FC^4}_2 \norm{\bx}^2_2 \norm{\by}^2_2,
\end{eqnarray*}
for every $\bx,\by\in\R^n$. 
It follows that for $\alpha \geq 3\norm{\FC^4}_2$, inequality \eqref{eq:hessian_S_alpha} is true for any $\bx,\by \in \R^n$. 
Finally, the second statement of the proposition follows from the fact that $\norm{\bx}_2^2=n_1$ for any $\bx \in M$.
\end{proof}

One of the key aspects of the algorithms we derive in the next section is that we optimize the multilinear form $\MF^4$ instead of the function $S^4$.
This requires that we are able to extend the modification $S^4_\alpha$ resp. $\norm{\bx}^4_2$ to a multilinear form.
\begin{proposition}
    The symmetric tensor $\GC^4\in\R^{n\times n \times n\times n}$ with corresponding symmetric multilinear form defined as
    \begin{equation*}
    	\MG^4(\bx,\by,\bz,\bt) = \displaystyle \frac{\<\bx,\by\>\<\bz,\bt\>+ \<\bx,\bz\>\<\by,\bt\> + \<\bx,\bt\>\<\by,\bz\>}{3}
        \end{equation*}
    satisfies $\MG^4(\bx,\bx,\bx,\bx) = \norm{\bx}_2^4$. 
    \label{prop:tensor_of_quad_term}
\end{proposition}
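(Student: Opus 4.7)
The plan is essentially a direct verification, because the candidate multilinear form is already given explicitly; what remains is to (i) exhibit the underlying symmetric tensor, (ii) check that it is indeed symmetric in all four indices, and (iii) evaluate on the diagonal.

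First, I would rewrite each inner-product pair as a sum of Kronecker deltas:
\begin{equation*}
\<\bx,\by\>\<\bz,\bt\> \;=\; \sum_{i,j,k,l=1}^{n} \delta_{ij}\delta_{kl}\,\bx_i \by_j \bz_k \bt_l,
\end{equation*}
and similarly
$\<\bx,\bz\>\<\by,\bt\> = \sum_{i,j,k,l}\delta_{ik}\delta_{jl}\,\bx_i\by_j\bz_k\bt_l$
and
$\<\bx,\bt\>\<\by,\bz\> = \sum_{i,j,k,l}\delta_{il}\delta_{jk}\,\bx_i\by_j\bz_k\bt_l.$
This identifies the tensor
\begin{equation*}
\GC^4_{ijkl} \;=\; \frac{1}{3}\bigl(\delta_{ij}\delta_{kl} + \delta_{ik}\delta_{jl} + \delta_{il}\delta_{jk}\bigr),
\end{equation*}
whose associated multilinear form is exactly the $\MG^4$ in the statement.

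Next I would check symmetry of $\GC^4$. The three terms $\delta_{ij}\delta_{kl}$, $\delta_{ik}\delta_{jl}$, $\delta_{il}\delta_{jk}$ are obtained from one another by permuting indices, and their sum is a sum over all ways to partition the four indices $\{i,j,k,l\}$ into two unordered pairs. Any permutation $\sigma \in \mathfrak{G}_4$ permutes these three pairings among themselves, so $\GC^4_{\sigma(i)\sigma(j)\sigma(k)\sigma(l)} = \GC^4_{ijkl}$, confirming symmetry.

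Finally, setting $\bx=\by=\bz=\bt$ in the expression for $\MG^4$ gives
\begin{equation*}
\MG^4(\bx,\bx,\bx,\bx) \;=\; \frac{\<\bx,\bx\>^2 + \<\bx,\bx\>^2 + \<\bx,\bx\>^2}{3} \;=\; \<\bx,\bx\>^2 \;=\; \norm{\bx}_2^4,
\end{equation*}
which is the desired identity. There is no real obstacle here; the only step that requires a moment of thought is the symmetry check, and it reduces to the elementary observation that the three pair-partitions of a four-element set are permuted among themselves by $\mathfrak{G}_4$.
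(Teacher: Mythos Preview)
Your proof is correct and essentially the same as the paper's: both exhibit the tensor explicitly (the paper via a case distinction on index equalities, you via the equivalent formula $\GC^4_{ijkl}=\tfrac{1}{3}(\delta_{ij}\delta_{kl}+\delta_{ik}\delta_{jl}+\delta_{il}\delta_{jk})$), verify that its multilinear form is the given expression, and then plug in $\bx=\by=\bz=\bt$. Your Kronecker-delta formulation and the pair-partition symmetry argument are a slightly more compact packaging of the same computation.
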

\begin{proof}
	The proof requires two parts:
	\begin{enumerate}
	\item There exists a fourth order symmetric tensor $\GC^4$ such that its multilinear form is given as above.
	\item The multilinear form associated to $\GC^4$ satisfies $\MG^4(\bx,\bx,\bx,\bx) = \norm{\bx}_2^4$.
	\end{enumerate}
		Let $\GC^4\in\R^{n\times n \times n \times n}$ be defined as follow:
		\begin{equation*}
		\GC^4_{ijkl}=\begin{cases} 1 & \text{ if } i = j = k = l,\\
		1/3 & \text{ if } i = j \neq k = l,\\
		1/3 & \text{ if } i = k \neq j = l,\\
		1/3 & \text{ if } i = l \neq j = k,\\
		0 & \text{ else},
		\end{cases} \qquad \forall 1\leq i,j,k,l\leq n.
		\end{equation*}
		The multilinear form associated to $\GC^4$ is then computed as
		    	    \begin{eqnarray}
		    	    	    	&& \hspace{-6mm} \MG^4(\bx,\by,\bz,\bt)=\sum_{i,j,k,l=1}^n \GC^4_{ijkl} \bx_i \by_j \bz_k \bt_l  \nonumber\\
		    	    	    		&&\hspace{-4mm}= \sum_{i=1}^n \bx_i \by_i \bz_i \bt_i + 
		    	    	    		\frac{1}{3}\sum^n_{\substack{i,j=1 \\ i \neq j}} \bx_i \by_i \bz_j \bt_j  \nonumber\\
		    	    	    		&&+ \frac{1}{3}\sum^n_{\substack{i,j=1 \\ i \neq j}} \bx_i \by_j \bz_i \bt_j+
		    	    	    		\frac{1}{3}\sum^n_{\substack{i,j=1 \\ i \neq j}} \bx_i \by_j \bz_j \bt_i  \nonumber\\
		    	    	    		&&\hspace{-4mm}= \frac{1}{3}\Big(\<\bx,\by\>\<\bz,\bt\> + \<\bx,\bz\>\<\by,\bt\> + \<\bx,\bt\>\<\by,\bz\>\Big) \nonumber
		    	    \end{eqnarray}
		As a result, we have $\MG^4(\bx,\bx,\bx,\bx) = \norm{\bx}_2^4$.
\end{proof}

Thus in the algorithm we optimize finally 
\[ \MF^4_\alpha(\bx,\by,\bz,\bt)=\MF^4(\bx,\by,\bz,\bt) + \alpha \,\MG^4(\bx,\by,\bz,\bt).\]
\textbf{Discussion}: 
In \cite{Joa2003} they proposed a general convexification strategy for
arbitrary score functions where, similar to our modification, the added term is constant on the
set of assignment matrices $M$. However, as the added term is inhomogeneous,
it cannot be extended to a symmetric multilinear form and thus does not work
in our framework. 
 In second order graph matching also several methods use convexified score functions
in various ways \cite{ZasEtl2009, ZhoTor2012, ZhoTor2013}. However, for none of these methods
it is obvious how to extend it to a third order approach for hypergraph matching. 
%

\section{Tensor Block Coordinate Ascent for Hypergraph Matching}\label{sec:algorithms}
The key idea of both algorithms is that instead of directly optimizing the score function $S^4$ we optimize
the corresponding multilinear form. Lemma \ref{lem:main} allows us then to connect the latter problem to the 
problem we are actually interested in. In both variants we directly optimize over the discrete set $M$ of possible 
matchings, that is, there is no relaxation involved. Moreover, we show monotonic ascent for both methods. In most cases we achieve higher scores
than all other higher order methods, which is reflected in significantly better performance in the experiments
in Section \ref{sec:experiments}.

The original graph matching problem is to maximize the score function $S^3$ over all assignment matrices in $M$.
This combinatorial optimization problem is NP-hard.
As stated above, instead of working with the original score function $S^3$, we use the lifted
tensor and thus the lifted score function $S^4$. 
As we cannot guarantee that $S^4$ is convex, we optimize finally $S^4_\alpha$ resp.
the associated multilinear form $\MF^4_\alpha$. However, we emphasize that even though we work with 
the lifted objects, both algorithms prove monotonic ascent with respect to the original 
score function $S^3$ over the set $M$. The central element of both algorithms
is Lemma \ref{lem:main} and the idea to optimize the multilinear form, instead of the score function directly, combined with
the fact that for assignment matrices both modifications (lifting and convexification) are constant and thus do not change the problem. 
In order to simplify the notation in the algorithms we discard the superscript and write $\MF_\alpha$ instead of
$\MF^4_\alpha$ as the algorithms only involve fourth order tensors.
\subsection{Tensor Block Coordinate Ascent via Linear Assignment Problems}\label{subsec:mult4}
The first algorithm uses a block coordinate ascent scheme to optimize the multilinear form $\MF_\alpha$.
In particular, we solve the following optimization problem 
\begin{equation*}
    \max\limits_{\bx,\by,\bz,\bt \in M} \MF^4_\alpha(\bx,\by,\bz,\bt).
\end{equation*}
Fixing all but one argument and maximizing that over all assignment matrices leads to a linear
assignment problem which can be solved globally optimal by the Hungarian algorithm \cite{Kuh1955,BurAmiMar2012}
- this is used for steps 1)-4) in Algorithm \ref{algo:linear}. However, optimization of the multilinear form
is not necessarily equivalent to optimizing the score function. 
This is the reason why we use Lemma \ref{lem:main} in step 5) to come back to the original problem. 
The following theorem summarizes the properties of Algorithm \ref{algo:linear}.
\begin{algorithm}
    \caption{BCAGM} 
    \KwIn {Lifted affinity tensor $\FC^4$, $\alpha = 3\norm{\FC^4}_2$,\\
           $(\bx^0,\by^0,\bz^0,\bt^0) \in M\times M\times M\times M,\, k = 0,\, m=0$}
    \KwOut {$\bx^* \in M$}
	  \textbf{Repeat} 
	    \begin{enumerate}
		\item $\tbx^{k+1} = \arg\max_{\bx \in M} \MF_\alpha(\bx,\by^{k},\bz^{k},\bt^{k})$ 
		\item $\tby^{k+1} = \arg\max_{\by \in M} \MF_\alpha(\tbx^{k+1},\by,\bz^{k},\bt^{k})$ 
		\item $\tbz^{k+1} = \arg\max_{\bz \in M} \MF_\alpha(\tbx^{k+1},\tby^{k+1},\bz,\bt^{k})$ 
		\item $\tbt^{k+1} = \arg\max_{\bt \in M} \MF_\alpha(\tbx^{k+1},\tby^{k+1},\tbz^{k+1},\bt)$ 
		\item \textbf{if} $\MF_\alpha (\tbx^{k+1},\tby^{k+1},\tbz^{k+1},\tbt^{k+1}) = 
					\MF_\alpha (\bx^{k},\by^{k},\bz^{k},\bt^{k})$ \textbf{then }
		    \begin{itemize}
			\item[--] $\bu^{m+1} = \argmax_{\bu \in \{ \tbx^{k+1},\tby^{k+1},\tbz^{k+1},\tbt^{k+1} \}} 
					  \MF_\alpha(\bu,\bu,\bu,\bu)$
		  \item[--]\textbf{if} $\MF_\alpha(\bu^{m+1},\bu^{m+1},\bu^{m+1},\bu^{m+1})=\MF_\alpha (\tbx^{k+1},\tby^{k+1},\tbz^{k+1},\tbt^{k+1})$ 
		        \textbf{then return}
			\item[--] $\bx^{k+1} = \by^{k+1} = \bz^{k+1} = \bt^{k+1} = \bu^{m+1}$
			\item[--] $m = m + 1$
		  \end{itemize}
		  \textbf{else} $\bx^{k+1} = \tbx^{k+1}$, $\by^{k+1} = \tby^{k+1}$, $\bz^{k+1}=\tbz^{k+1}$, $\bt^{k+1} = \tbt^{k+1}$
		\item[] \textbf{end}
		\item $k = k + 1$
	    \end{enumerate}
     \label{algo:linear}
     \vspace{-2mm}
\end{algorithm}
\begin{theorem}\label{th:Alg1}
    Let $\FC^4$ be a fourth order symmetric tensor.
    Then the following holds for Algorithm \ref{algo:linear}:
    \begin{enumerate}
     \itemsep0em
    \item The sequence $\MF_\alpha (\bx^{k},\by^{k},\bz^{k},\bt^{k})$ for $k=1,2,\ldots$ is strictly monotonically increasing or terminates.
	\item The sequence of scores $S^4(\bu^m)$ for $m=1,2,\ldots$ is strictly monotonically increasing or terminates. For every $m$, $\bu^m \in M$ is a valid assignment matrix.
	  \item The sequence of original third order scores $S^3(\bu^m)$ for $m=1,2,\ldots$ is strictly monotonically increasing or terminates.
	  \item The algorithm terminates after a finite number of iterations.
    \end{enumerate}
    \label{theo:linear}
\end{theorem}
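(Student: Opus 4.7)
All four claims flow from a single observation: every substep of Algorithm \ref{algo:linear} either maximises $\MF_\alpha$ in one block of variables (steps 1--4) or replaces the current quadruple by a diagonal quadruple via Lemma \ref{lem:main} (step 5). Lemma \ref{lem:main} is applicable throughout because $\FC^4+\alpha\GC^4$ is symmetric as a sum of two symmetric tensors (so $\MF_\alpha$ is a symmetric multilinear form), and because $S^4_\alpha$ is convex by Proposition \ref{prop:convexify} thanks to the choice $\alpha = 3\norm{\FC^4}_2$.

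For claim (1), I would note that steps 1--4 can only increase $\MF_\alpha$, and that whenever the cumulative progress through these four substeps is zero the algorithm either returns or invokes Lemma \ref{lem:main} to produce $\bu^{m+1}$ satisfying $\MF_\alpha(\bu^{m+1},\bu^{m+1},\bu^{m+1},\bu^{m+1})\geq \MF_\alpha(\tbx^{k+1},\tby^{k+1},\tbz^{k+1},\tbt^{k+1})$. The explicit equality test in step 5 ensures that, whenever the algorithm does not return, this inequality is strict, so $\MF_\alpha(\bx^k,\by^k,\bz^k,\bt^k)$ is strictly increasing until termination.

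For claim (2), the plan is to chain inequalities between two successive diagonal resets. At the moment $\bu^{m+1}$ is produced we have by (1) that $\MF_\alpha(\tbx^{k+1},\tby^{k+1},\tbz^{k+1},\tbt^{k+1})\geq \MF_\alpha(\bu^m,\bu^m,\bu^m,\bu^m) = S^4_\alpha(\bu^m)$, and the non-termination condition in step 5 gives $S^4_\alpha(\bu^{m+1}) > \MF_\alpha(\tbx^{k+1},\tby^{k+1},\tbz^{k+1},\tbt^{k+1})$. Combining yields $S^4_\alpha(\bu^{m+1}) > S^4_\alpha(\bu^m)$, and because $S^4_\alpha(\bx)=S^4(\bx)+\alpha n_1^2$ for every $\bx \in M$ by Proposition \ref{prop:convexify}, this transfers to strict ascent of $S^4(\bu^m)$; that $\bu^m \in M$ is automatic since $\bu^{m+1}$ is chosen among $\{\tbx^{k+1},\tby^{k+1},\tbz^{k+1},\tbt^{k+1}\}\subset M$.

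Claim (3) then follows from the identity $S^4(\bx) = 4\, n_1\, S^3(\bx)$ on $M$ recorded in \eqref{eq:eqS4S3}, since $n_1 > 0$. For (4), the set $M$ is finite, so $M^4$ is finite and $\MF_\alpha$ takes only finitely many values on iterates of the algorithm; the strict monotone ascent established in (1) then forces termination in finitely many iterations. The only mildly delicate point is keeping the bookkeeping between the outer index $k$ (each sweep through steps 1--4) and the inner index $m$ (each successful diagonal lift) consistent, so that the chain $S^4_\alpha(\bu^m)\leq \MF_\alpha(\tbx^{k+1},\tby^{k+1},\tbz^{k+1},\tbt^{k+1}) < S^4_\alpha(\bu^{m+1})$ is assembled correctly; no other obstacle is anticipated.
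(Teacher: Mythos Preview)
Your proposal is correct and follows essentially the same route as the paper: monotone ascent of $\MF_\alpha$ through the block updates, Lemma~\ref{lem:main} (applicable because $S^4_\alpha$ is convex by Proposition~\ref{prop:convexify}) to pass to a diagonal quadruple with strict improvement unless the algorithm returns, then the identities $S^4_\alpha=S^4+\alpha n_1^2$ and $S^4=4n_1 S^3$ on $M$ to transfer strict ascent down to $S^3$, and finally finiteness of the state space. The only cosmetic differences are that the paper phrases part~(2) as ``$S^4_\alpha(\bu^m)$ is a subsequence of $\MF_\alpha(\bx^k,\by^k,\bz^k,\bt^k)$'' rather than your explicit chaining between resets, and for part~(4) the paper appeals to finiteness of $M$ via the sequence $S^4(\bu^m)$ whereas you use finiteness of $M^4$ via the sequence $\MF_\alpha(\bx^k,\by^k,\bz^k,\bt^k)$; both arguments are valid.
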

   \begin{proof} 
      From the definition of steps $1)-4)$ in Algorithm \ref{algo:linear}, we get
   	    \begin{eqnarray*}
   		    \MF_{\alpha,k} \!\!\! &:= &\!\!\! \MF_\alpha(\bx^k,\by^{k},\bz^{k},\bt^{k}) \nonumber \\
   		    &\leq &\!\!\!  \MF_\alpha(\tbx^{k+1},\by^{k},\bz^{k},\bt^{k}) \nonumber \\
   		    &\leq&\!\!\!  \MF_\alpha(\tbx^{k+1},\tby^{k+1},\tbz^{k},\bt^{k}) \\
   		    &\leq&\!\!\!  \MF_\alpha(\tbx^{k+1},\tby^{k+1},\tbz^{k+1},\bt^{k}) \nonumber\\
   		    &\leq&\!\!\!  \MF_\alpha(\tbx^{k+1},\tby^{k+1},\tbz^{k+1},\tbt^{k+1})=:\ \tilde{\MF}_{\alpha,k+1}.\nonumber
   		\end{eqnarray*}
   		Either $\tilde{\MF}_{\alpha,k+1}>\MF_{\alpha,k}$ in which case
   		$$\bx^{k+1} = \tbx^{k+1},\by^{k+1} = \tby^{k+1}, \bz^{k+1}=\tbz^{k+1},\bt^{k+1} = \tbt^{k+1}$$
   		and
   		\begin{equation*} \MF_\alpha(\bx^{k+1},\by^{k+1},\bz^{k+1},\bt^{k+1}) > \MF_\alpha(\bx^{k},\by^{k},\bz^{k},\bt^{k}),\end{equation*}
   		or $\tilde{\MF}_{\alpha,k+1}=\MF_{\alpha,k}$ and the algorithm enters
   		step 5. Since, by Proposition \ref{prop:convexify}, $S_\alpha^4$ is convex for the chosen value of $\alpha$, applying Lemma \ref{lem:main} we get
   		\begin{eqnarray*}
   			      \tilde{\MF}_{\alpha,k+1} \!\!\! &\leq &\!\!\! \max\limits_{\bv\in\{\tilde\bx^{k+1},\tilde\by^{k+1},\tilde\bz^{k+1},\tilde\bt^{k+1}\}}\MF_\alpha(\bv,\bv,\bv,\bv)\nonumber\\
   				     &=&\!\!\! \MF_\alpha(\bu^{m+1},\bu^{m+1},\bu^{m+1},\bu^{m+1}) \\ &=&\!\!\!  S^4_\alpha(\bu^{m+1}).\nonumber
		\end{eqnarray*}
   	  If the inequality is an equality, then the termination condition of the algorithm is met. 
   	  Otherwise, we get
   	  \begin{eqnarray*}
	      \tilde{\MF}_{\alpha,k+1}\!\!\! &<& \!\!\! \MF_\alpha(\bu^{m+1},\bu^{m+1},\bu^{m+1},\bu^{m+1})\\
	      &=& \!\!\! S^4_\alpha(\bu^{m+1})=\MF_\alpha(\bx^{k+1},\by^{k+1},\bz^{k+1},\bt^{k+1}).\nonumber
   	  \end{eqnarray*}
   	  This proves the first statement of the theorem.
   
   	  From
   	  \begin{eqnarray*}
	      S^4_\alpha(\bu^{m+1})\!\!\! &=&\!\!\! \MF_\alpha(\bu^{m+1},\bu^{m+1},\bu^{m+1},\bu^{m+1})\nonumber\\ &=&\!\!\! \MF_\alpha(\bx^{k+1},\by^{k+1},\bz^{k+1},\bt^{k+1})
   	  \end{eqnarray*}
   	  it follows that 
   	  $S^4_\alpha(\bu^{m}),m=1,2,\ldots$ is a subsequence of $\MF_\alpha (\bx^{k},\by^{k},\bz^{k},\bt^{k}),k=1,2,\ldots$ and thus it holds either $S^4_\alpha(\bu^{m})=S^4_\alpha(\bu^{m+1})$ in which case the algorithm terminates or $S^4_\alpha(\bu^{m})<S^4_\alpha(\bu^{m+1})$.
   	  However, by Equation \eqref{eq:Modconstant}, the additional term which has been added to $S^4$ to get a convex function is constant on $M$, that is
   	  \[ S^4_\alpha(\bx)=S^4(\bx) + \alpha n_1^2\qquad \forall \bx\in M.\]
   	  It follows that either $S^4(\bu^{m})=S^4(\bu^{m+1})$ and the algorithm terminates or $S^4(\bu^{m})<S^4(\bu^{m+1})$ which proves the second part
   	  of the theorem.
   	  
   	  By Equation \eqref{eq:eqS4S3}, we have $S^4(\bx)=4\,n_1\,S^{3}(\bx)$ for any $\bx \in M$. 
   	  Thus the statements made for $S^4$ directly translate into corresponding statements for the original third order score $S^3$. 
   	  This proves the penultimate statement.  
   	   
   	   Finally, the algorithm yields a strictly monotonically increasing sequence $S^4(\bu^m),m=1,2,\ldots$ or it terminates. 
   	   Since there is only a finite number of possible assignment matrices, the sequence has to terminate after a finite number of steps.
\end{proof}
We would like to note that all statements of Theorem \ref{th:Alg1} remain valid for $\alpha=0$
if $S^4$ is convex. This is also the motivation why we run the algorithm first with $\alpha=0$
until we get no further ascent  and only then we set $\alpha=3\|\FC\|_2$. It turns out that in the experiments often the phase of the algorithm
with $\alpha=0$ leads automatically to a homogeneous solution and no further improvement is achieved
when setting $\alpha=3\|\FC\|_2$. This suggests that the constructed score functions in the experiments are already convex
or at least close to being convex.
\subsection{Tensor Block Coordinate Ascent via a Sequence of Quadratic Assignment Problems}\label{subsec:mult4rr}
The second algorithm uses a block coordinate ascent scheme to optimize the multilinear form $\MF_\alpha$
where now two arguments are fixed and one maximizes over the other two. 
The resulting problem is equivalent to a quadratic assignment problem (QAP) which is known to be NP-hard. 
Thus a globally optimal
solution as for the linear assignment problem in Algorithm \ref{algo:linear} is out of reach.
Instead we require a sub-algorithm $\Psi$ which delivers monotonic ascent for the QAP
\begin{equation}\label{eq:quadratic}
\vspace{-1mm} \max\limits_{\bx \in M} \inner{\bx,A\bx}, \vspace{-1mm}
\end{equation}
where $A \in \R^{n\times n}$ is a nonnegative symmetric matrix. 
As in Algorithm \ref{algo:linear}, we go back to the optimization of the score function in step 3)
using Lemma \ref{lem:main}. The following theorem summarizes the properties of Algorithm \ref{algo:quadratic}.

\FloatBarrier
\begin{algorithm}
    \caption{BCAGM-$\Psi$}
    \KwIn {Lifted affinity tensor $\FC^4$, $\alpha = 3\norm{\FC^4}_2$,\\
    $(\bx^0,\by^0) \in M\times M,\, k = 0,\, m=0$,\\ $\bz=\Psi(A,\bx^k)$ is an algorithm for the QAP in \eqref{eq:quadratic}
    which provides monotonic ascent, that is $\inner{\bz,A\bz}\geq \inner{\bx^k,A\bx^k}$}
    \KwOut {$\bx^* \in M$}
    \textbf{Repeat} 
	    \begin{enumerate}
		\item $\tbx^{k+1} = \Psi\big(\MF_\alpha(\cdot,\cdot,\by^{k},\by^{k}),\,\bx^k\big)$ 
		\item $\tby^{k+1} = \Psi\big(\MF_\alpha(\tbx^{k+1},\tbx^{k+1},\cdot,\cdot),\by^k\big)$
		\item \textbf{if} $\MF_\alpha (\tbx^{k+1},\tbx^{k+1},\tby^{k+1},\tby^{k+1}) = 
					\MF_\alpha (\bx^{k},\bx^{k},\by^{k},\by^{k})$ \textbf{then }
		    \begin{itemize}
		    	\item[--] $\bu^{m+1} = \displaystyle\argmax_{\bu \in \{ \tbx^{k+1},\tby^{k+1} \}} 
					  \MF_\alpha(\bu,\bu,\bu,\bu)$
			\item[--] \textbf{if} $\MF_\alpha (\tbx^{k+1},\tbx^{k+1},\tby^{k+1},\tby^{k+1})=\MF_\alpha(\bu^{m+1},\bu^{m+1},\bu^{m+1},\bu^{m+1})$ 
				  \textbf{then return}
			\item[--] $\bx^{k+1} = \by^{k+1} = \bu^{m+1}$
			\item[--] $m=m+1$
		    \end{itemize}
	     \textbf{else} $\bx^{k+1}=\tbx^{k+1}$, $\by^{k+1}=\tby^{k+1}$.\\
	      \textbf{end}
		\item $k = k + 1$
	    \end{enumerate}
    \label{algo:quadratic} 
     \vspace{-2mm}
\end{algorithm}
\begin{theorem}\label{th:Alg2}
    Let $\FC^4$ be a fourth order symmetric tensor
    and let $\Psi$ be an algorithm for the QAP which yields monotonic ascent.
    Then the following holds for Algorithm \ref{algo:quadratic}:
    \begin{enumerate}
         \itemsep0em
        \item The sequence $\MF_\alpha (\bx^{k},\bx^{k},\by^{k},\by^{k})$ for $k=1,2,\ldots$ is strictly monotonically increasing or terminates.
    	\item The sequence of scores $S^4(\bu^m)$ for $m=1,2,\ldots$ is strictly monotonically increasing or terminates. For every $m$, $\bu^m \in M$ is a valid assignment matrix.
    	  \item The sequence of original third order scores $S^3(\bu^m)$ for $m=1,2,\ldots$ is strictly monotonically increasing or terminates.
    	  \item The algorithm terminates after a finite number of iterations.
        \end{enumerate}
    \label{theo:quadratic}
   
\end{theorem}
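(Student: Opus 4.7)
The plan is to mirror the proof of Theorem \ref{th:Alg1} closely, substituting the sub-routine $\Psi$ in place of the Hungarian algorithm and using part 1 of Lemma \ref{lem:main} (the two-argument bound) in place of part 2 in the step that reverts from the multilinear form back to the score function.

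First, I would establish the monotonic ascent of the sequence $\MF_\alpha(\bx^k,\bx^k,\by^k,\by^k)$. By the assumption that $\Psi$ provides monotonic ascent for the QAP in \eqref{eq:quadratic}, steps 1) and 2) of Algorithm \ref{algo:quadratic} yield
\begin{equation*}
\MF_\alpha(\bx^k,\bx^k,\by^k,\by^k)\leq \MF_\alpha(\tbx^{k+1},\tbx^{k+1},\by^k,\by^k)\leq \MF_\alpha(\tbx^{k+1},\tbx^{k+1},\tby^{k+1},\tby^{k+1}),
\end{equation*}
where the applicability of $\Psi$ relies on the matrices $\MF_\alpha(\cdot,\cdot,\by^k,\by^k)$ and $\MF_\alpha(\tbx^{k+1},\tbx^{k+1},\cdot,\cdot)$ being symmetric, which follows from the symmetry of $\FC^4$ and of $\GC^4$. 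In case of strict inequality, the update in the \textbf{else} branch preserves strict ascent; in case of equality the algorithm enters step 3).

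Second, I would handle the equality case using Proposition \ref{prop:convexify} and part 1 of Lemma \ref{lem:main}. Since $\alpha=3\|\FC^4\|_2$, $S^4_\alpha$ is convex, so
\begin{equation*}
\MF_\alpha(\tbx^{k+1},\tbx^{k+1},\tby^{k+1},\tby^{k+1})\leq \max_{\bu\in\{\tbx^{k+1},\tby^{k+1}\}} \MF_\alpha(\bu,\bu,\bu,\bu) = S^4_\alpha(\bu^{m+1}).
\end{equation*}
If this inequality is an equality, the termination condition in step 3) is met. Otherwise, setting $\bx^{k+1}=\by^{k+1}=\bu^{m+1}$ produces strict ascent of $\MF_\alpha$, which proves claim 1. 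Claim 2 then follows because $S^4_\alpha(\bu^m)$ is a subsequence of $\MF_\alpha(\bx^k,\bx^k,\by^k,\by^k)$, and by Proposition \ref{prop:convexify} the added convexifying term is constant on $M$, so $S^4_\alpha(\bu^m)<S^4_\alpha(\bu^{m+1})$ is equivalent to $S^4(\bu^m)<S^4(\bu^{m+1})$; the validity $\bu^m\in M$ is immediate since the candidates $\tbx^{k+1},\tby^{k+1}$ both lie in $M$.

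Claim 3 follows from \eqref{eq:eqS4S3}, which gives $S^4(\bx)=4 n_1 S^3(\bx)$ for every $\bx\in M$, so strict monotonicity transfers verbatim from $S^4$ to $S^3$. Claim 4 (finite termination) follows from the fact that $M$ is a finite set, hence a strictly increasing sequence of scores on $M$ cannot continue forever. The main conceptual obstacle compared to Theorem \ref{th:Alg1} is that $\Psi$ is not a global optimizer: one must resist assuming any global property of its output and instead use \emph{only} its monotonic-ascent guarantee in the chain of inequalities above. Everything else is a bookkeeping adaptation of the previous proof.
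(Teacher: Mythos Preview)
Your proposal is correct and follows essentially the same approach as the paper's proof: establish ascent via the monotonic-ascent property of $\Psi$ in steps 1)--2), invoke Proposition~\ref{prop:convexify} and part~1 of Lemma~\ref{lem:main} in the equality case of step~3), then deduce the remaining claims exactly as in Theorem~\ref{th:Alg1}. Your explicit remark that the matrices $\MF_\alpha(\cdot,\cdot,\by^k,\by^k)$ and $\MF_\alpha(\tbx^{k+1},\tbx^{k+1},\cdot,\cdot)$ are symmetric is a small additional justification the paper leaves implicit.
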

   \begin{proof} 
From the definition of steps $1)-2)$ in Algorithm \ref{algo:quadratic}, we get
	    \begin{eqnarray*}
		    \MF_{\alpha,k} \!\!\! &:= &\!\!\! \MF_\alpha(\bx^k,\bx^{k},\by^{k},\by^{k})\nonumber\\
		    &\leq &\!\!\!  \MF_\alpha(\tbx^{k+1},\tbx^{k+1},\by^{k},\by^{k}) \\
		    &\leq&\!\!\!  \MF_\alpha(\tbx^{k+1},\tbx^{k+1},\tby^{k+1},\tby^{k+1})=:\ \tilde{\MF}_{\alpha,k+1}. \nonumber
		\end{eqnarray*}
		Either $\tilde{\MF}_{\alpha,k+1}>\MF_{\alpha,k}$ in which case
		$$\bx^{k+1} = \tbx^{k+1},\qquad\quad\by^{k+1} = \tby^{k+1}$$
		and
		\[ \MF_\alpha(\bx^{k+1},\bx^{k+1},\by^{k+1},\by^{k+1}) > \MF_\alpha(\bx^{k},\bx^{k},\by^{k},\by^{k}),\]
		or $\tilde{\MF}_{\alpha,k+1}=\MF_{\alpha,k}$ and the algorithm enters
		step 3. Since, by Proposition \ref{prop:convexify}, $S_\alpha^4$ is convex for the chosen value of $\alpha$, applying Lemma \ref{lem:main} we get
		\begin{eqnarray*}
			\tilde{\MF}_{\alpha,k+1} \!\!\! &\leq &\!\!\! \max\limits_{\bv\in\{\tilde\bx^{k+1},\tilde\by^{k+1}\}}\MF_\alpha(\bv,\bv,\bv,\bv) \nonumber\\
				     &=&\!\!\! \MF_\alpha(\bu^{m+1},\bu^{m+1},\bu^{m+1},\bu^{m+1}) \\ &=&\!\!\!  S^4_\alpha(\bu^{m+1}). \nonumber
		\end{eqnarray*}
	  If the inequality is an equality, then the termination condition of the algorithm is met. 
	  Otherwise, we get
	  \begin{eqnarray*}
	      \tilde{\MF}_{\alpha,k+1}\!\!\! &<&\!\!\! \MF_\alpha(\bu^{m+1},\bu^{m+1},\bu^{m+1},\bu^{m+1})\\
	      \!\!\! &=&\!\!\!  S^4_\alpha(\bu^{m+1})=\MF_\alpha(\bx^{k+1},\bx^{k+1},\by^{k+1},\by^{k+1}).\nonumber
	  \end{eqnarray*}
	  This proves the first statement of the theorem.
	  
	  From
	  \begin{eqnarray*}
	      S^4_\alpha(\bu^{m+1})\!\!\! &=&\!\!\! \MF_\alpha(\bu^{m+1},\bu^{m+1},\bu^{m+1},\bu^{m+1})\!\!\! \nonumber\\ &=&\!\!\! \MF_\alpha(\bx^{k+1},\bx^{k+1},\by^{k+1},\by^{k+1})
	  \end{eqnarray*}
	  it follows that 
	  $S^4_\alpha(\bu^{m}),m=1,2,\ldots$ is a subsequence of $\MF_\alpha (\bx^{k},\bx^{k},\by^{k},\by^{k}),k=1,2,\ldots$ and thus it holds either $S^4_\alpha(\bu^{m})=S^4_\alpha(\bu^{m+1})$ in which case the algorithm terminates or $S^4_\alpha(\bu^{m})<S^4_\alpha(\bu^{m+1})$.
	  However, by Equation \eqref{eq:Modconstant}, the additional term which has been added to $S^4$ to get a convex function is constant on $M$, that is
	  \[ S^4_\alpha(\bx)=S^4(\bx) + \alpha n_1^2\qquad \forall \bx\in M.\]
	  It follows that either $S^4(\bu^{m})=S^4(\bu^{m+1})$ and the algorithm terminates or $S^4(\bu^{m})<S^4(\bu^{m+1})$ which proves the second part
	  of the theorem.
	  
	  By Equation \eqref{eq:eqS4S3}, we have $S^4(\bx)=4\,n_1\,S^{3}(\bx)$ for any $\bx \in M$. Thus the statements made for $S^4$ directly translate into corresponding statements for the original third order score $S^3$. 
	  This proves the penultimate statement.  
	   
	   Finally, the algorithm yields a strictly monotonically increasing sequence $S^4(\bu^m),m=1,2,\ldots$ or it terminates. 
	   Since there is only a finite number of possible assignment matrices, the sequence has to terminate after a finite number of steps.
   \end{proof}

In analogy to Theorem \ref{th:Alg1} all statements of Theorem \ref{th:Alg2} remain valid for $\alpha=0$ if $S^4$ is convex.
Thus we use the same initialization strategy with $\alpha=0$ as described above.
There are several methods available which we could use for the sub-routine $\Psi$
in the algorithm. We decided to use the recent max pooling algorithm \cite{ChoEtAl2014} and the IPFP algorithm \cite{LeoHebSuk2009}
and use the Hungarian algorithm to turn their output into a valid
assignment matrix in $M$. 
It turns out that the combination of our tensor block coordinate ascent scheme using
their algorithms as a sub-routine yields very good performance on all datasets.
\section{Experiments}\label{sec:experiments}
\setlength{\tabcolsep}{0.3pt}
\setlength{\belowcaptionskip}{-5pt}

We evaluate our hypergraph matching algorithms on standard synthetic benchmarks and realistic image datasets 
by comparing them with state-of-the-art third order and second order approaches.
In particular, we use the following third order approaches:
Tensor Matching (TM) \cite{DucEtAl2011}, Hypergraph Matching via Reweighted Random Walks (RRWHM) \cite{LeeChoLee2011}, 
Hypergraph Matching (HGM) \cite{ZasSha2008}. 
For second order approaches, 
Max Pooling Matching (MPM) ~\cite{ChoEtAl2014}, Reweighted Random Walks for Graph Matching (RRWM) \cite{ChoLeeLee2010}, 
Integer Projected Fixed Point (IPFP) \cite{LeoHebSuk2009}, and Spectral Matching (SM) \cite{LeoHeb2005} 
are used. We denote our tensor block coordinate ascent methods as BCAGM from Algorithm \ref{algo:linear} which 
uses the Hungarian algorithm as subroutine, and BCAGM+MP and BCAGM+IPFP from Algorithm \ref{algo:quadratic} 
which uses MPM \cite{ChoEtAl2014} and IPFP \cite{LeoHebSuk2009} respectively as subroutine.
MPM has recently outperformed other second order methods in the presence of a large number of outliers 
without deformation. Thus, it serves as a very competitive candidate with third order approaches in this setting.
For all the algorithms, we use the authors' original implementation.

In all experiments below, we use the all-ones vector as starting point for all the methods. Moreover,
we apply the Hungarian algorithm to the output of every algorithm to turn it into a proper matching.

\textbf{Generation of affinity tensor/matrix:}
To build the affinity tensor for the third order algorithms, we follow the approach of Duchenne \etal ~\cite{DucEtAl2011}: for each triple of points in the same image we compute a feature vector $f$ based on
the angles of the triangle formed by those three points. 
Let $f_{i_1,i_2,i_3}$ and $f_{j_1,j_2,j_3}$ denote two feature vectors for two triples $(i_1,i_2,i_3)$ and $(j_1,j_2,j_3)$
in two corresponding images.
Then we compute the third order affinity tensor as:
\begin{equation}
    \FC^3_{(i_1,j_1),(i_2,j_2),(i_3,j_3)} = \exp(-\gamma \norm{f_{i_1,i_2,i_3} - f_{j_1,j_2,j_3}}_2^2)
\end{equation} where $\gamma$ is the inverse of the mean of all squared distances.
As shown by Duchenne \etal ~\cite{DucEtAl2011} these affinities increase the geometric invariance compared to second order models 
making it more robust to transformations, like translations, rotations and scalings.
This partially explains why third order approaches are preferred to second order ones in this regard.
As the computation of the full third order affinity tensor is infeasible in practice, we adopt the sampling strategy 
proposed by \cite{DucEtAl2011,LeeChoLee2011}. 

The affinity matrix for all the second order methods are built by following \cite{ChoEtAl2014, ChoLeeLee2010},
where the Euclidean distance of two point pairs is used to compute the pairwise similarity:
\begin{equation}
    \FC^2_{(i_1,j_1),(i_2,j_2)} = \exp(- {| d_{i_1 i_2}^P - d_{j_1 j_2}^Q |}^2 / \sigma_s^2)
\end{equation} where $d_{i_1 i_2}^P$ is the Euclidean distance between two points $i_1,\,i_2\in P$, $d_{j_1 j_2}^Q$  is the distance between the points $j_1,j_2\in Q$ and $\sigma_s$ is a normalization parameter specified below.

\subsection{Synthetic Dataset}\label{subsec:exp_synthetic}
This section presents a comparative evaluation on matching point sets $P$ and $Q$ in $\R^2$. 
We follow the approach in \cite{ChoEtAl2014} for the creation of the point sets.
That is, for $P$ we sample $n_{in}$ inlier points from a Gaussian distribution $\Nc(0,1)$.
The points in $Q$ are generated by adding Gaussian deformation noise $\Nc(0,\sigma^2)$ to the point set $P$.
Furthermore, $n_{out}$ outlier points are added to $Q$ sampled from $\Nc(0,1)$. 
We test all matching algorithms under different changes in the data: outliers, deformation and scaling (i.e. we multiply the coordinates of the points in $Q$ by a constant factor).

In the outlier setting, we perform two test cases as follows.
In the first test case, we vary the number of outliers from $0$ to very challenging $200$, 
which is $20$-times more than the number of inliers $n_{in}$, while $\sigma$ is set to $0.01$ and no scaling is used. 
In the second case, we set $\sigma$ to $0.03$ and scale all the points in $Q$ by a factor of $1.5$, 
making the problem much more difficult.
This test shows that third order methods are very robust against scaling compared to second order methods.
In all the plots in this section, each quantitative result was obtained by averaging over $100$ trials. 
The accuracy is computed as the ratio between the number of correct matches and the number of inlier points.
We use $\sigma_s = 0.5$ in the affinity construction for the second order methods as suggested by \cite{ChoEtAl2014}.
The results in Figure \ref{fig:exp_out_def} show that our algorithms are robust w.r.t. outliers 
even if deformation and scaling are involved. 
When the deformation is negligible as shown in Figure \ref{fig:out_def_noscal}, MPM and BCAGM+MP perform best 
followed by BCAGM, BCAM+IPFP and RRWHM. 
In a more realistic setting as shown in Figure \ref{fig:out_def_scal}, where outlier, deformation and scaling are present in the data, 
our algorithms outperform all other methods. 
One can state that BCAGM+MP transfers the robustness of MPM to a third order method
and thus is able to deal with scaling which is difficult for second order methods.

In the deformation setting, the deformation noise $\sigma$ is varied from $0$ to challenging $0.4$ while 
the number of inliers $n_{in}$ is fixed to $20$. There are no outlier and scale changes in this setting.
This type of experiment has been used in previous works \cite{ChoEtAl2014, LeeChoLee2011}. 
where $\sigma$ has been varied from $0$ to $0.2$. 
The result in Figure \ref{fig:def} shows that our algorithms are quite competitive with
RRWHM which has been shown to be robust to deformation \cite{LeeChoLee2011}. 
It is interesting to note that while MPM is very robust against outliers (without scaling) 
it is outperformed if the deformation is significantly increased (without having outliers). 
However, note that even though our BCAGM+MP uses MPM as a subroutine, it is not affected by this slight weakness.
Considering all experiments our BCAGM+MP and to a slightly weaker extent BCAGM and BCAGM+IPFP outperform all other methods 
in terms of robustness to outliers, deformation and scaling. 
\textbf{Runtime:} 
Figure \ref{fig:exp_out_def} shows that the running time of our algorithms is competitive in comparison to other higher order
and even second order methods. In particular, BCAGM is always among the methods with lowest running time (more results can be found in the appendix).
\begin{figure*}
    \vspace{-10pt}
    \subfloat{\includegraphics[width=0.31\linewidth]{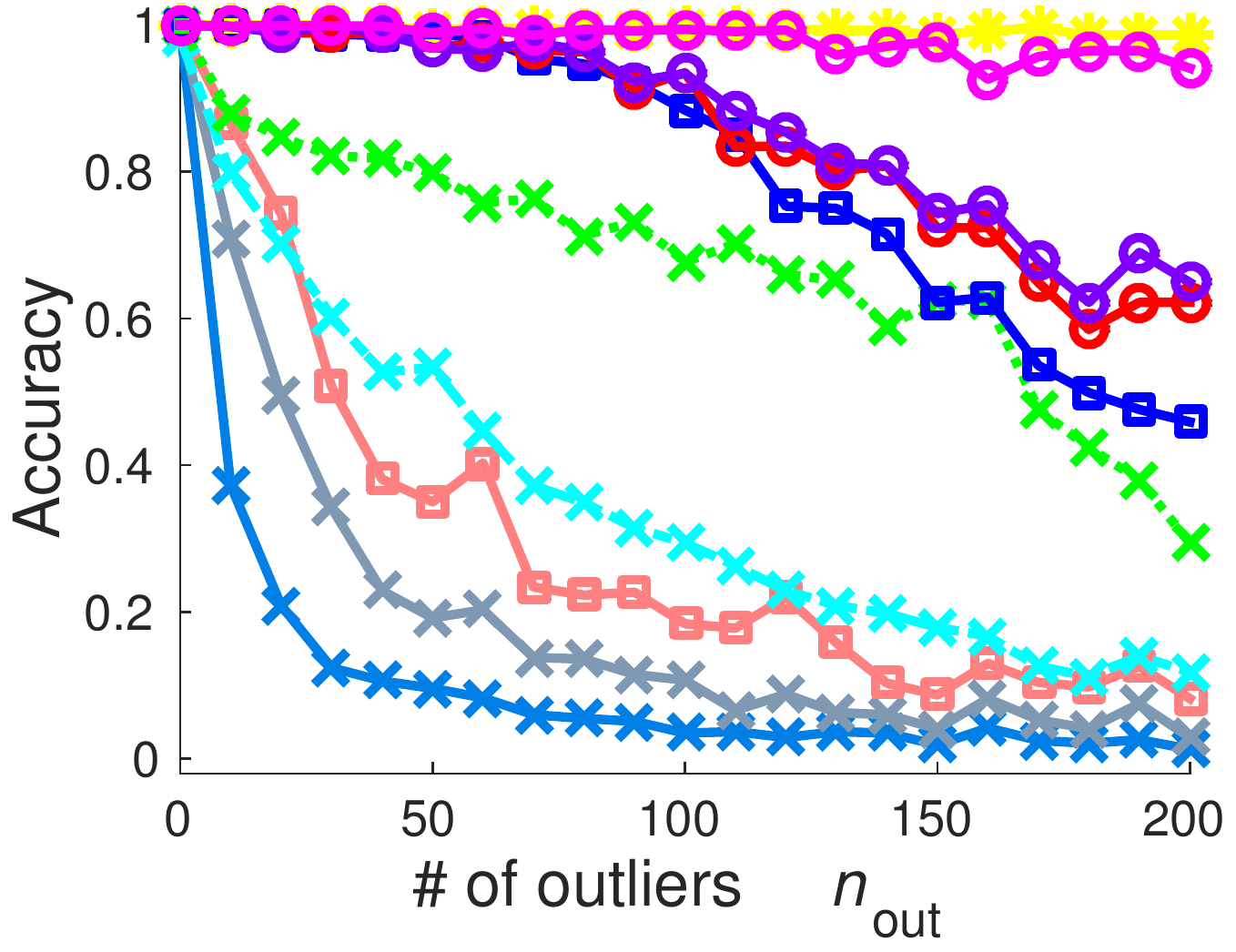} }
    \setcounter{subfigure}{0}
    \subfloat[$n_{in}=10,\sigma=0.01,scale=1$]{\includegraphics[width=0.31\linewidth]{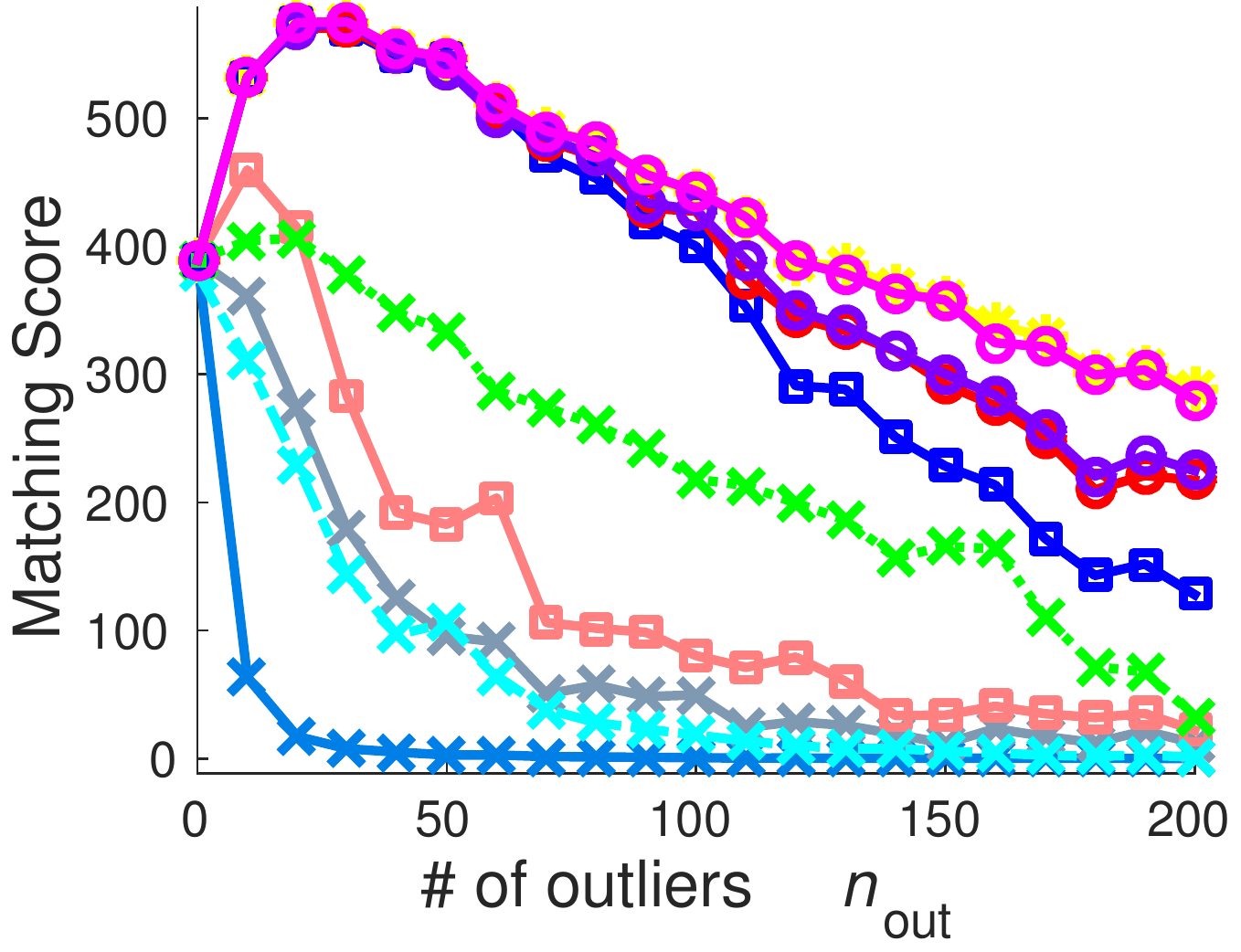} \label{fig:out_def_noscal} }
    \vspace{-5pt}
    \subfloat{\includegraphics[width=0.31\linewidth]{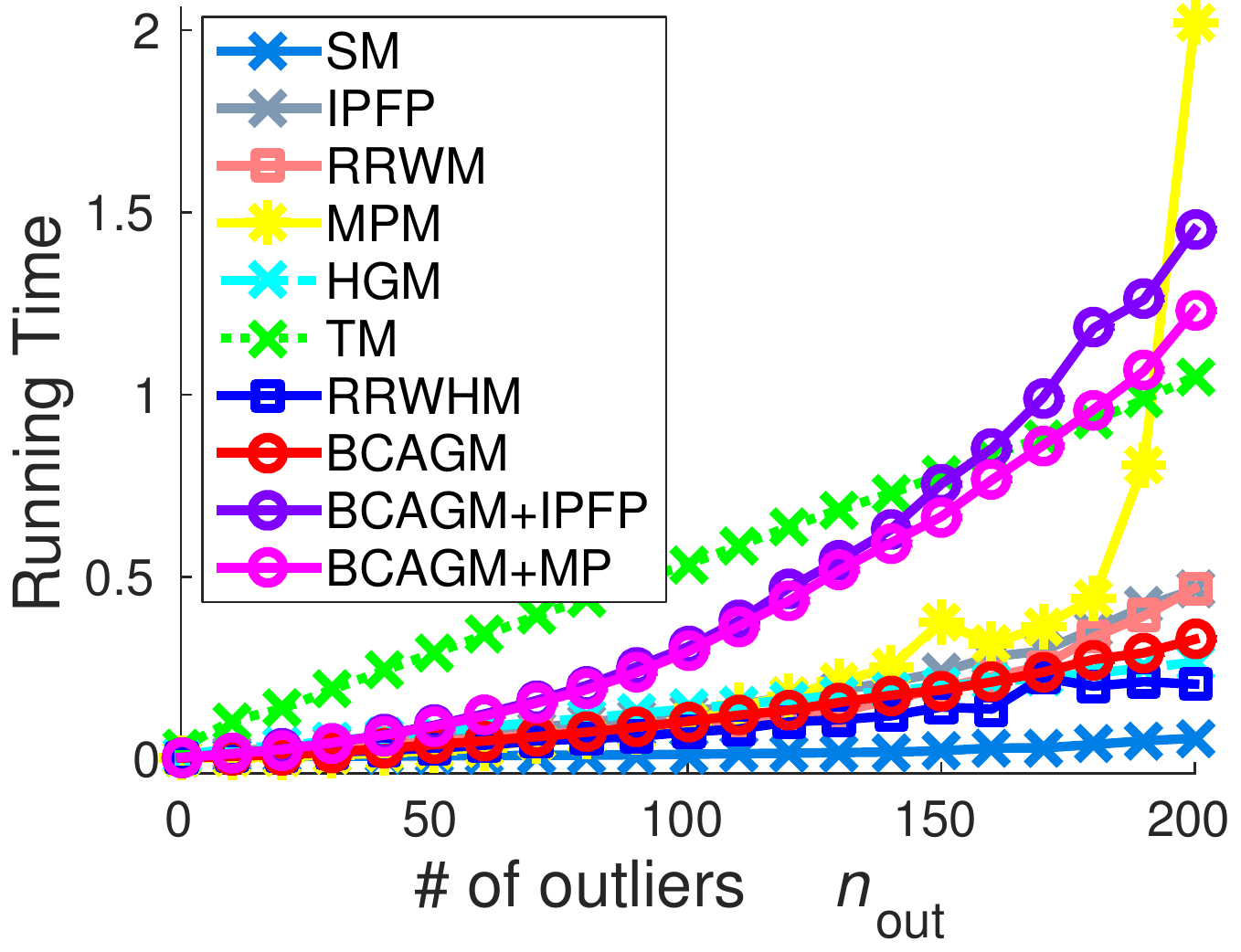} } \\
    \setcounter{subfigure}{0}
    \subfloat{\includegraphics[width=0.31\linewidth]{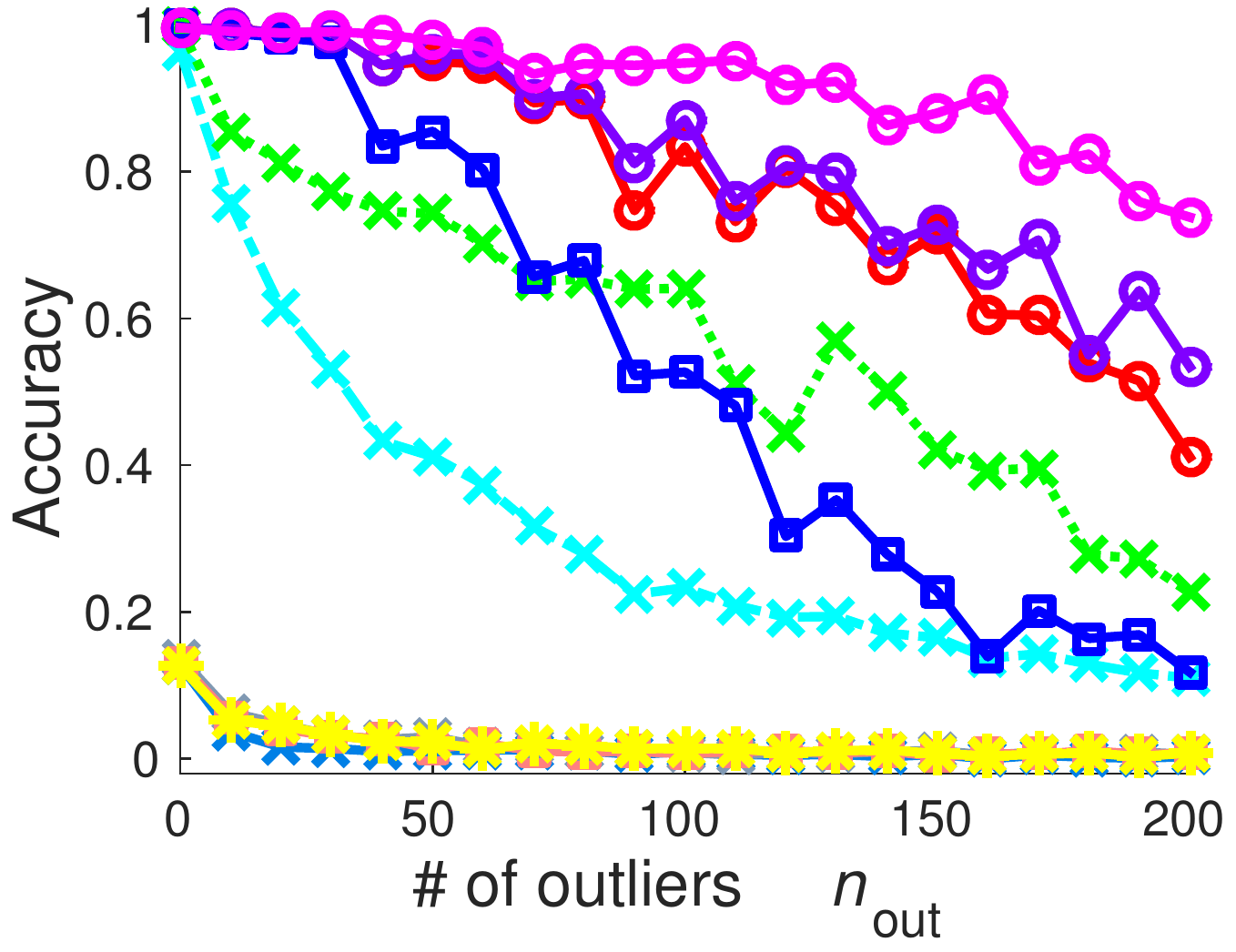} } 
    \subfloat[$n_{in}=10,\sigma=0.03,scale=1.5$]{\includegraphics[width=0.31\linewidth]{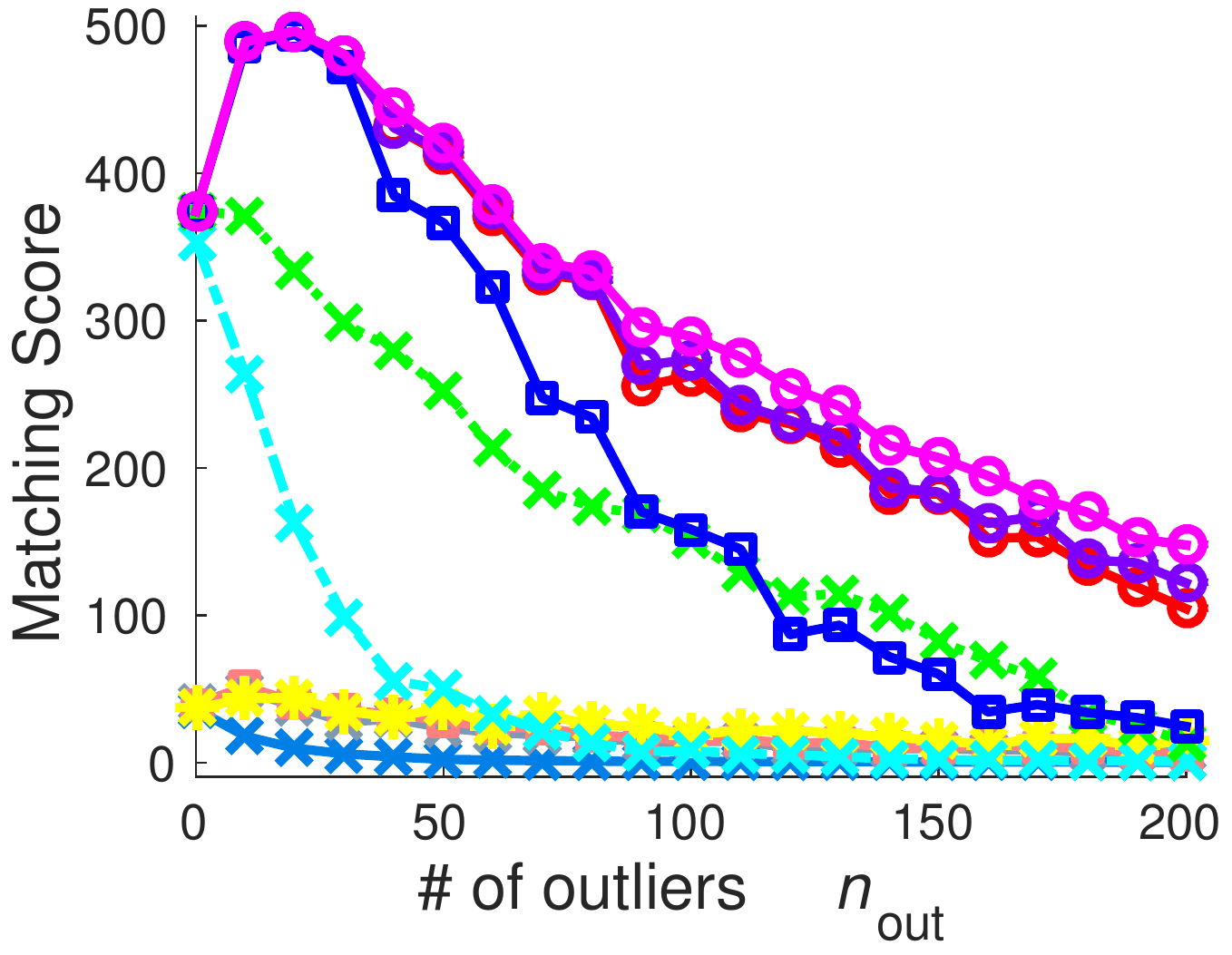} \label{fig:out_def_scal} } 
    \vspace{-5pt}
    \subfloat{\includegraphics[width=0.31\linewidth]{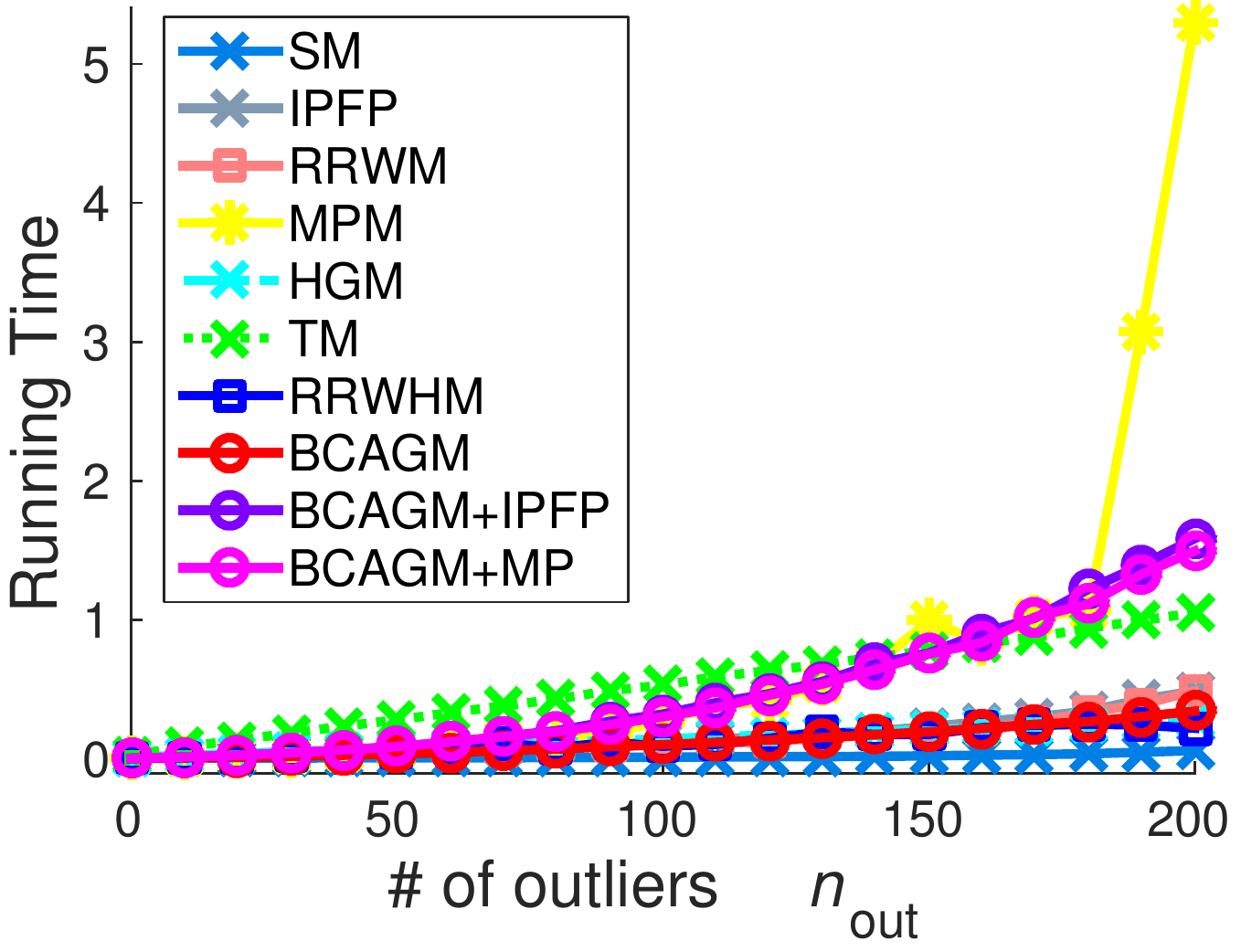} } 
\caption{
Matching point sets in $\R^2$:
each row shows the accuracy, matching score and running time of all algorithms. 
The number of outliers has been varied from $0$ to $200$. 
(a) Increasing number of outliers with slight deformation and no scaling.
(b) Increasing number of outliers with larger deformation and scaling.
(Best viewed in color.)
}
\label{fig:exp_out_def}
\end{figure*}
\begin{figure}
    \vspace{-10pt}
    \subfloat{ \includegraphics[width=0.48\linewidth]{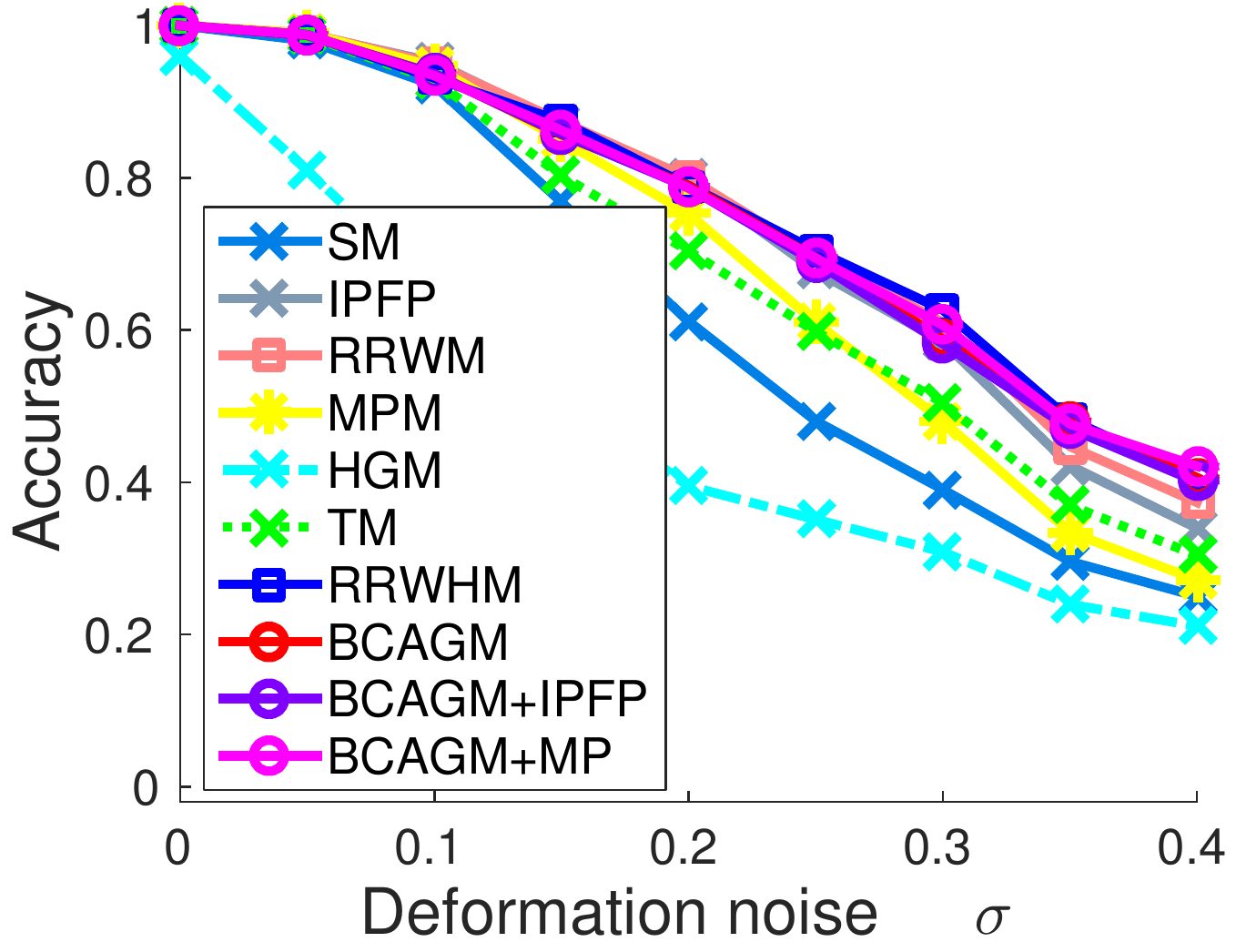} } 
    \subfloat{ \includegraphics[width=0.48\linewidth]{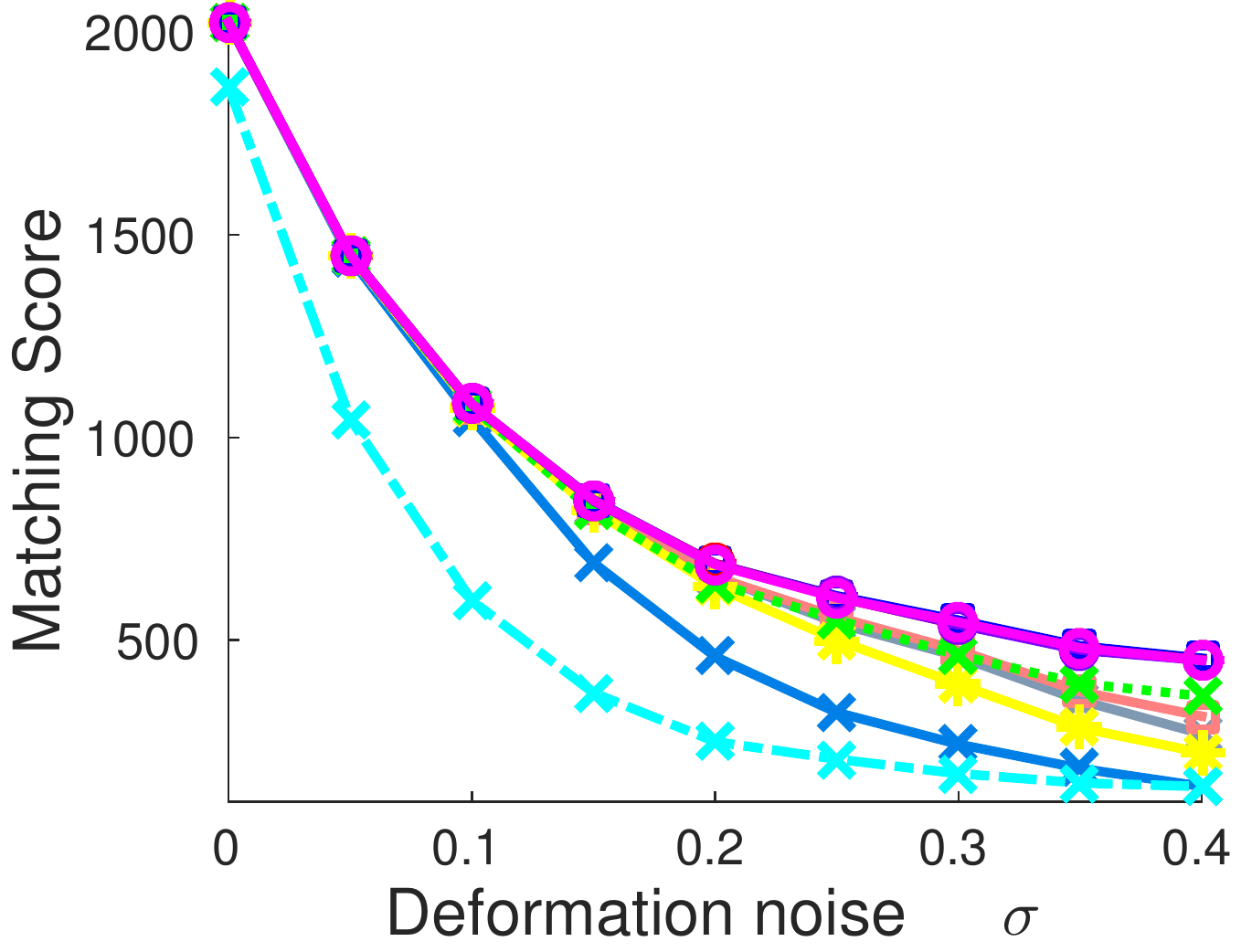} } 
\caption{
    Robustness of matching algorithms w.r.t. deformation. ($n_{in}=20,n_{out}=0,scale=1.$)
}
\vspace{-5pt}
\label{fig:def}
\end{figure}
\begin{figure*}
    \vspace{-10pt}
    \subfloat[An input pair: $10$ pts vs $30$ pts, $baseline = 50$]{ \includegraphics[width=0.32\linewidth]{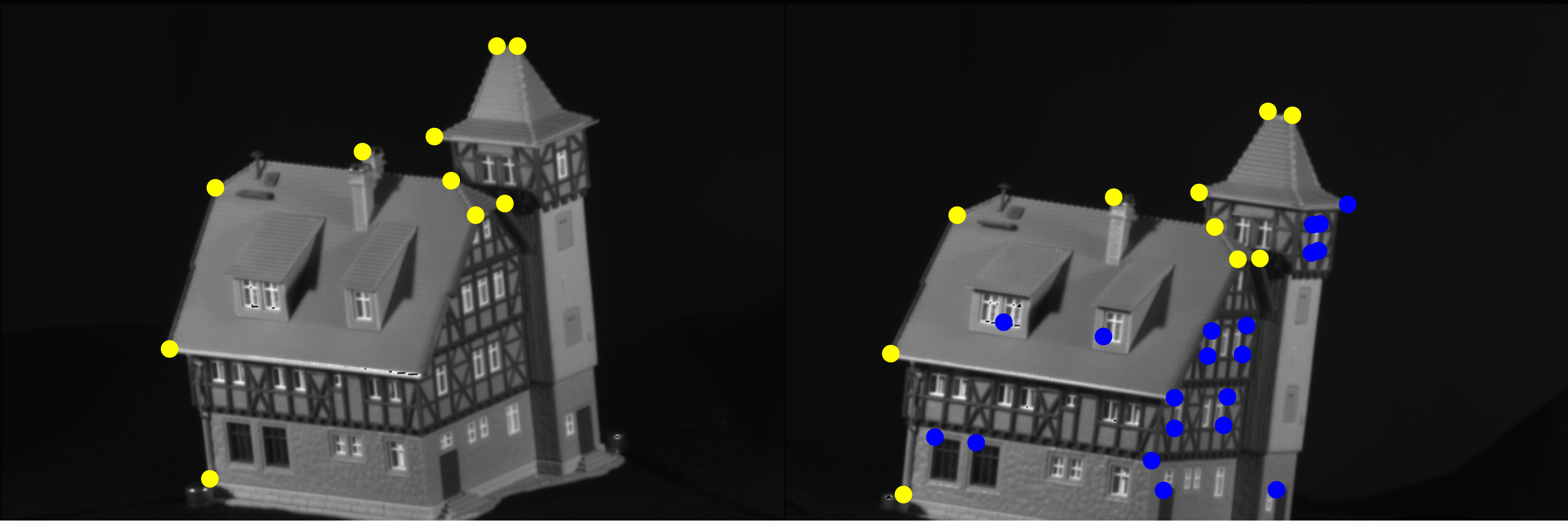} }
    \subfloat[MPM 4/10 (15.6)]{ \includegraphics[width=0.32\linewidth]{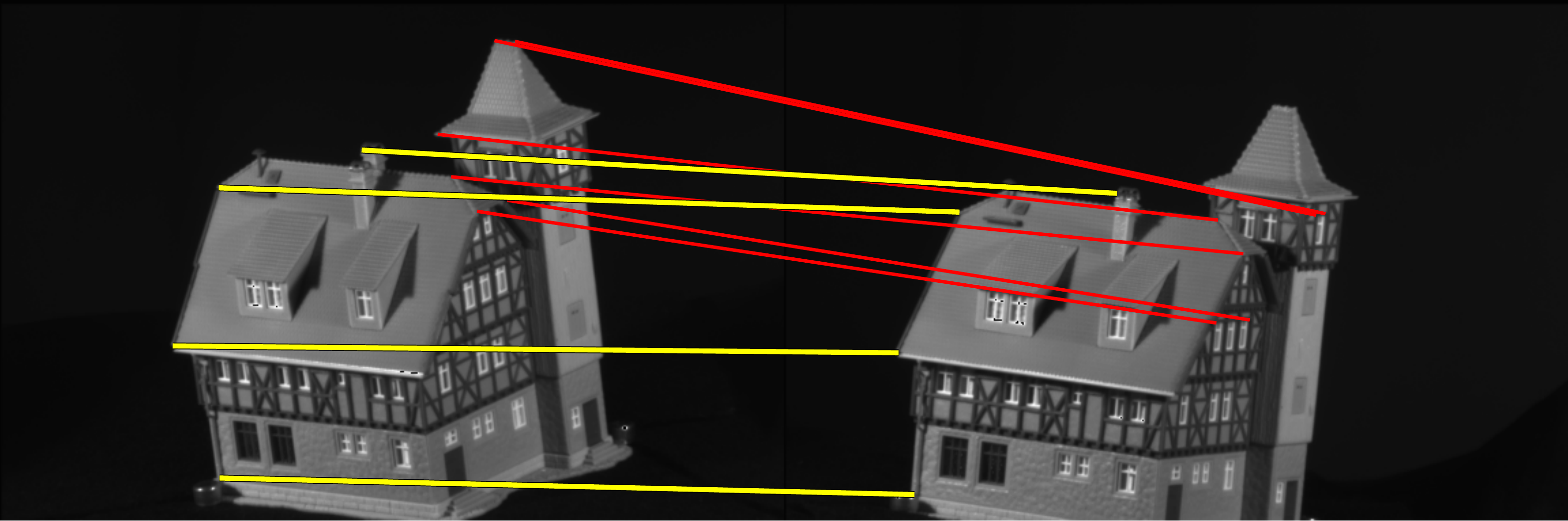} }
    \vspace{-5pt}
    \subfloat[TM 5/10 (18.4)]{ \includegraphics[width=0.32\linewidth]{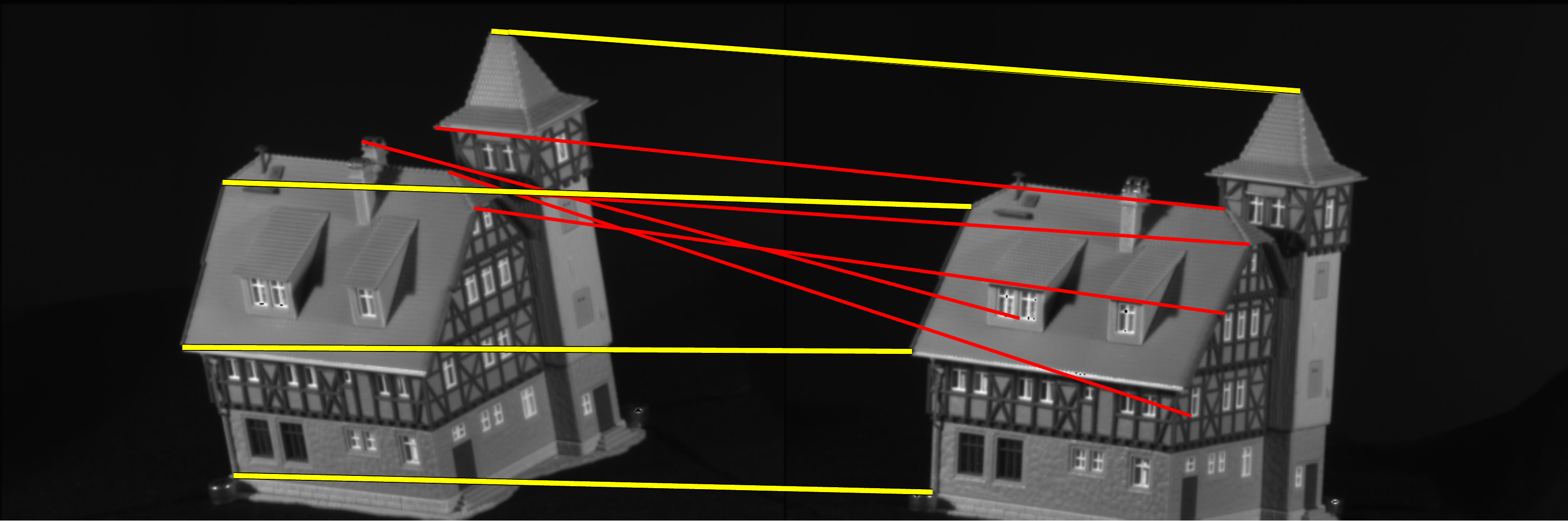} } \\
    \subfloat[RRWHM 7/10 (26.4)]{ \includegraphics[width=0.32\linewidth]{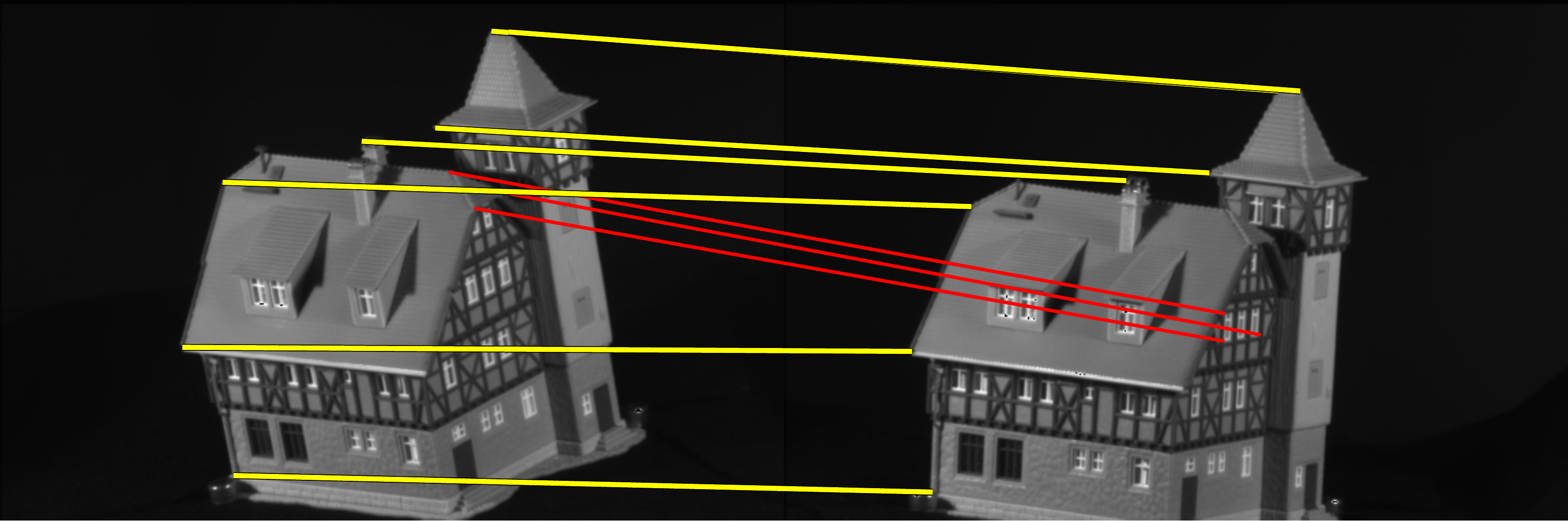} }
    \subfloat[BCAGM 10/10 (43.1)]{ \includegraphics[width=0.32\linewidth]{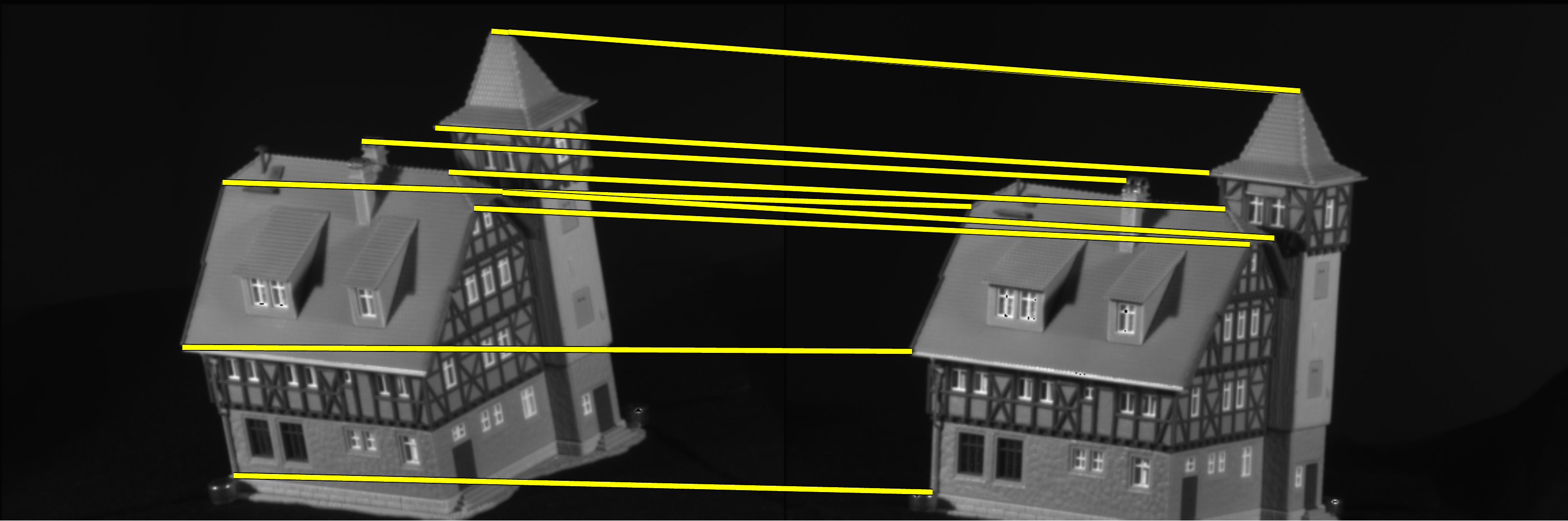} }
    \vspace{-5pt}
    \subfloat[BCAGM+MP 10/10 (43.1)]{ \includegraphics[width=0.32\linewidth]{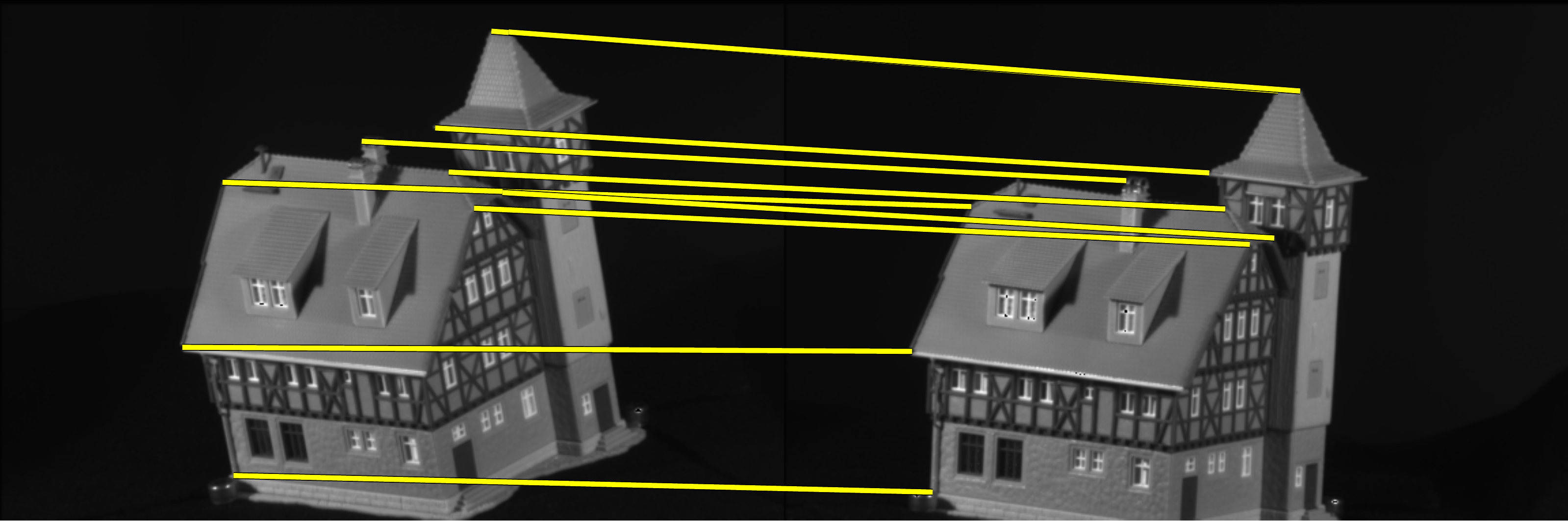} } \\
    \subfloat[10 pts vs 30 pts]{ \includegraphics[width=0.31\linewidth]{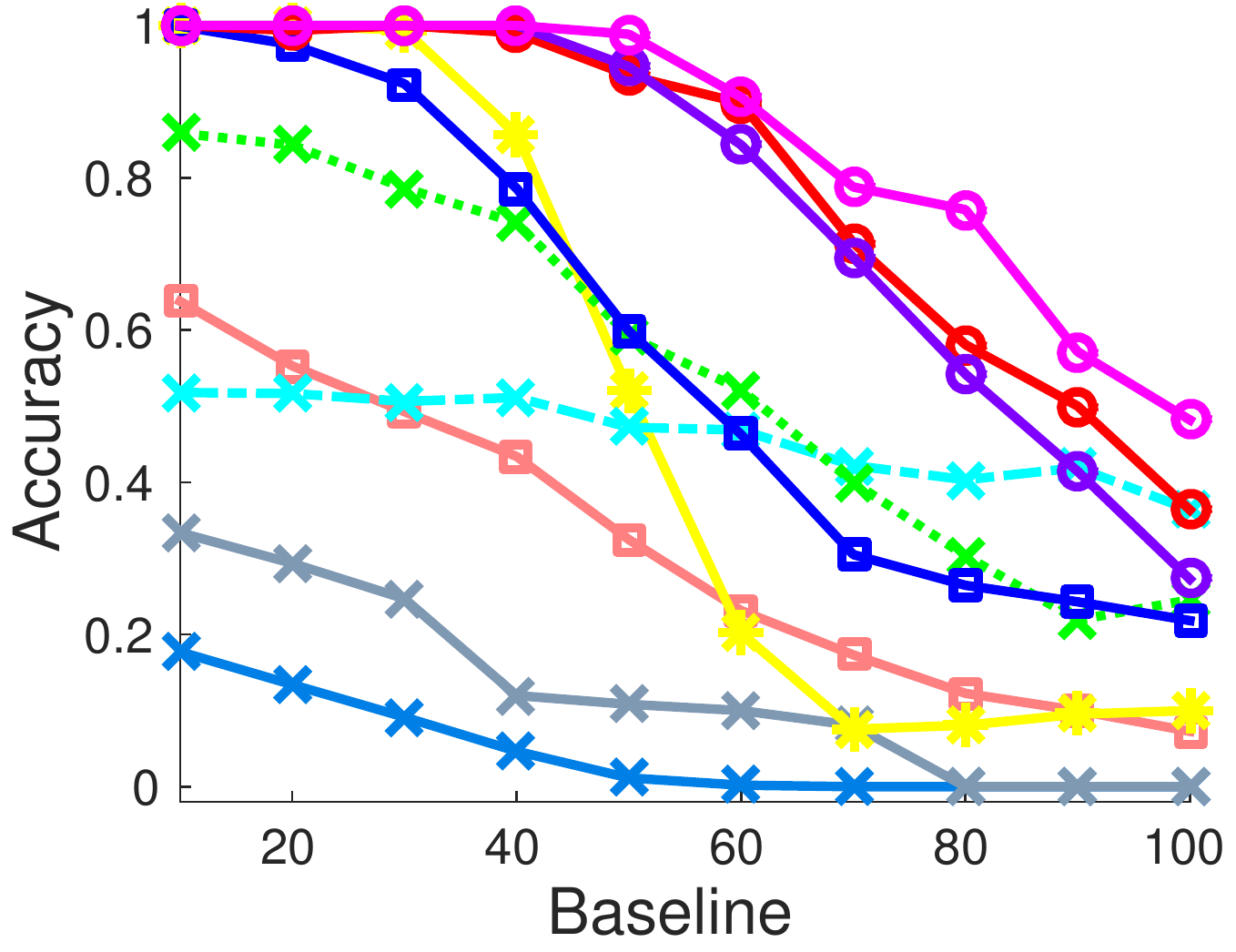} }  
    \subfloat[20 pts vs 30 pts]{ \includegraphics[width=0.31\linewidth]{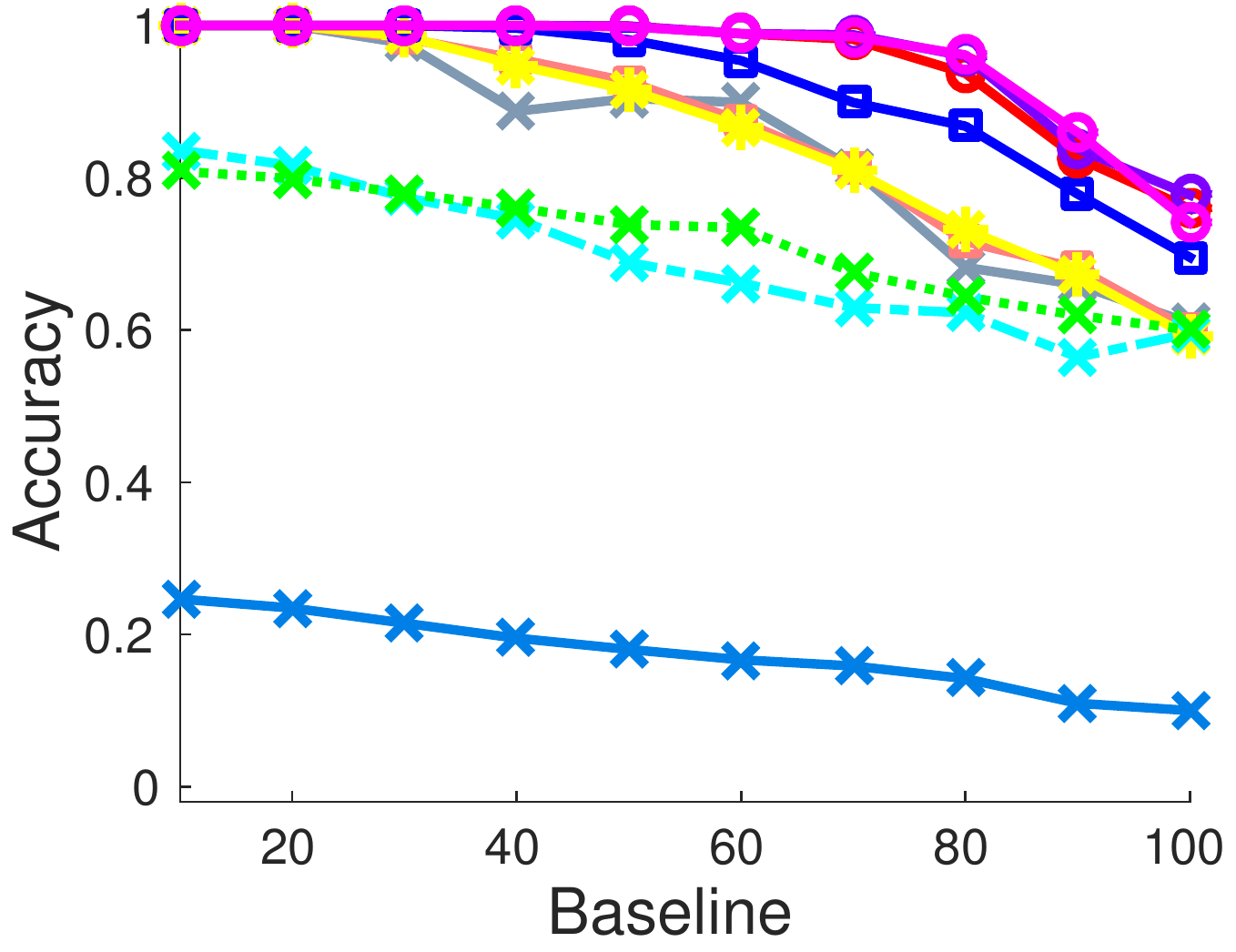} }  
    \vspace{-5pt}
    \subfloat[30 pts vs 30 pts]{ \includegraphics[width=0.31\linewidth]{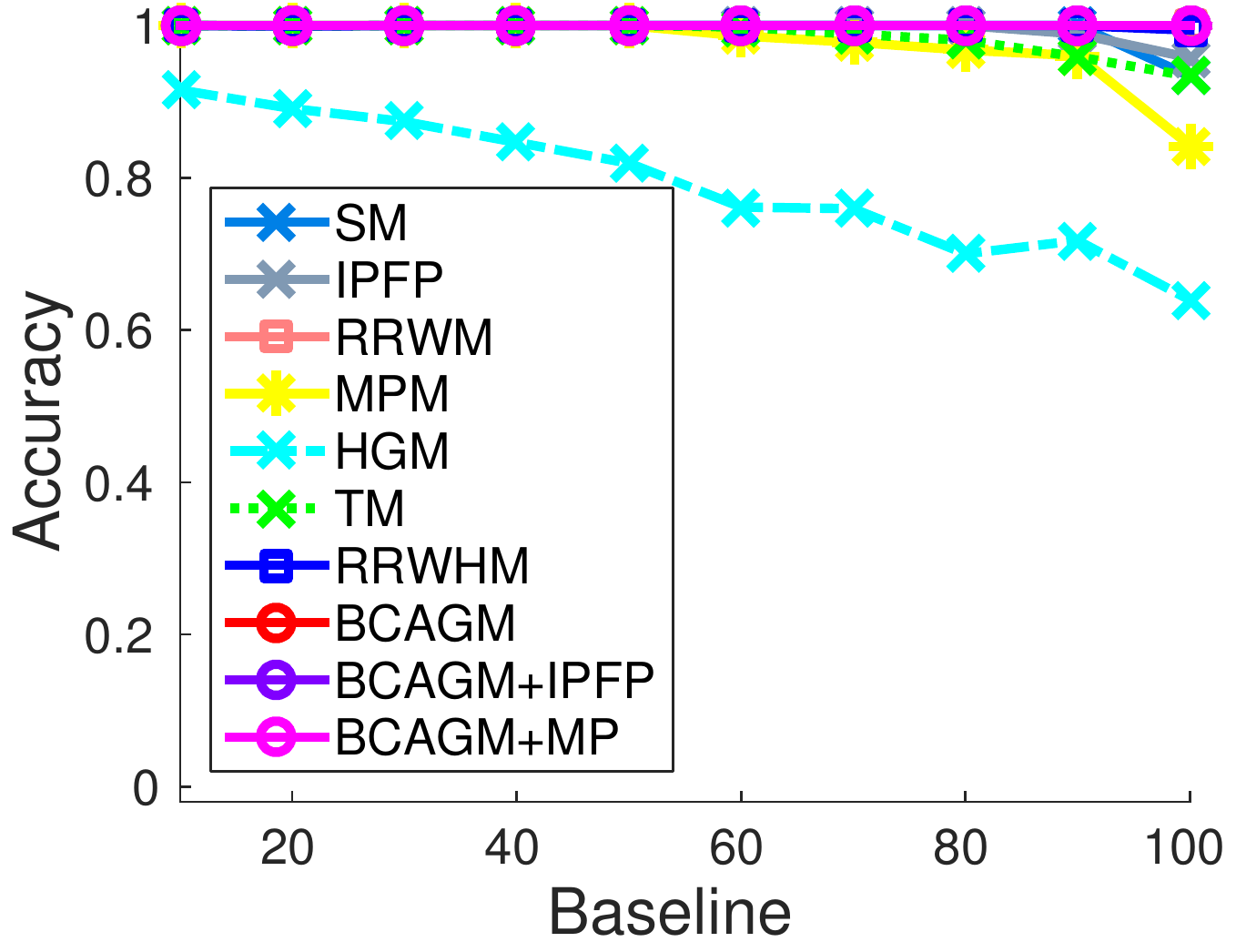} } 
\caption{
CMU house dataset:
the first two rows show the matching output of several algorithms on
an example pair of images where the baseline is $50$. 
The yellow/red lines indicate correct/incorrect matches.
The third row shows the average performance of algorithms with different number of points in the first image.
(Best viewed in color.)
}
\label{fig:exp_house}
\end{figure*}

\begin{figure}
    \vspace{-10pt}
    \subfloat[$34$ pts vs $44$ pts, $10$ outliers]{ \includegraphics[width=0.4749\linewidth]{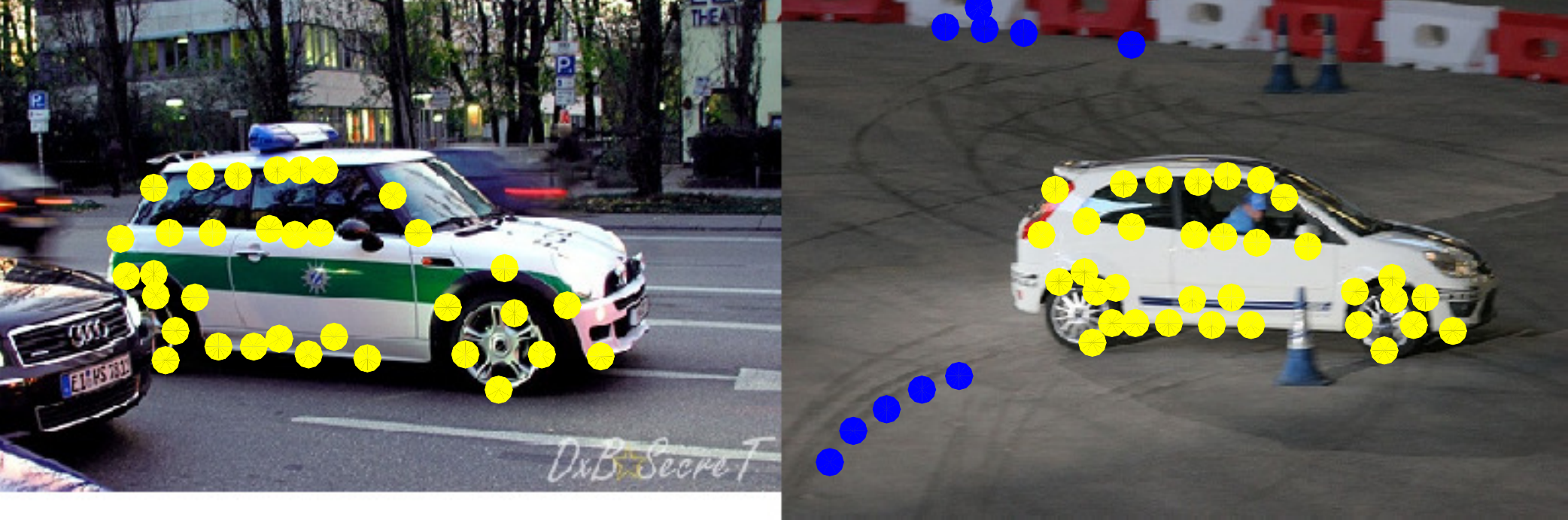} }
    \vspace{-5.2pt}
    \subfloat[TM 10/34 (1715.0)]{ \includegraphics[width=0.4749\linewidth]{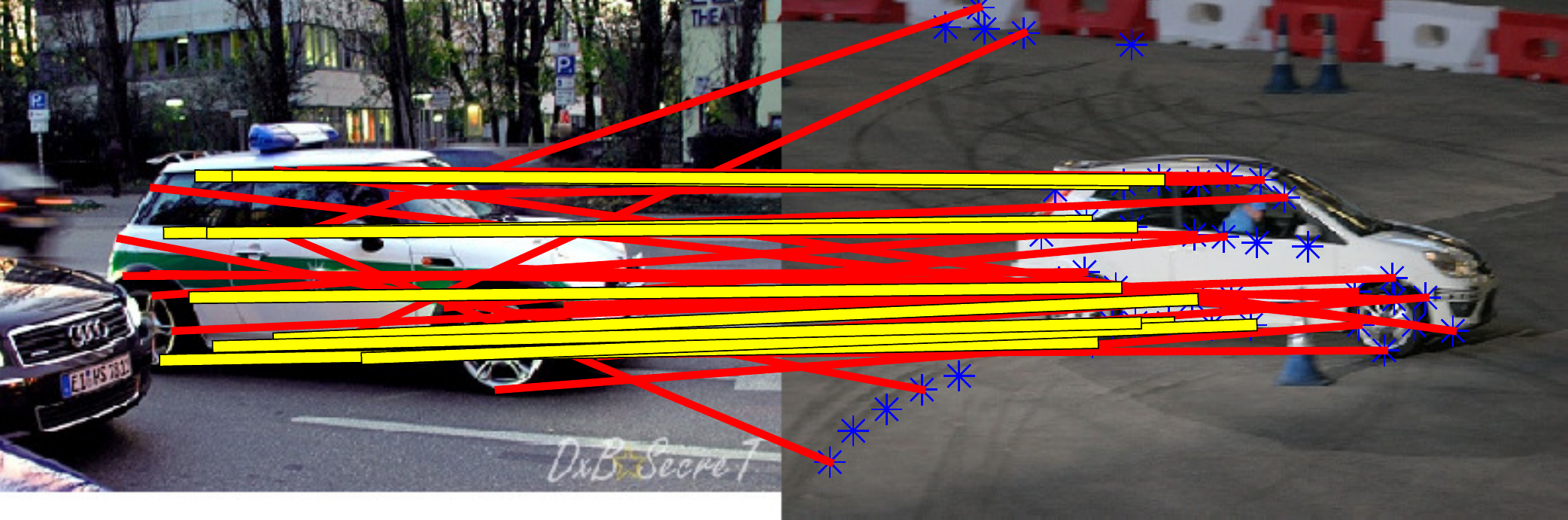} } \\
    \subfloat[HGM 9/34 (614.7)]{ \includegraphics[width=0.4749\linewidth]{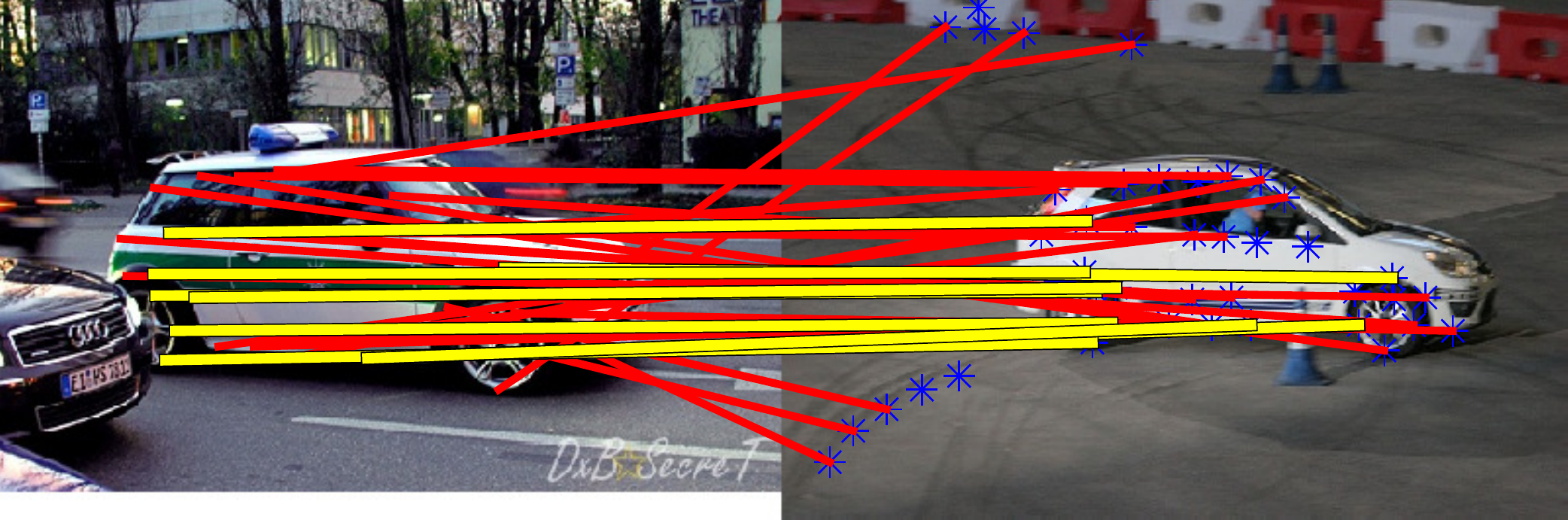} }  
    \vspace{-5.2pt}
    \subfloat[RRWHM 28/34 (5230.5)]{ \includegraphics[width=0.4749\linewidth]{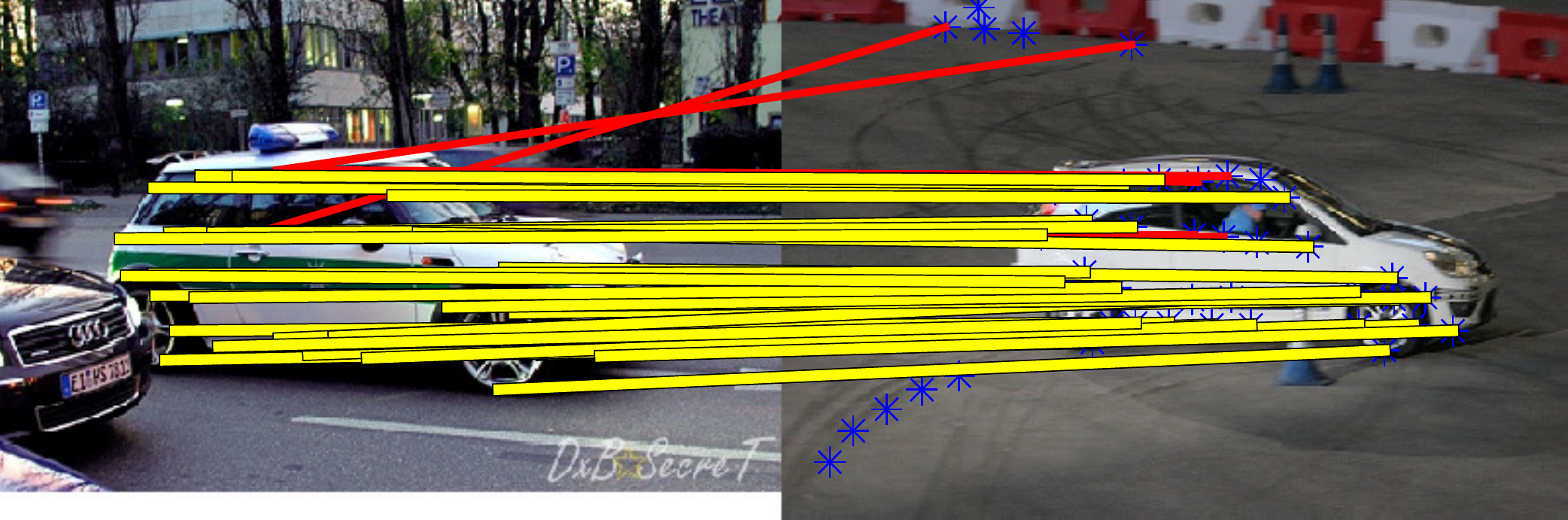} } \\
    \subfloat[BCAGM 28/34 (5298.8)]{ \includegraphics[width=0.4749\linewidth]{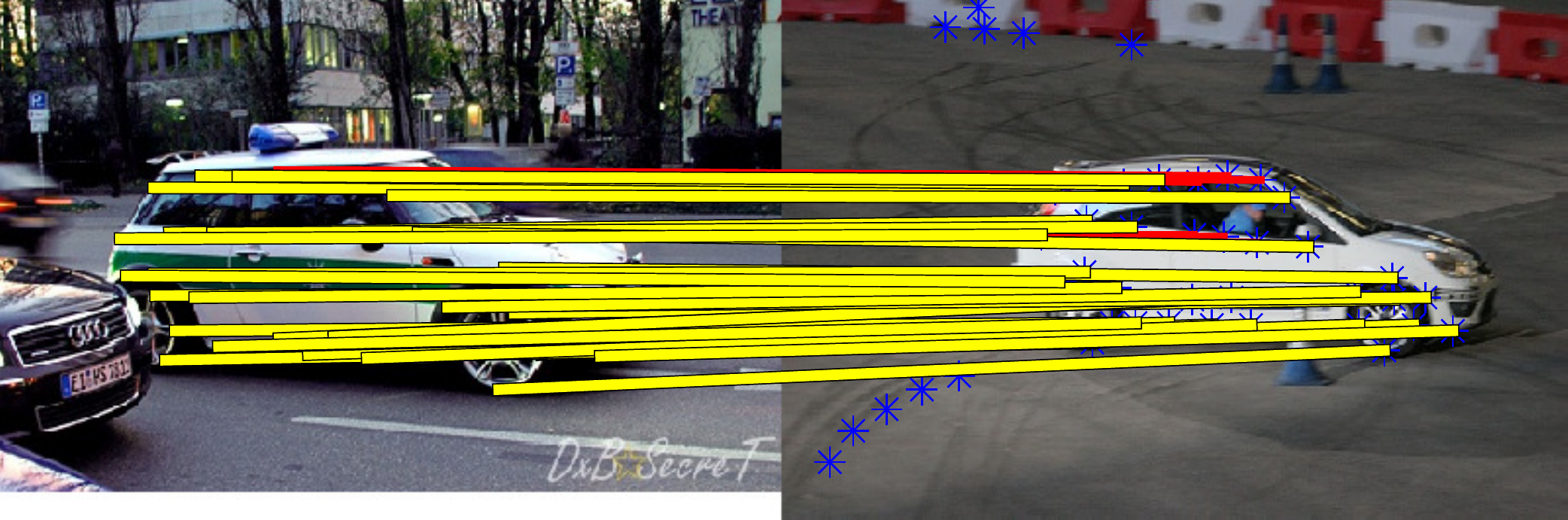} }
    \vspace{-5.2pt}
    \subfloat[BCAGM+MP 34/34 (5377.3)]{ \includegraphics[width=0.4749\linewidth]{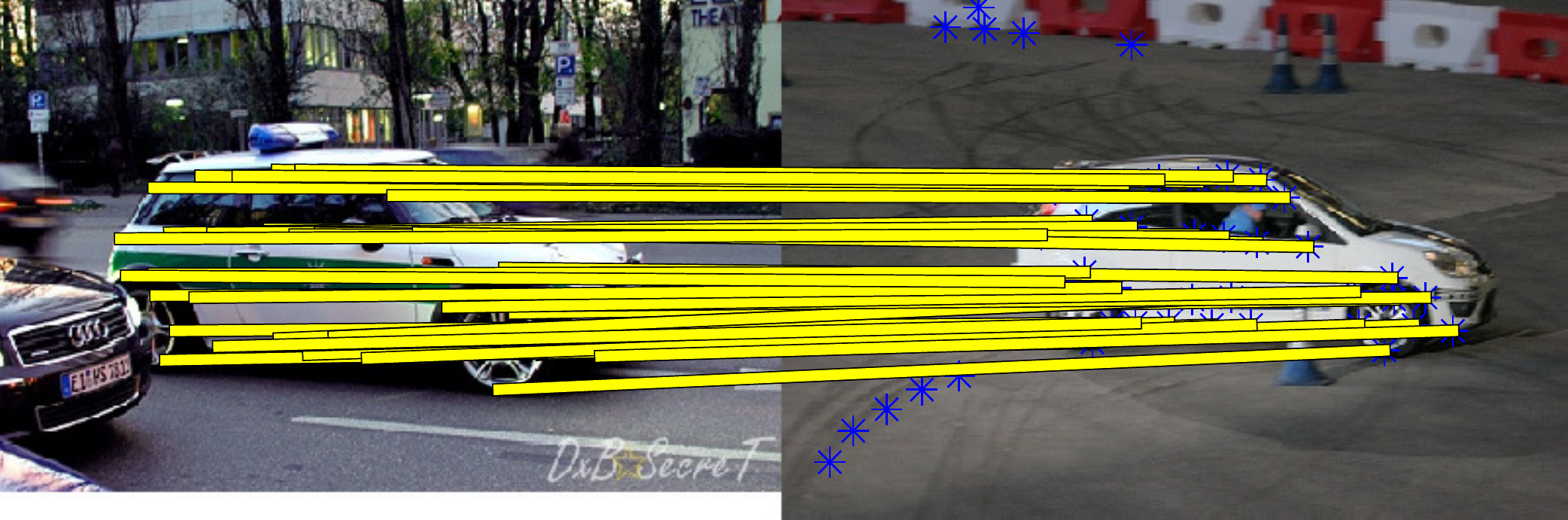} }
\caption{
    Car dataset: 
    the number of correct matches and the objective score are reported.
    (Best viewed in color.)
}
\vspace{-0pt}
\label{fig:exp_car}
\end{figure}
\begin{figure}
    \vspace{-11pt}
    \subfloat[$23$ pts vs $28$ pts, $5$ outliers]{ \includegraphics[width=0.4749\linewidth]{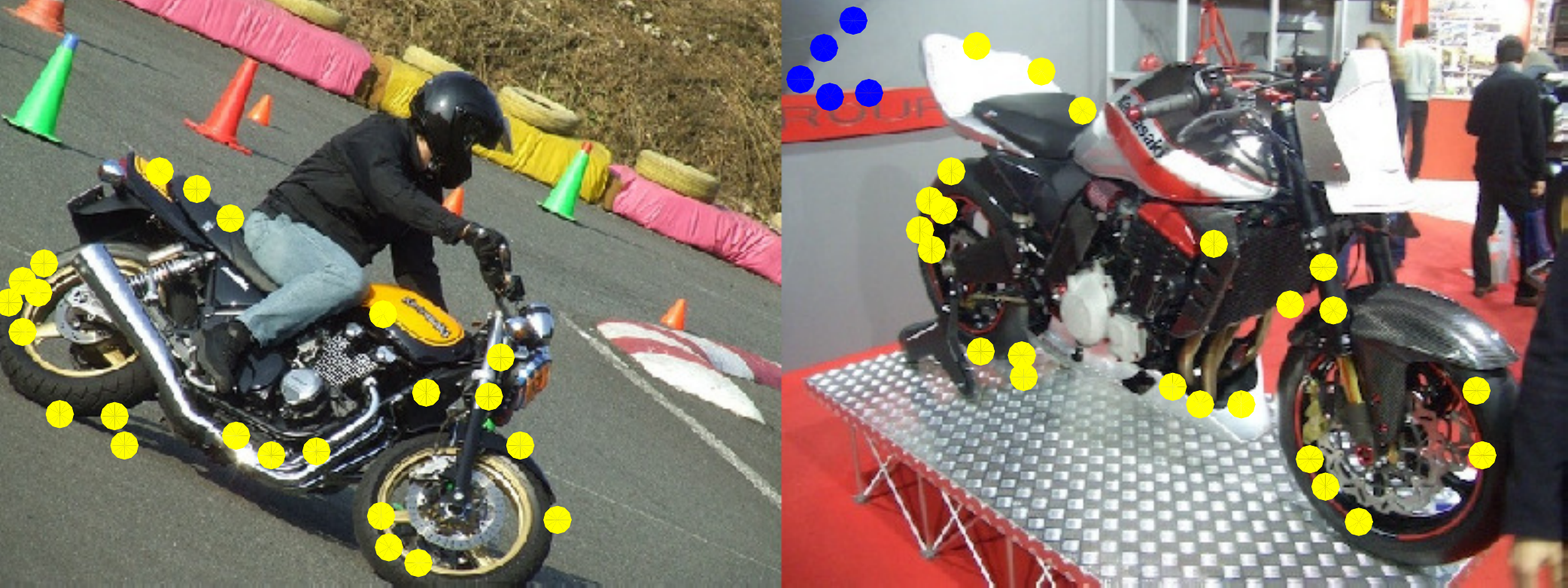} }
    \vspace{-5.2pt}
    \subfloat[TM 9/23 (2502.6)]{ \includegraphics[width=0.4749\linewidth]{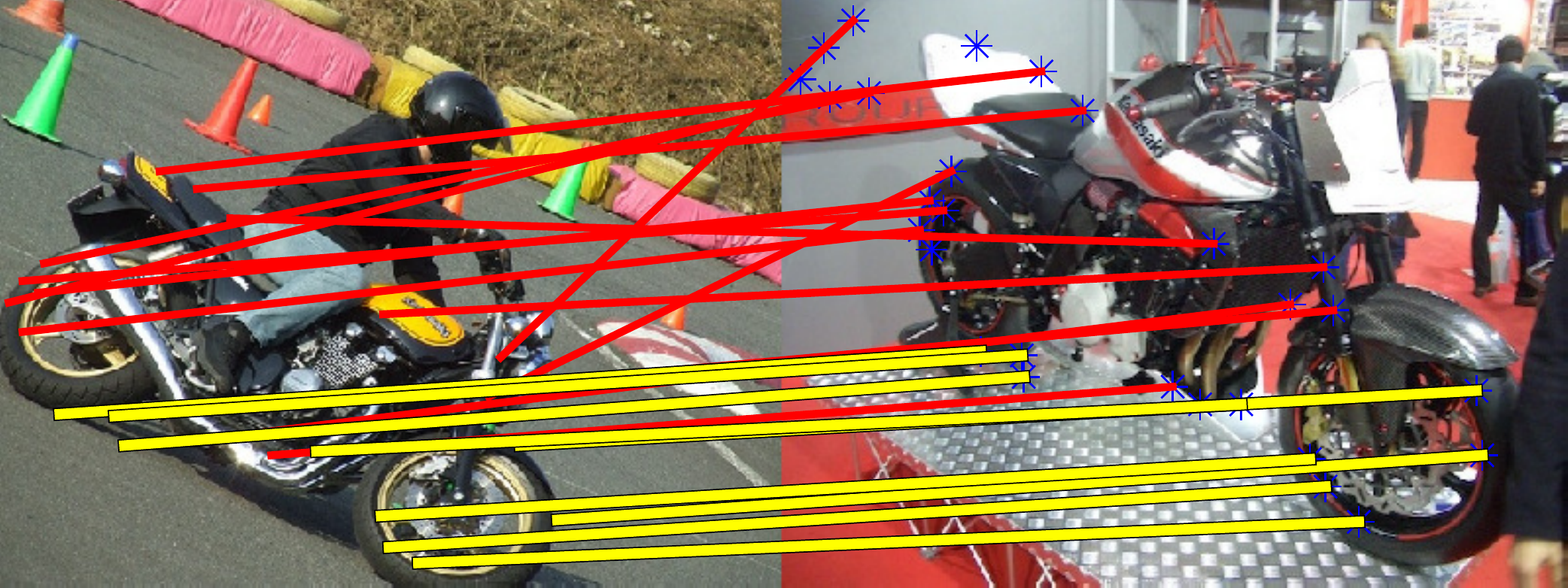} } \\
    \subfloat[HGM 7/23 (1435.8)]{ \includegraphics[width=0.4749\linewidth]{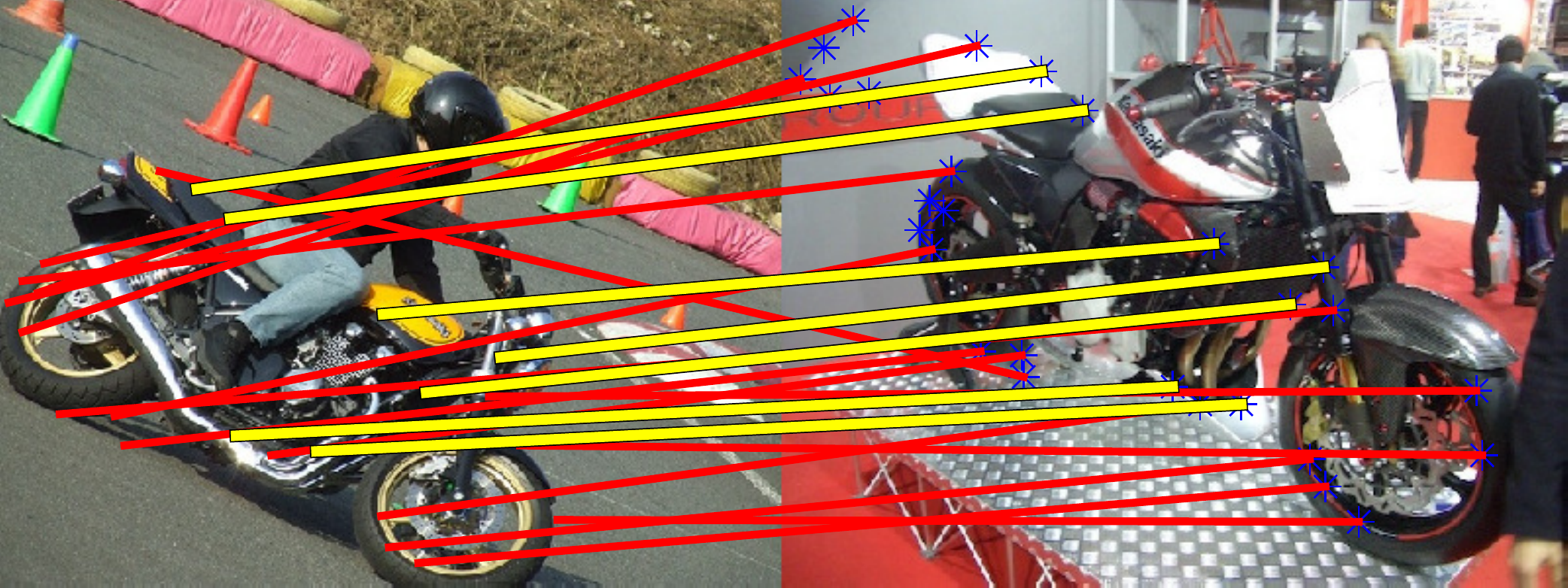} }  
    \vspace{-5.2pt}
    \subfloat[RRWHM 16/23 (4059.4)]{ \includegraphics[width=0.4749\linewidth]{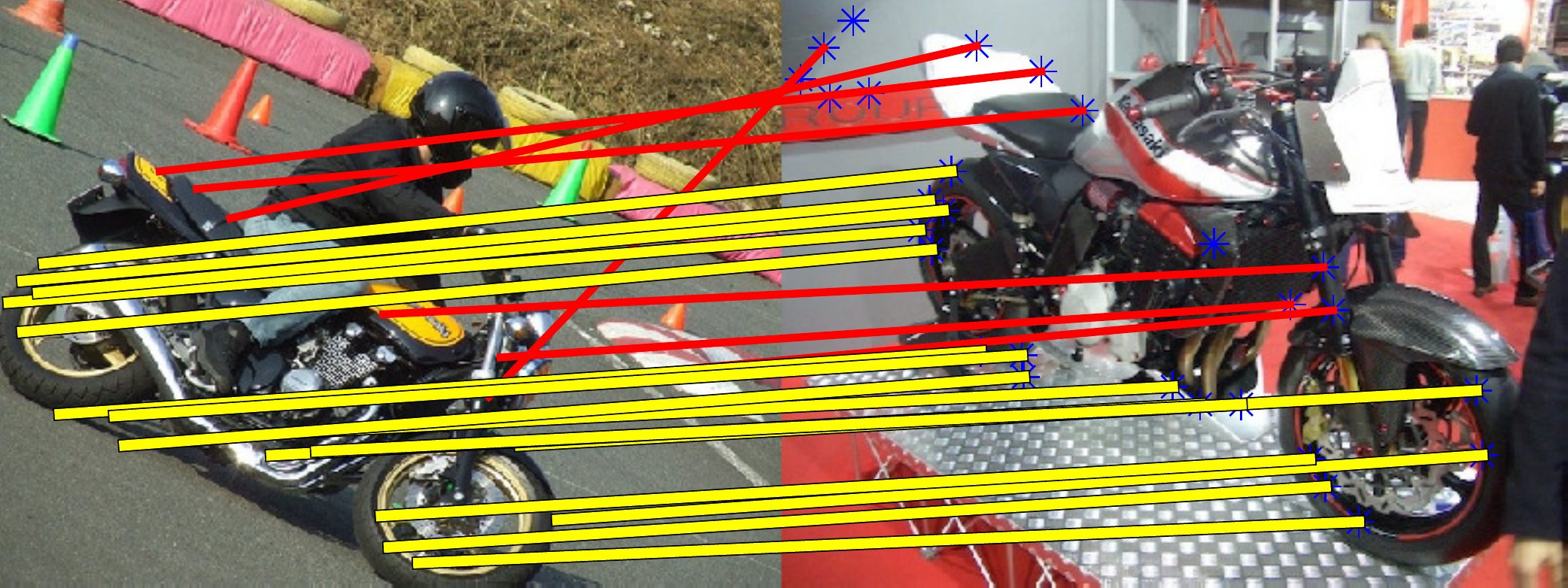} } \\
    \subfloat[BCAGM 19/23 (4016.2)]{ \includegraphics[width=0.4749\linewidth]{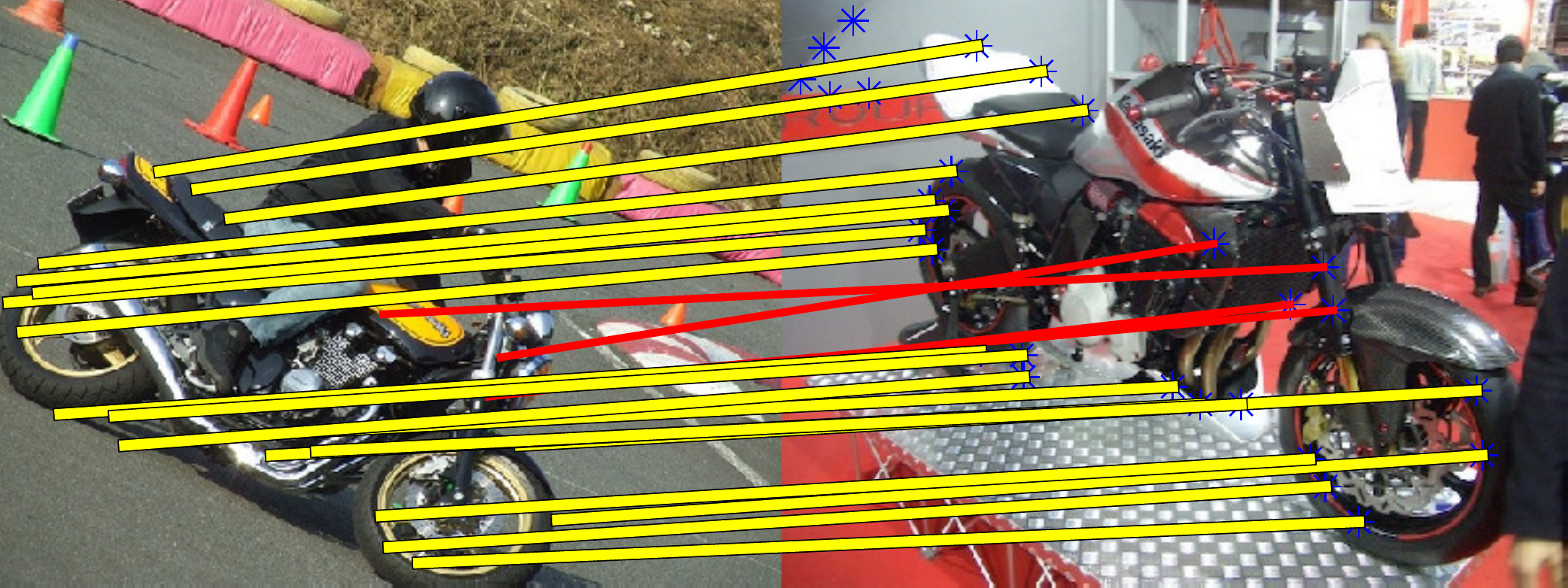} }
    \vspace{-5.2pt}
    \subfloat[BCAGM+MP 16/23 (4133.6)]{ \includegraphics[width=0.4749\linewidth]{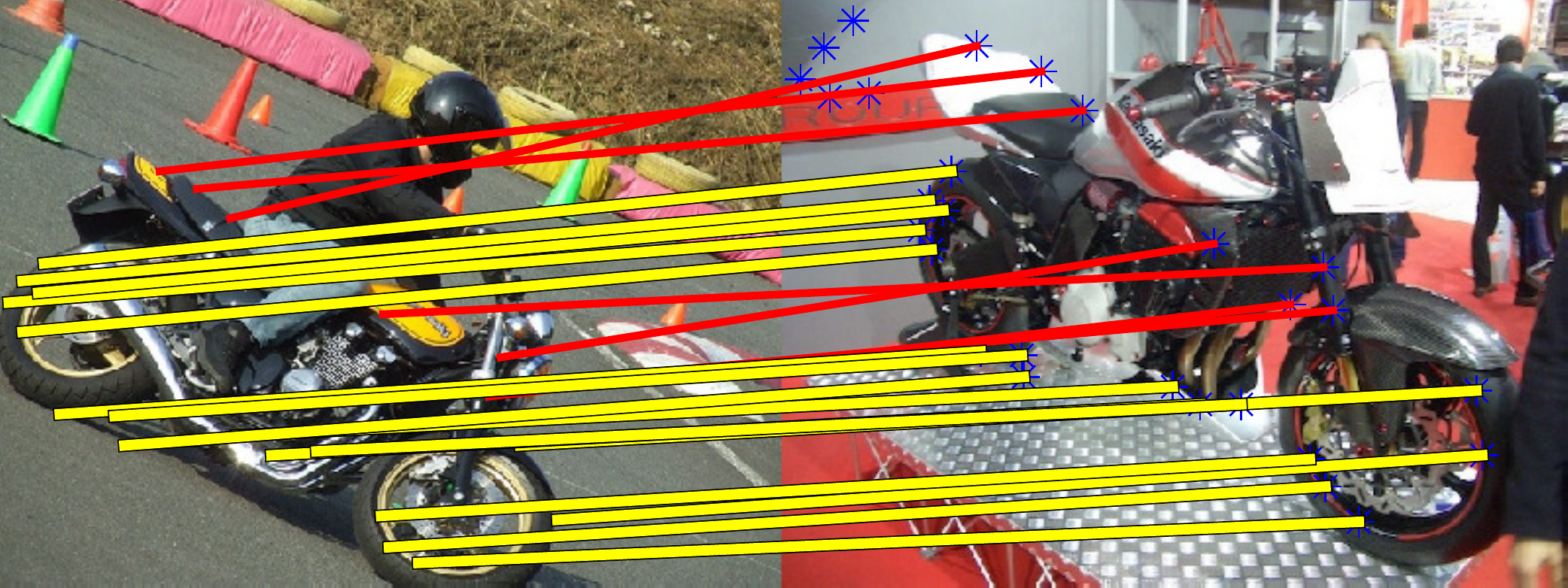} }
\caption{
    Motorbike dataset:
    the number of correct matches and the objective score are reported.
    (Best viewed in color.)
}
\vspace{-8pt}
\label{fig:exp_motor}
\end{figure}

\subsection{CMU House Dataset}\label{subsec:exp_house}
The CMU house dataset has been widely used in previous work ~\cite{ChoLeeLee2010,LeeChoLee2011,DucEtAl2011,ZhoTor2013} 
to evaluate matching algorithms. 
In this dataset, 30 landmark points are manually tracked over a sequence of 111 images, which are taken from the same object under different view points.
In this experiment, ``baseline'' denotes the distance of the frames in the sequence and thus correlates
well with the difficulty to match the corresponding frames.

We matched all possible image pairs with ``baseline'' of $10, 20, 30, \ldots, 100$ frames 
and computed the average matching accuracy for each algorithm. 
The algorithms are evaluated in three settings. In the first experiment, we match $30$ points to $30$ points.
Then we make the problem significantly harder by randomly removing points from one image motivated by a scenario 
where one has background clutter in an image and thus not all points can be matched. 
This results in two matching experiments, namely $10$ points to $30$ points, and $20$ points to $30$ points. 
For the choice of $\sigma_s$ in the affinity tensor for second order methods, 
we follow ~\cite{ChoLeeLee2010,ZhoTor2013} by setting $\sigma_s = 2500$.

The experimental results are shown in Figure \ref{fig:exp_house}. While most algorithms perform rather well on the $30$ to $30$ task, our methods perform significantly better than all other methods in the more difficult tasks, thus showing as for the synthetic datasets that our methods are quite robust
to different kind of noise in the matching problem.
\begin{figure}
    \vspace{-11pt}
    \subfloat{ \includegraphics[width=0.4749\linewidth]{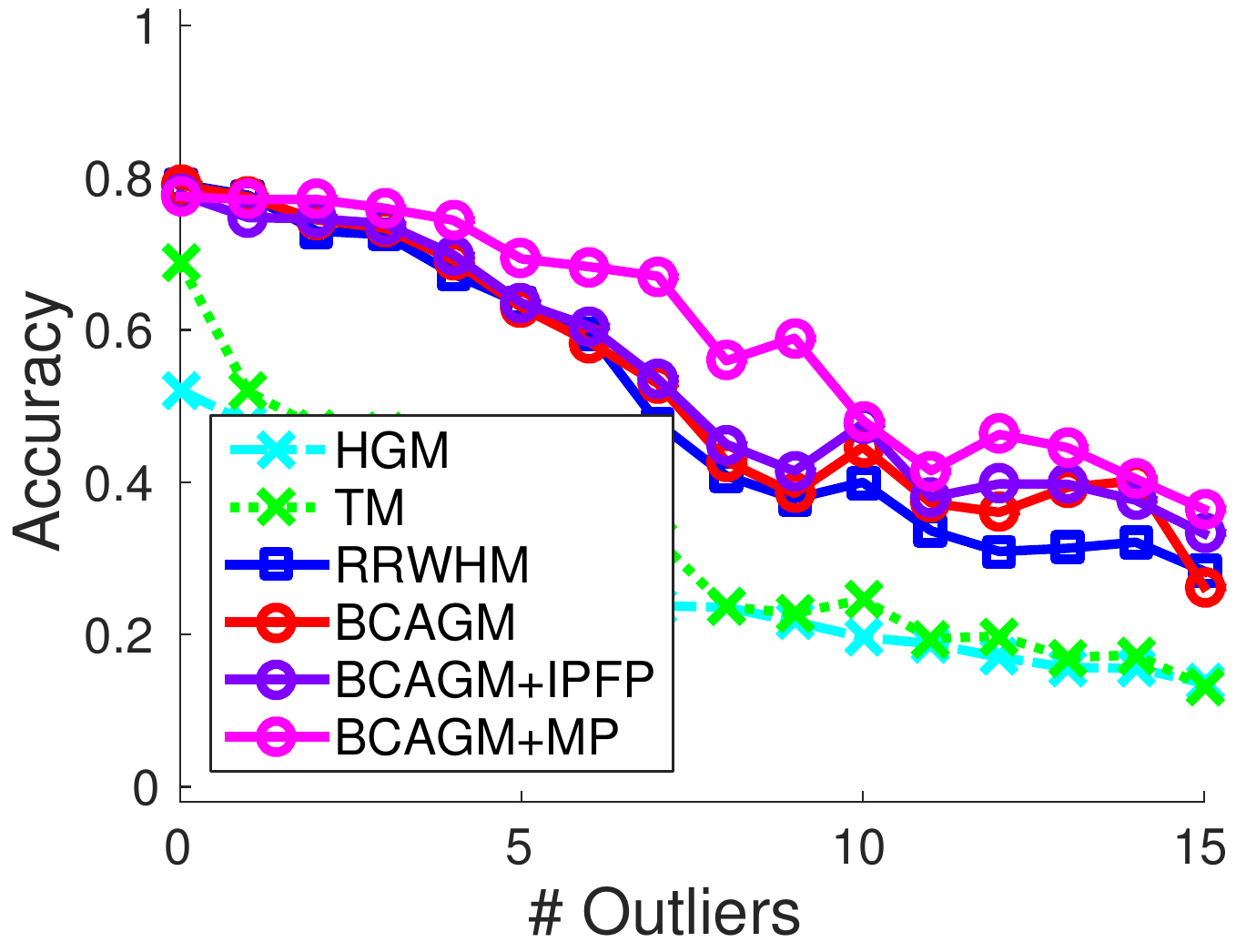} }
    \vspace{-5.2pt}
    \subfloat{ \includegraphics[width=0.4749\linewidth]{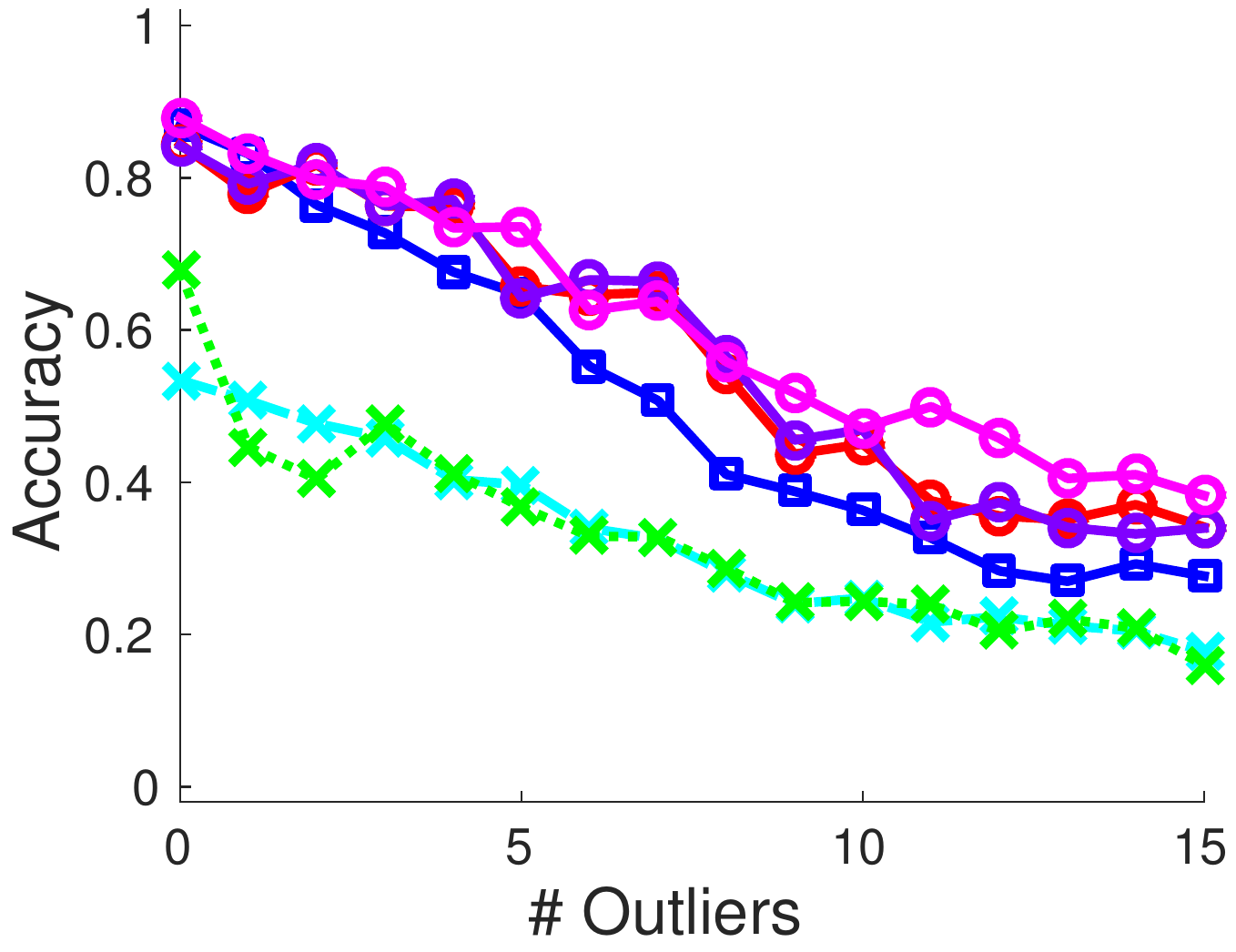} } \\
    \setcounter{subfigure}{0}
    \subfloat[Car]{ \includegraphics[width=0.4749\linewidth]{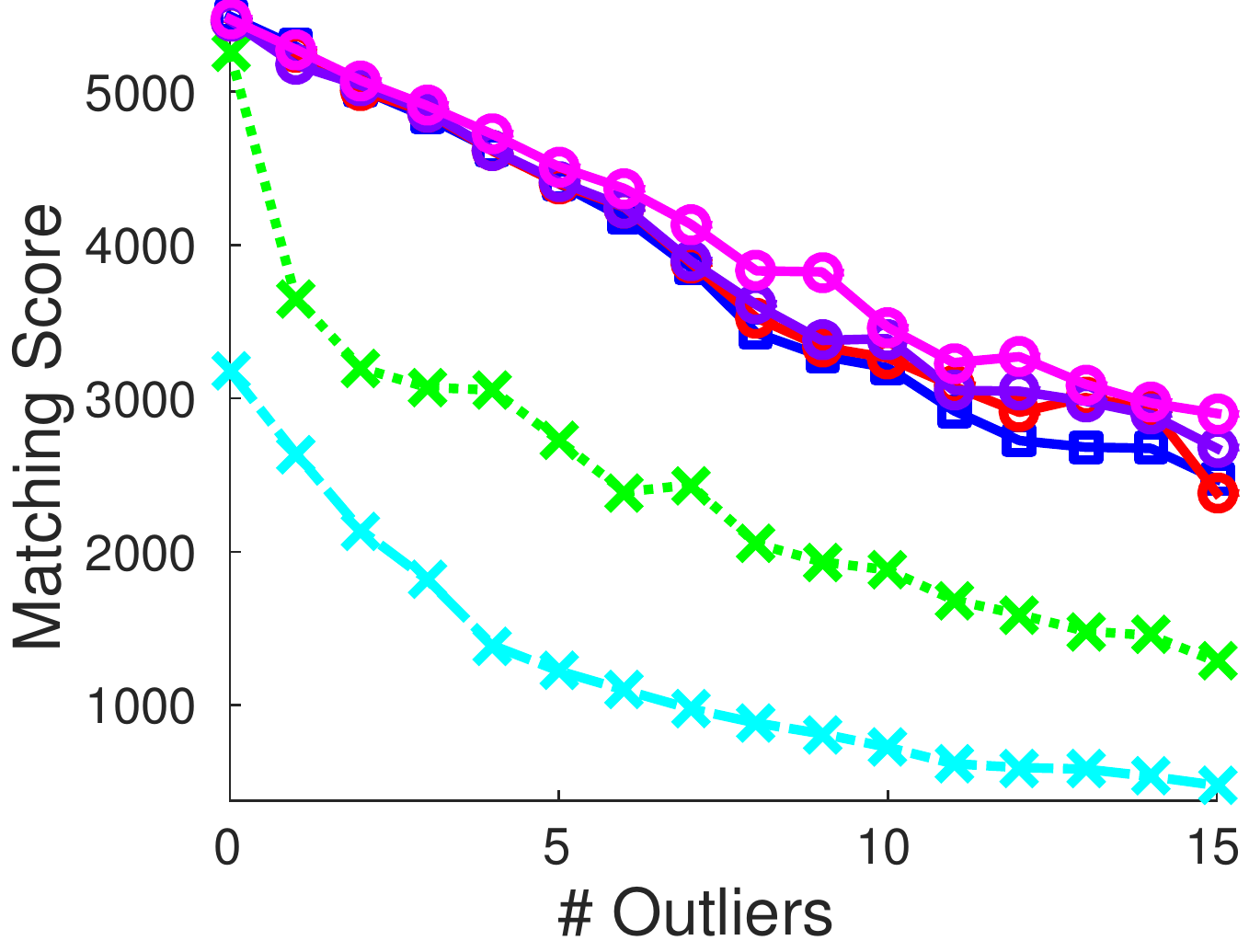} }  
    \vspace{-5.2pt}
    \subfloat[Motorbike]{ \includegraphics[width=0.4749\linewidth]{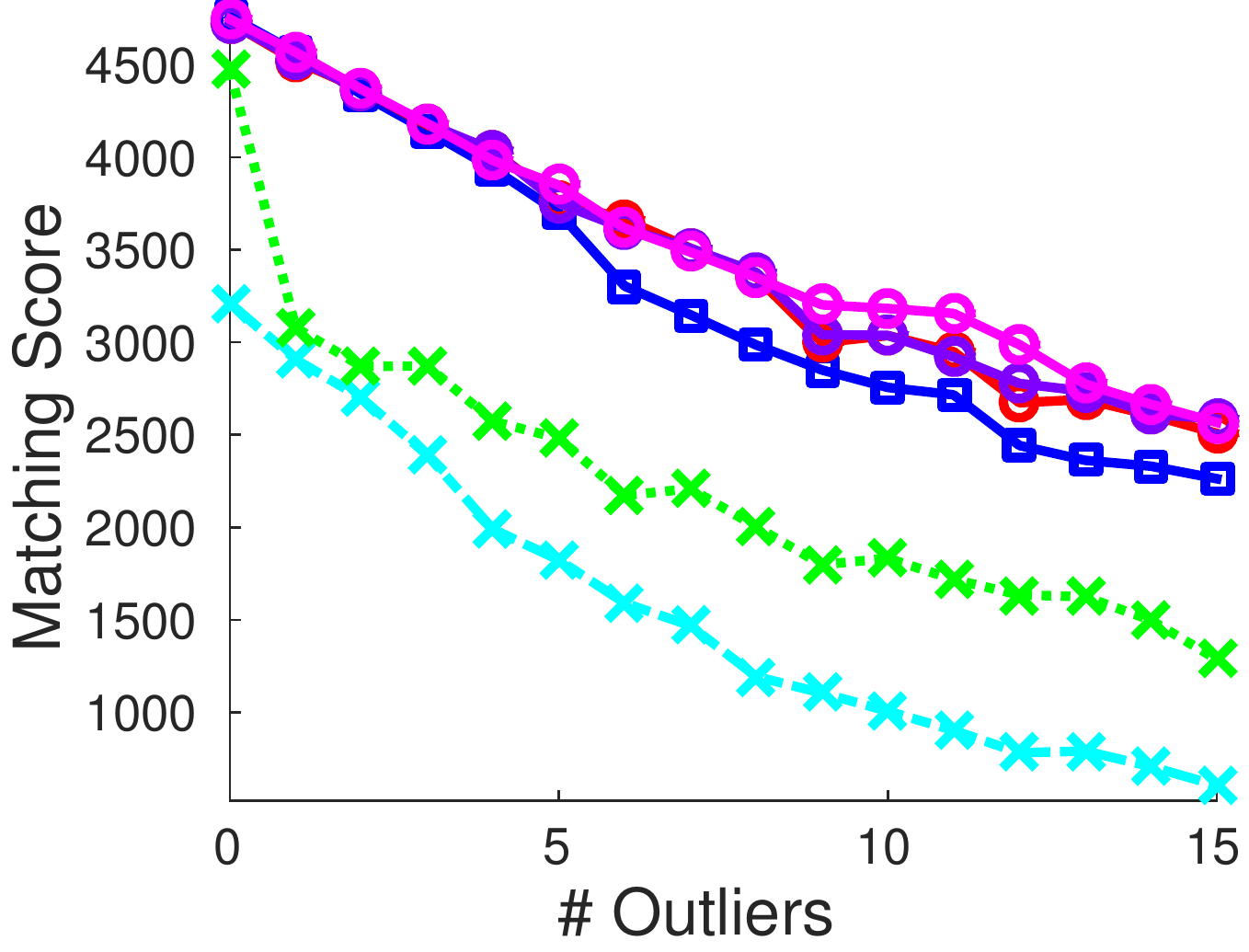} }
\caption{
    Evaluation of higher order GM algorithms on the car and motorbike dataset. 
    (Best viewed in color.)
}
\vspace{-8pt}
\label{fig:exp_car_motor_acc}
\end{figure}

\subsection{Car Motorbike Dataset}\label{subsec:exp_car_motor}
In this experiment, we compare our algorithms with other third order approaches
on the car and motorbike dataset introduced in \cite{LeoHeb2009}.
The dataset contains $30$ resp. $20$ pairs of car resp. motorbike images.
Each image contains a number of inliers and outliers from the cluttered background. 
Ground-truth correspondences are given for the inlier points in both images.
Figures \ref{fig:exp_car} and \ref{fig:exp_motor} show some examples of matching results.
To test the algorithms against noise, we kept all the inlier points in both images and randomly added a number of outliers
from the cluttered background to the second image (between $0$ to $15$).
For each number of outliers, we test all algorithms on all the image pairs of each dataset 
and report the average performance in Figure \ref{fig:exp_car_motor_acc}. 
It shows that our methods achieve better overall performance than other third order approaches in both datasets.
However, we expect that the performance can be significantly improved 
if also other features are integrated rather than just relying on  geometric information \cite{ZhoTor2012, ZhoTor2013},
or graph learning methods can be also be integrated \cite{ChoEtal2013, CaeEtal2009, LeoHeb2009, Deepti2012, TorEtal2008}.
\section{Conclusion and Outlook}
We have presented a new optimization framework for higher order graph matching. 
Compared to previous methods for higher order matching
we can guarantee monotonic ascent for our algorithms in the matching score on the set of assignment matrices. 
Our new algorithms achieve superior performance in terms of objective but also yield competitive or 
significantly better matching accuracy. 
This is particularly true for large number of outliers and other forms of noise. Moreover, both algorithms
are also competitive in terms of runtime compared to other methods.

An interesting line of future research is the use of globally optimal solutions of relaxations
of the hypergraph matching problem. We computed via the method of \cite{Antoine2015} the
maximal singular vectors of the fourth order tensor and used these as initialization of Algorithm \ref{algo:linear}.
This lead to an improvement in matching score of up to 2\% and in accuracy of up to 3\%. 
We will explore this further in future work.

\section*{Acknowledgment}
The authors acknowledge support by the ERC starting grant NOLEPRO.

{\small
\bibliographystyle{ieee}
\bibliography{regul}

\begin{thebibliography}{10}\itemsep=-1pt

\bibitem{Boros2002}
E.~Boros and P.~L. Hammer.
\newblock Pseudo-boolean optimization.
\newblock {\em Discrete Applied Mathematics}, 123:155--225, 2002.

\bibitem{BurAmiMar2012}
R.~E. Burkhard, M.~Dell'Amico, and S.~Martello.
\newblock {\em Assignment problems}.
\newblock SIAM, Philadelphia, 2012.

\bibitem{CaeEtal2009}
T.~S. Caetano, J.~J. McAuley, L.~Cheng, Q.~V. Le, and A.~J. Smola.
\newblock Learning graph matching.
\newblock {\em PAMI}, 31:1048--1058, 2009.

\bibitem{CheKel2010}
M.~Chertok and Y.~Keller.
\newblock Efficient high order matching.
\newblock {\em PAMI}, 32:2205--2215, 2010.

\bibitem{ChoEtal2013}
M.~Cho, K.~Alahari, and J.~Ponce.
\newblock Learning graphs to match.
\newblock In {\em ICCV}, 2013.

\bibitem{ChoLeeLee2010}
M.~Cho, J.~Lee, and K.~M. Lee.
\newblock Reweighted random walks for graph matching.
\newblock In {\em ECCV}, 2010.

\bibitem{ChoLee2012}
M.~Cho and K.~M. Lee.
\newblock Progressive graph matching: Making a move of graphs via probabilistic
  voting.
\newblock In {\em CVPR}, 2012.

\bibitem{ChoEtAl2014}
M.~Cho, J.~Sun, O.~Duchenne, and J.~Ponce.
\newblock Finding matches in a haystack: A max-pooling strategy for graph
  matching in the presence of outliers.
\newblock In {\em CVPR}, 2014.

\bibitem{ShiEtal2007}
T.~Cour, P.~Srinivasan, and J.~Shi.
\newblock Balanced graph matching.
\newblock In {\em NIPS}, 2007.

\bibitem{DucEtAl2011}
O.~Duchenne, F.~Bach, I.~Kweon, and J.~Ponce.
\newblock A tensor-based algorithm for high-order graph matching.
\newblock {\em PAMI}, 33:2383--2395, 2011.

\bibitem{DucJouPon2011}
O.~Duchenne, A.~Joulin, and J.~Ponce.
\newblock A graph-matching kernel for object categorization.
\newblock In {\em ICCV}, 2011.

\bibitem{Antoine2015}
A.~Gautier and M.~Hein.
\newblock Tensor norm and maximal singular vectors of non-negative tensors - a
  perron-frobenius theorem, a collatz-wielandt characterization and a
  generalized power method.
\newblock {\em arXiv:1503.01273}, 2015.

\bibitem{Kolmogorov2007}
V.~Kolmogorov and C.~Rother.
\newblock Minimizing nonsubmodular functions with graph cuts-a review.
\newblock {\em PAMI}, 29:1274--1279, 2007.

\bibitem{Kuh1955}
H.~W. Kuhn.
\newblock The hungarian method for the assignment problem.
\newblock {\em Naval Research Logistics Quarterly}, 2:83--97, 1955.

\bibitem{LeeChoLee2010}
J.~Lee, M.~Cho, and K.~M. Lee.
\newblock A graph matching algorithm using data-driven markov chain monte carlo
  sampling.
\newblock In {\em ICPR}, 2010.

\bibitem{LeeChoLee2011}
J.~Lee, M.~Cho, and K.~M. Lee.
\newblock Hyper-graph matching via reweighted random walks.
\newblock In {\em CVPR}, 2011.

\bibitem{LeoHeb2005}
M.~Leordeanu and M.~Hebert.
\newblock A spectral technique for correspondence problems using pairwise
  constraints.
\newblock In {\em ICCV}, 2005.

\bibitem{LeoHeb2009}
M.~Leordeanu and M.~Hebert.
\newblock Unsupervised learning for graph matching.
\newblock In {\em CVPR}, 2009.

\bibitem{LeoHebSuk2009}
M.~Leordeanu, M.~Hebert, and R.~Sukthankar.
\newblock An integer projected fixed point method for graph matching and map
  inference.
\newblock In {\em NIPS}, 2009.

\bibitem{Low1999}
D.~G. Lowe.
\newblock Object recognition from local scale-invariant features.
\newblock In {\em ICCV}, 1999.

\bibitem{Joa2003}
J.~Maciel and J.~P. Costeira.
\newblock A global solution to sparse correspondence problems.
\newblock {\em PAMI}, 25:187--199, 2003.

\bibitem{Deepti2012}
D.~Pachauri, M.~Collins, V.~Singh, and R.~Kondor.
\newblock Incorporating domain knowledge in matching problems via harmonic
  analysis.
\newblock In {\em ICML}, 2012.

\bibitem{SharEtal2011}
A.~Sharma, R.~Horaud, J.~Cech, and E.~Boyer.
\newblock Topologically- robust 3d shape matching based on diffusion geometry
  and seed growing.
\newblock In {\em CVPR}, 2011.

\bibitem{TorEtal2008}
L.~Torresani, V.~Kolmogorov, and C.~Rother.
\newblock Feature correspondence via graph matching: Models and global
  optimization.
\newblock In {\em ECCV}, 2008.

\bibitem{ZasEtl2009}
M.~Zaslavskiy, F.~Bach, and J.~Vert.
\newblock A path following algorithm for the graph matching problem.
\newblock {\em PAMI}, 31:2227--2242, 2009.

\bibitem{ZasSha2008}
R.~Zass and A.~Shashua.
\newblock Probabilistic graph and hypergraph matching.
\newblock In {\em CVPR}, 2008.

\bibitem{Zeng2010}
Y.~Zeng, C.~Wang, Y.~Wang, X.~Gu, D.~Samaras, and N.~Paragios.
\newblock Dense non-rigid surface registration using high-order graph matching.
\newblock In {\em CVPR}, 2010.

\bibitem{ZhoTor2012}
F.~Zhou and F.~{De la Torre}.
\newblock Factorized graph matching.
\newblock In {\em CVPR}, 2012.

\bibitem{ZhoTor2013}
F.~Zhou and F.~{De la Torre}.
\newblock Deformable graph matching.
\newblock In {\em CVPR}, 2013.

\end{thebibliography}
}

\newpage
\appendix
In this section, we show some additional experimental results to the main paper.

\section{Synthetic Dataset}
Figure \ref{appendix_fig:exp_out} shows additional results in the outlier setting where 
the number of inliers was fixed to $10$ while $\sigma$ and $scale$ were set to different values.
It is interesting to see that when there is no deformation and scaling, our algorithms together with
RRWHM \cite{LeeChoLee2011} and MPM \cite{ChoEtAl2014} achieve an almost perfect result.
However, our algorithms outperform all other higher order approaches when deformation and scaling are slightly present.
Compared to second order methods, our algorithms can take advantage of higher order features, therefore, achieve superior performance
when transformations such as scaling are present.

Figure \ref{appendix_fig:exp_def} shows further results in the deformation setting, where the number of inliers was set to $30$ and $40$ accordingly,
and no other form of noise was used. 
As we show the runtime for all the experiments, the result for $20$ inliers points is repeated from the paper as well.
One can observe from Figure \ref{appendix_fig:exp_def} that our algorithms always stay competitive with other state-of-the-art higher order methods,
in particular RRWHM \cite{LeeChoLee2011}, even when the deformation is significant.

\begin{figure*}[htb!]
\begin{center}
    \subfloat{\includegraphics[width=0.31\linewidth]{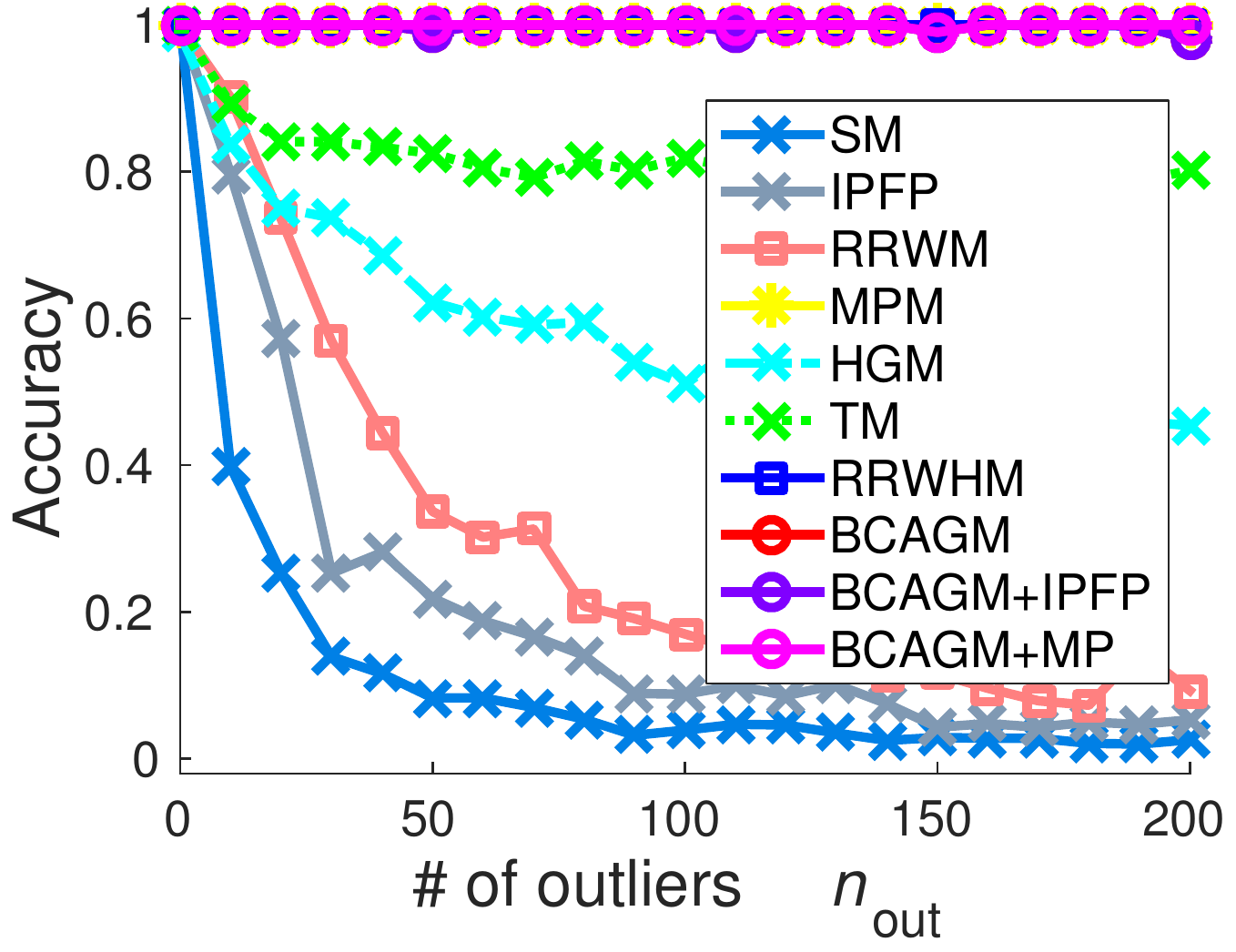} }
    \subfloat{\includegraphics[width=0.31\linewidth]{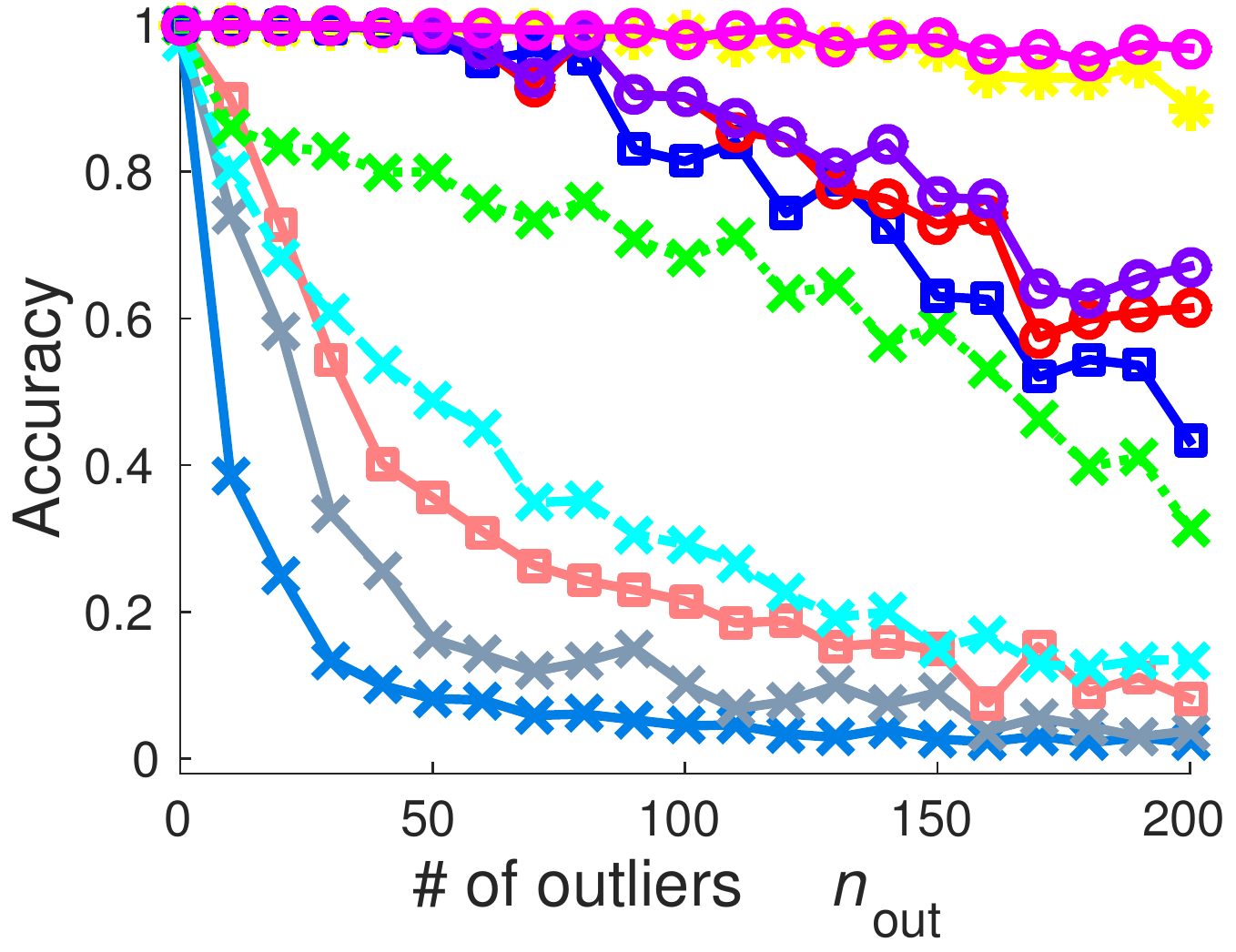} }  
    \subfloat{\includegraphics[width=0.31\linewidth]{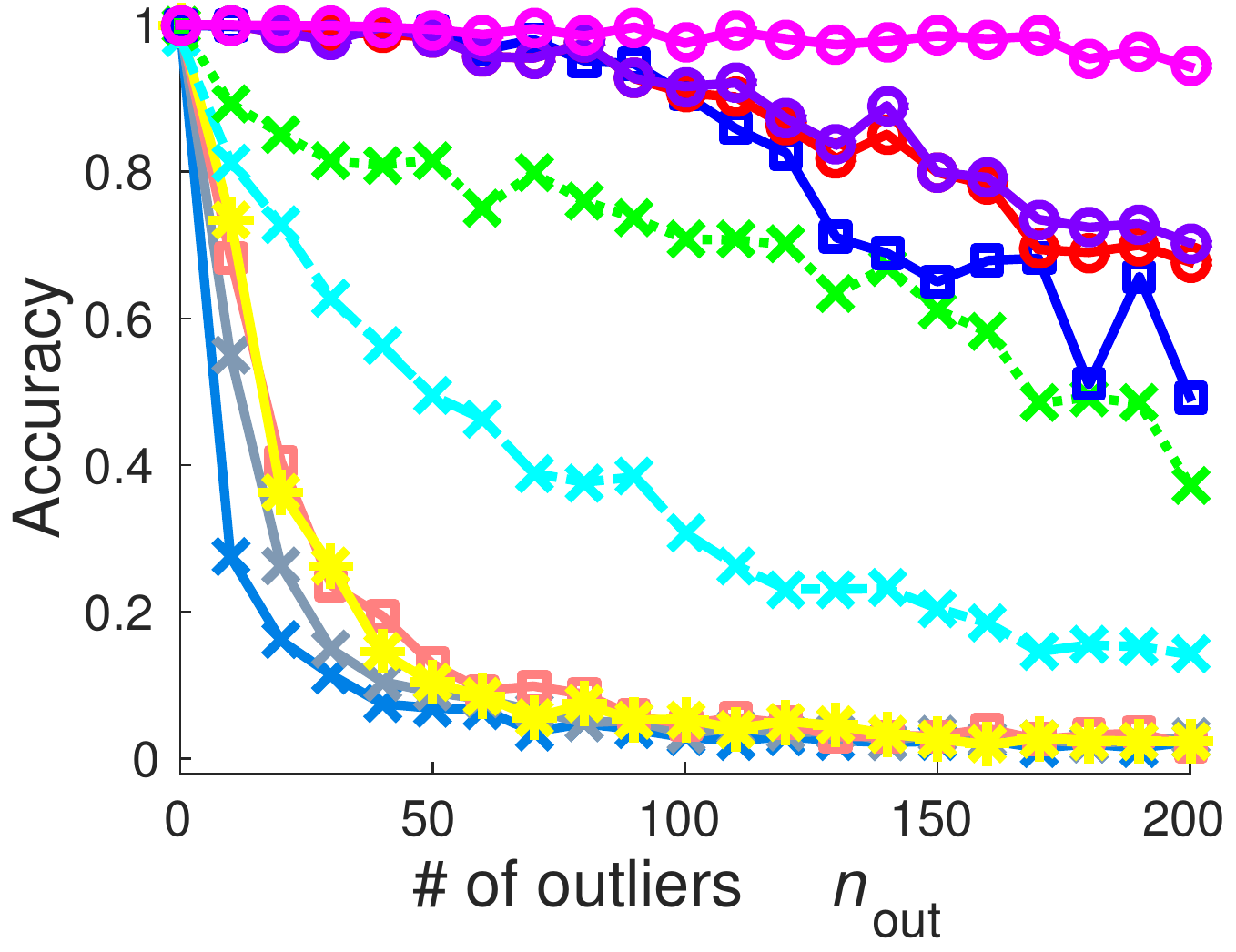} } \\
    
    \subfloat{ \includegraphics[width=0.31\linewidth]{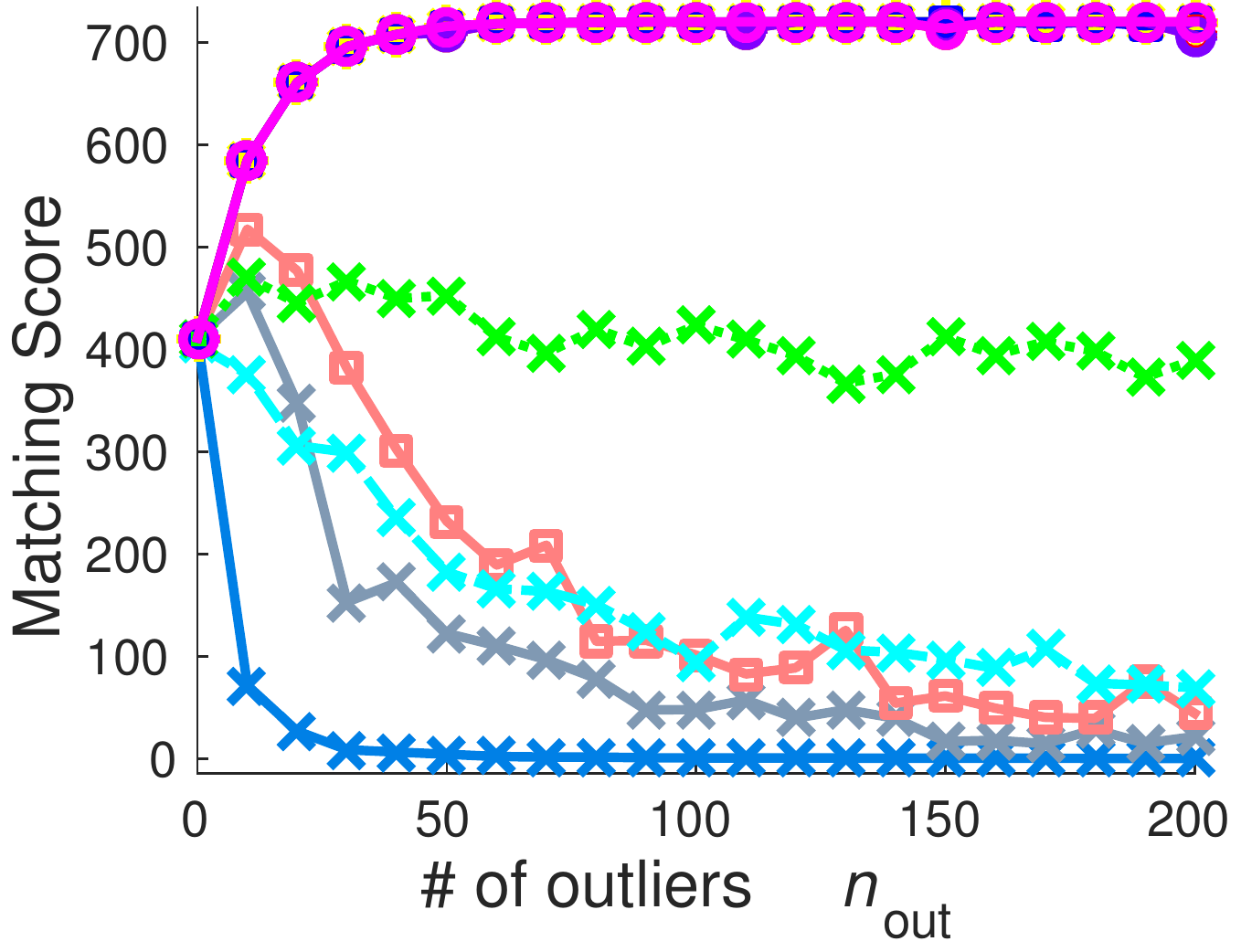}  } 
    \subfloat{\includegraphics[width=0.31\linewidth]{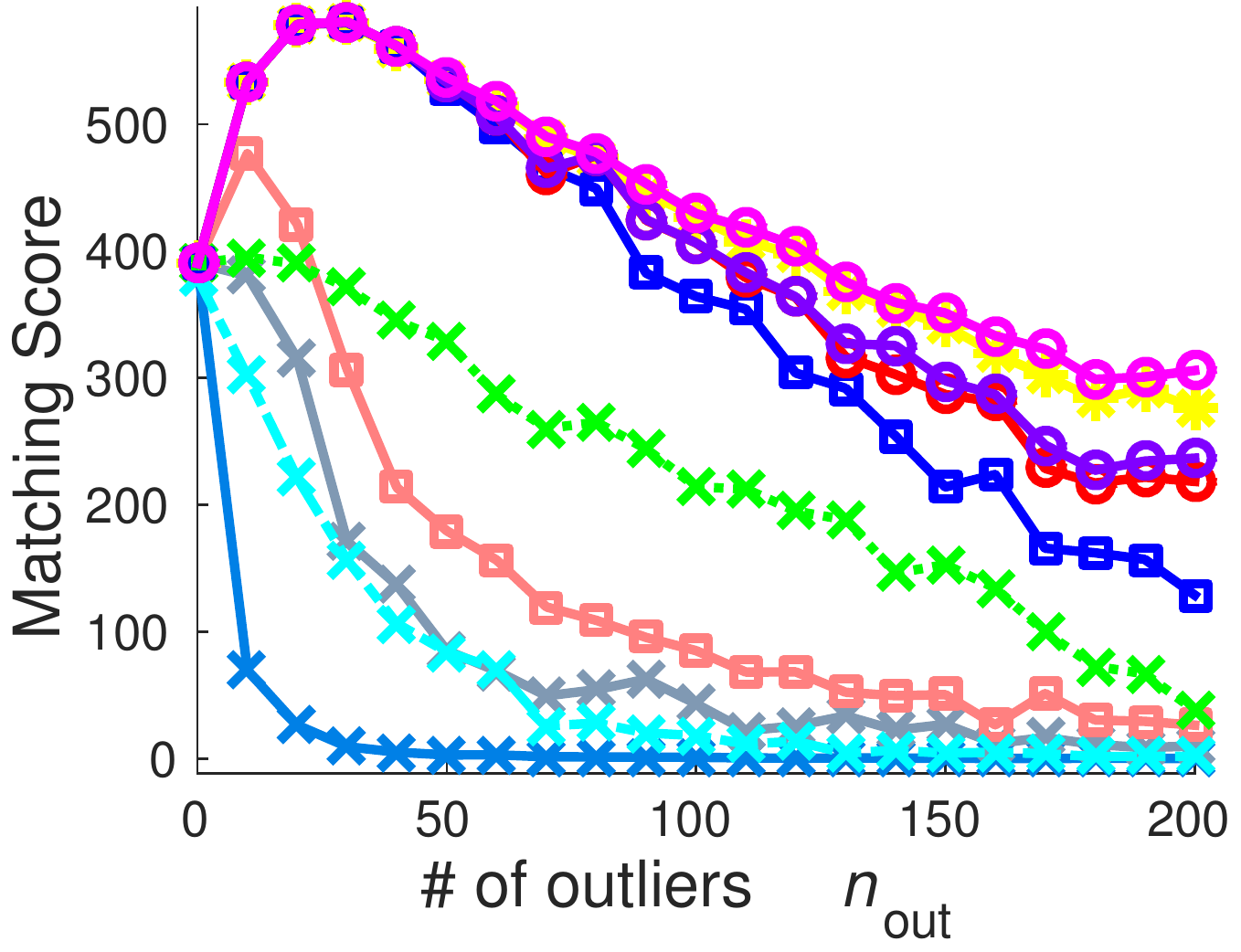} }  
    \subfloat{ \includegraphics[width=0.31\linewidth]{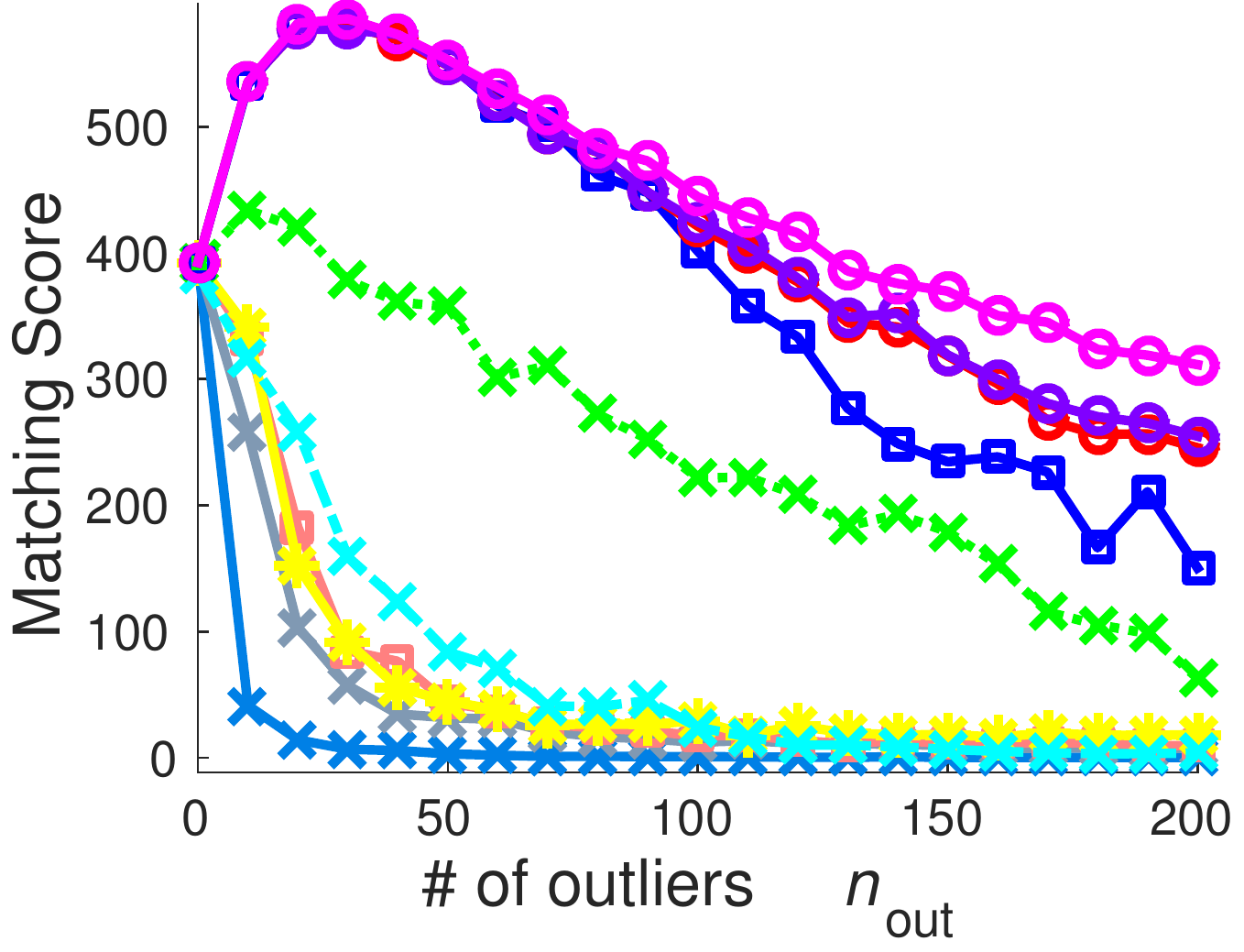} } \\
    
    \setcounter{subfigure}{0}
    \subfloat[$n_{in}=10,\sigma=0,scale=1$]{ \includegraphics[width=0.31\linewidth]{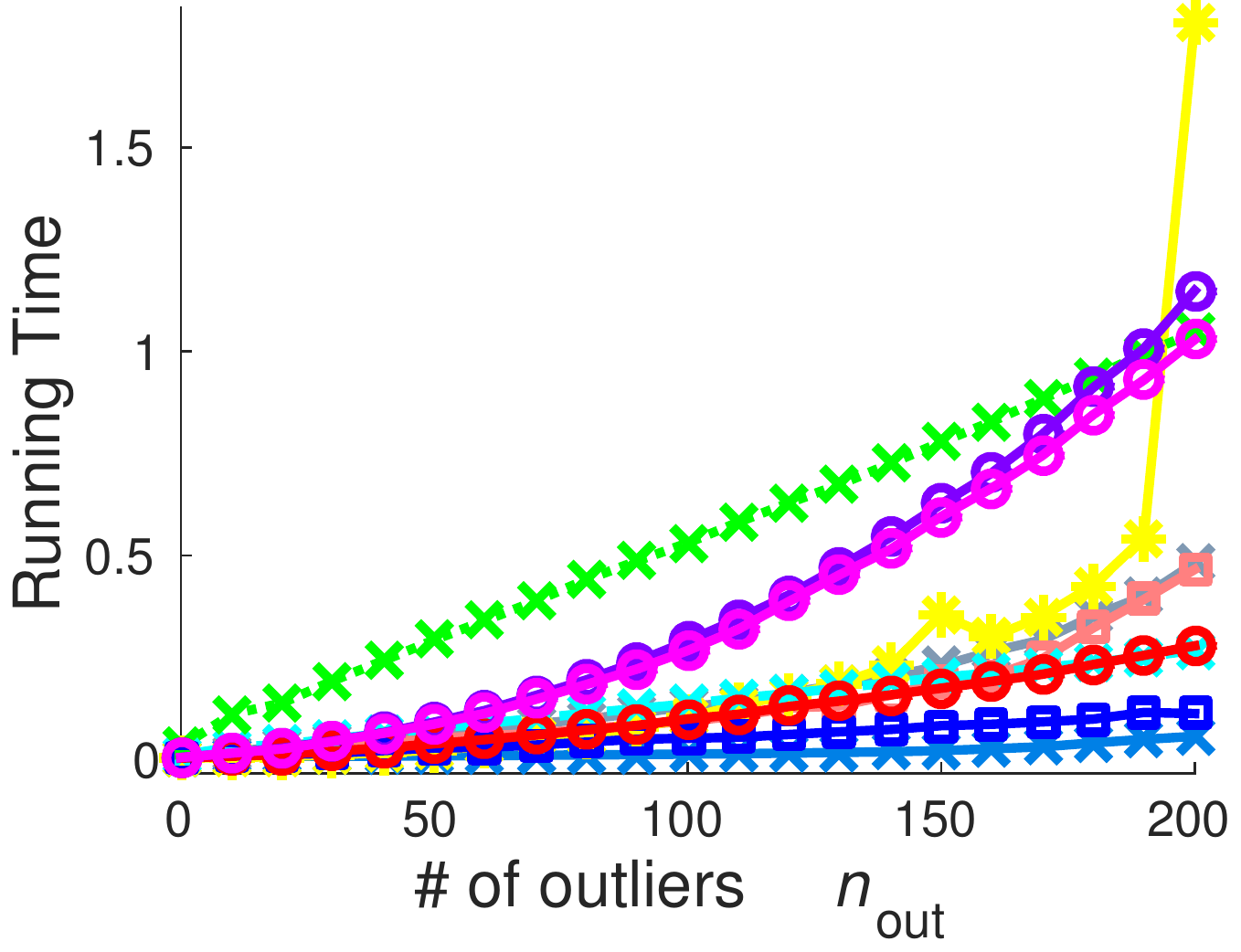} \label{appendix_fig:out_def_noscal} } 
    \subfloat[$n_{in}=10,\sigma=0.01,scale=1.01$]{ \includegraphics[width=0.31\linewidth]{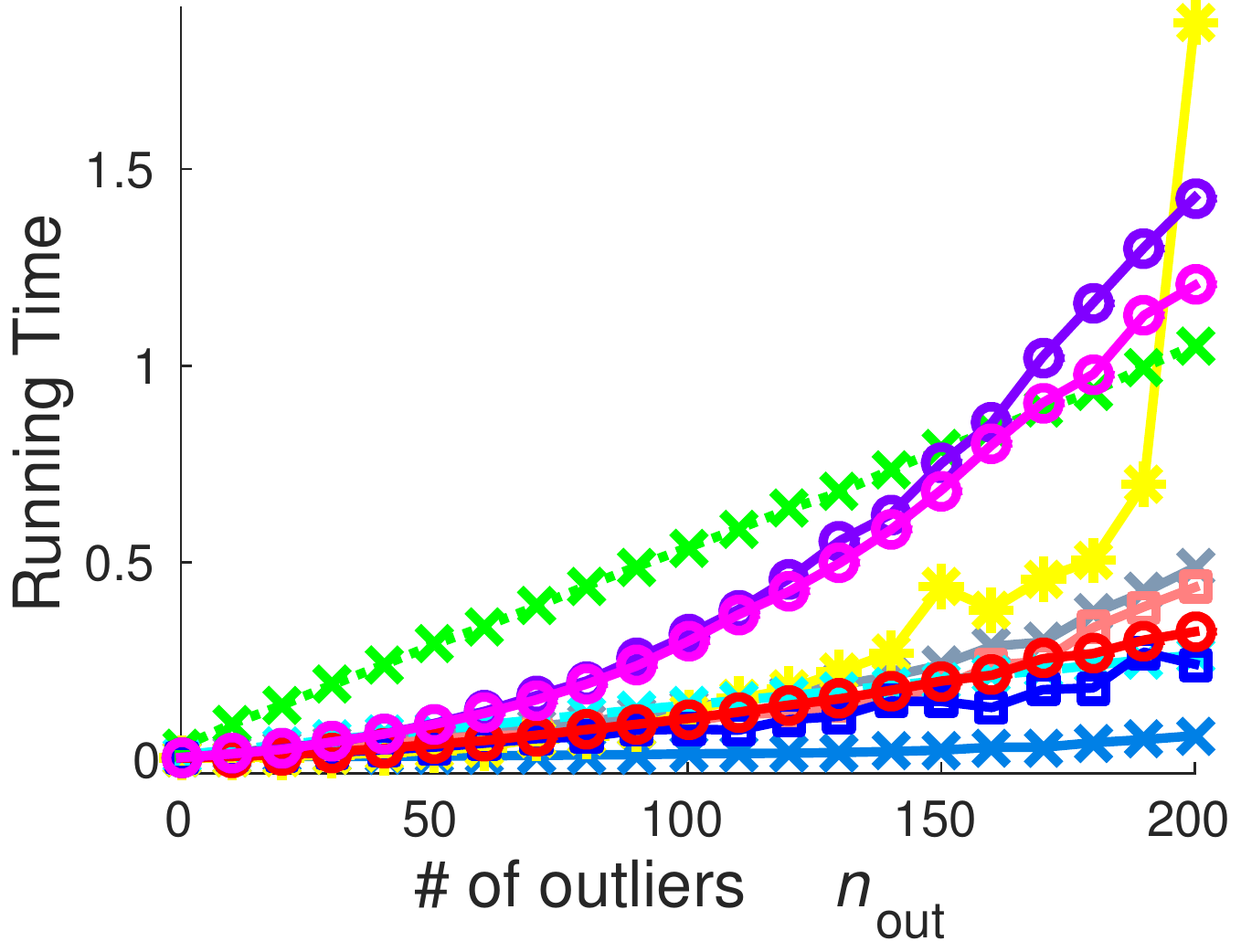} \label{appendix_fig:def} }
    \subfloat[$n_{in}=10,\sigma=0.01,scale=1.1$]{ \includegraphics[width=0.31\linewidth]{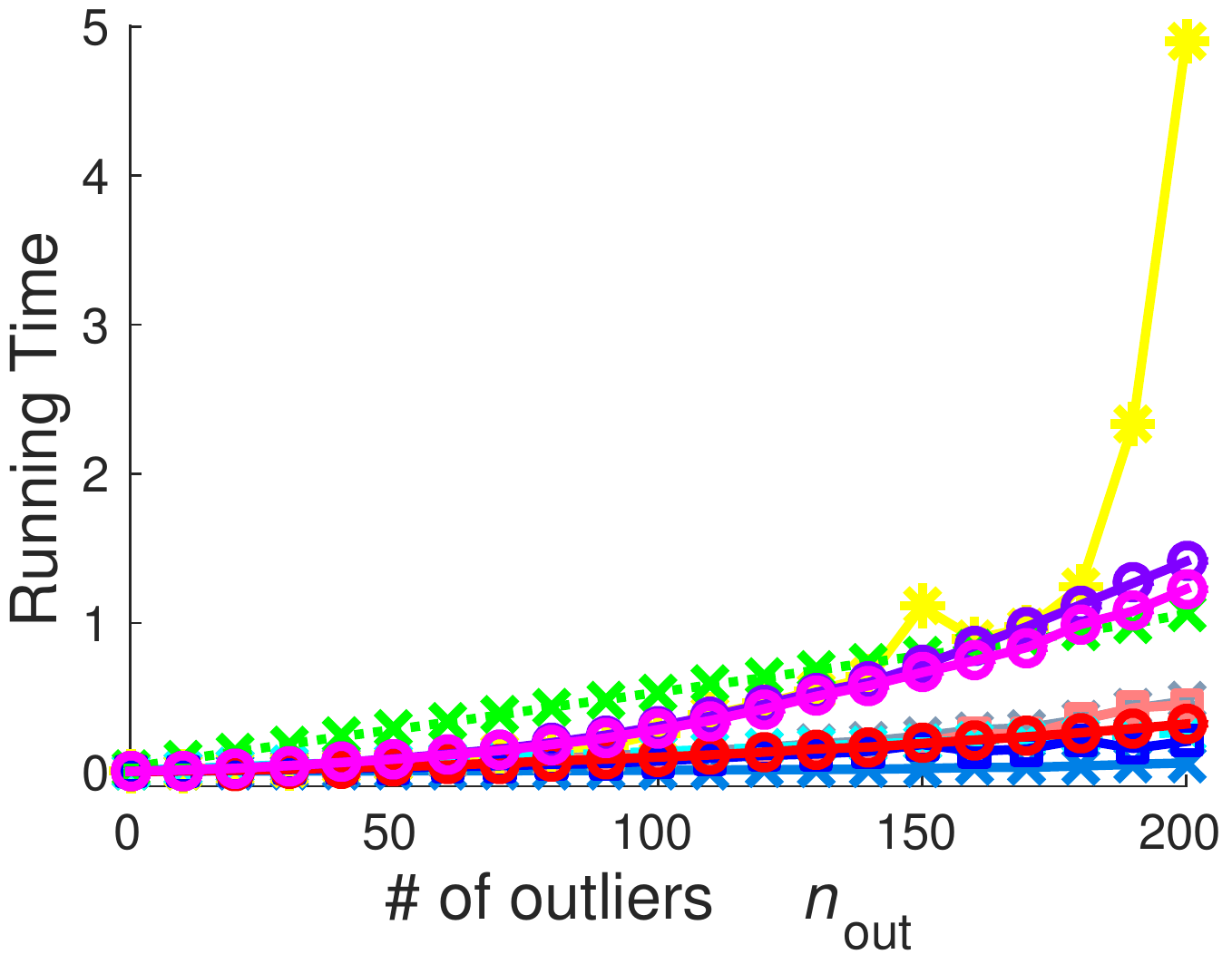} \label{appendix_fig:out_def_scal} } 
\end{center}    
\caption{
Matching point sets in $\R^2$ (outliers test):
The top row shows average accuracy while the middle row shows
the average matching score and the bottom row shows average running time. 
The number of outliers was varied from $0$ to $200$ with the interval $10$. 
(a) Increasing number of outliers without deformation and scaling.
(b) Increasing number of outliers with slight deformation and small scaling.
(c) Increasing number of outliers with slight deformation and large scaling.
(Best viewed in color.)
}
\label{appendix_fig:exp_out}
\end{figure*}

\begin{figure*}[htb!]
\begin{center}    
    \subfloat{\includegraphics[width=0.31\linewidth]{def_20inlier_acc1} }
    \subfloat{\includegraphics[width=0.31\linewidth]{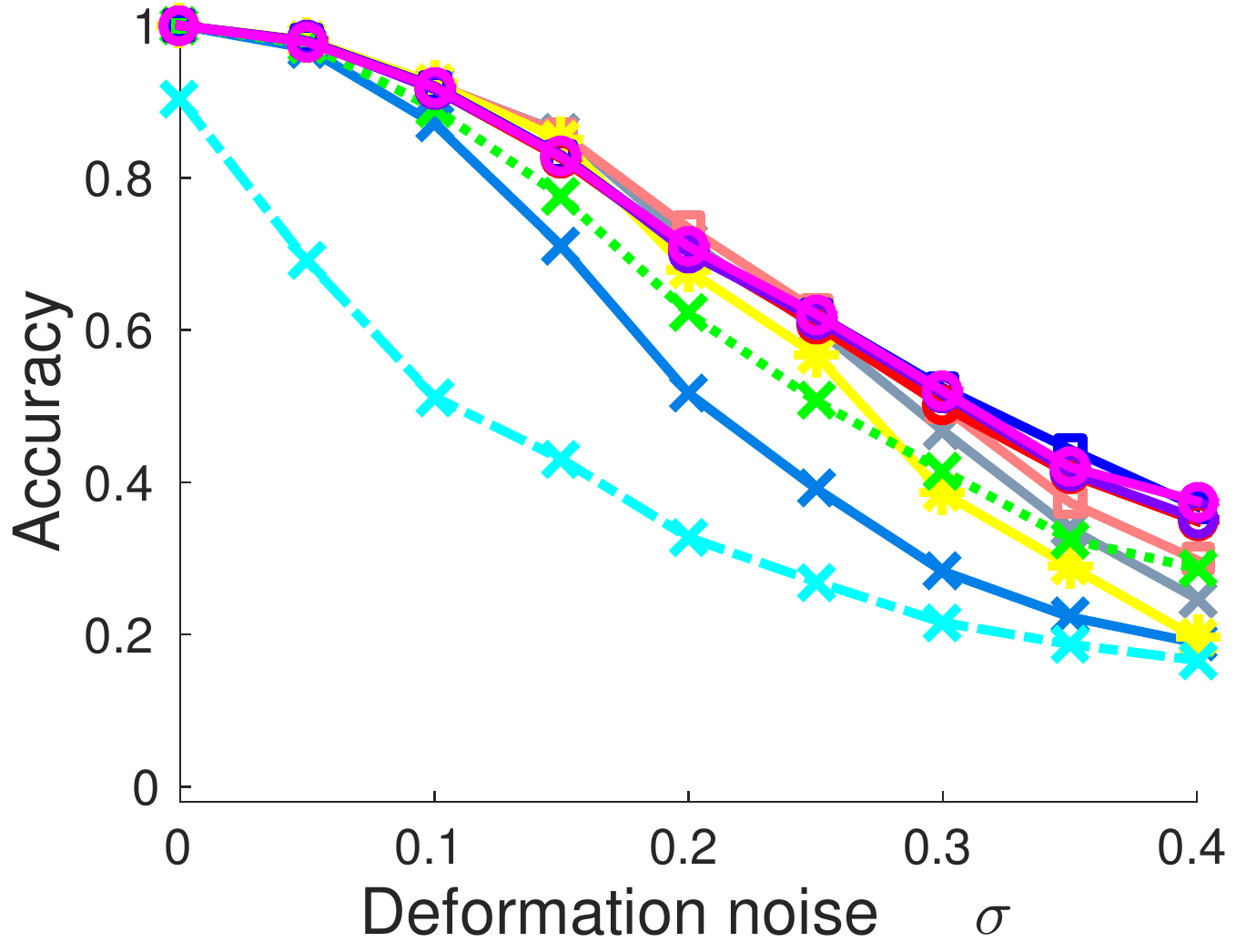} } 
    \subfloat{\includegraphics[width=0.31\linewidth]{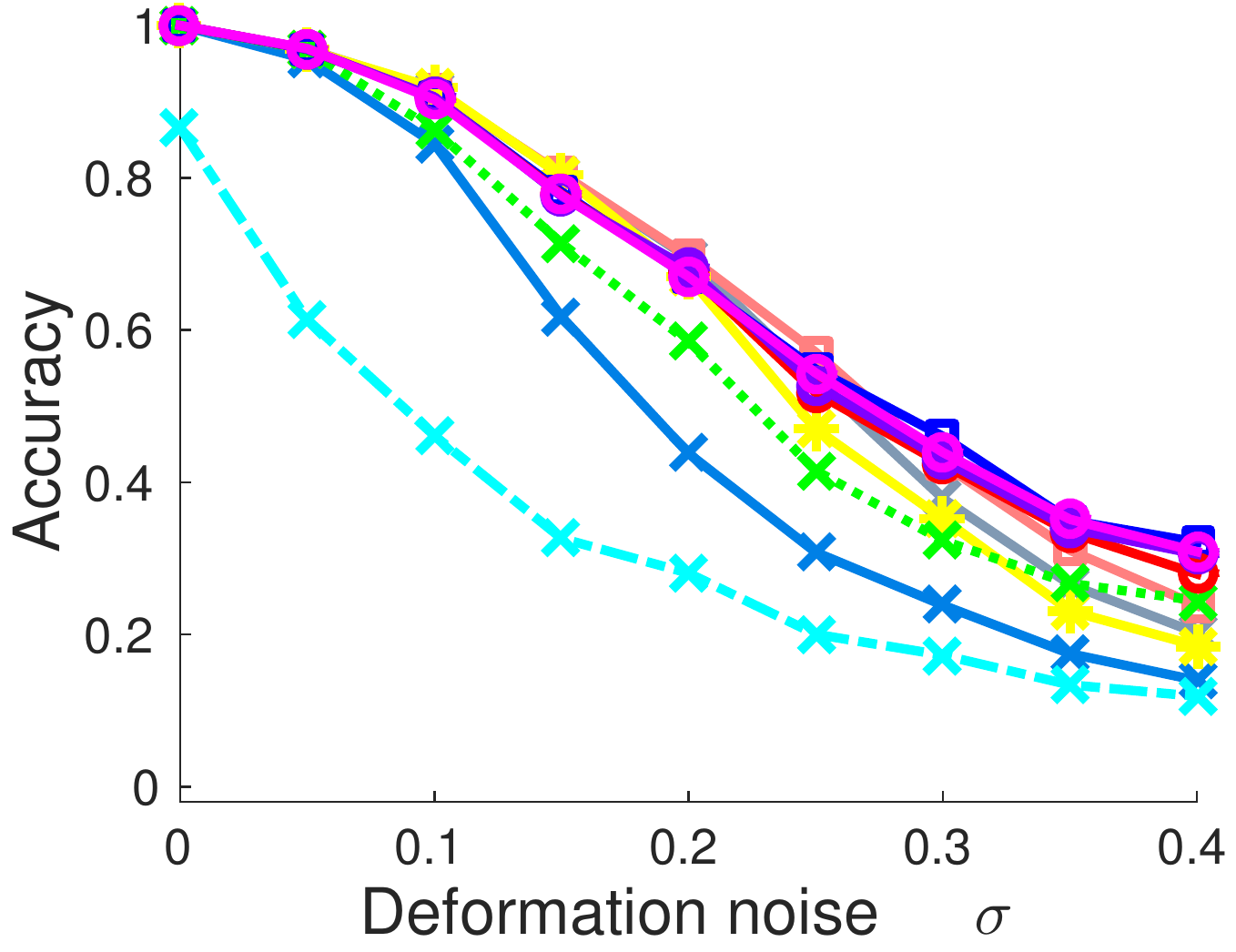} }  \\
    
    \subfloat{ \includegraphics[width=0.31\linewidth]{def_20inlier_score0}  } 
    \subfloat{ \includegraphics[width=0.31\linewidth]{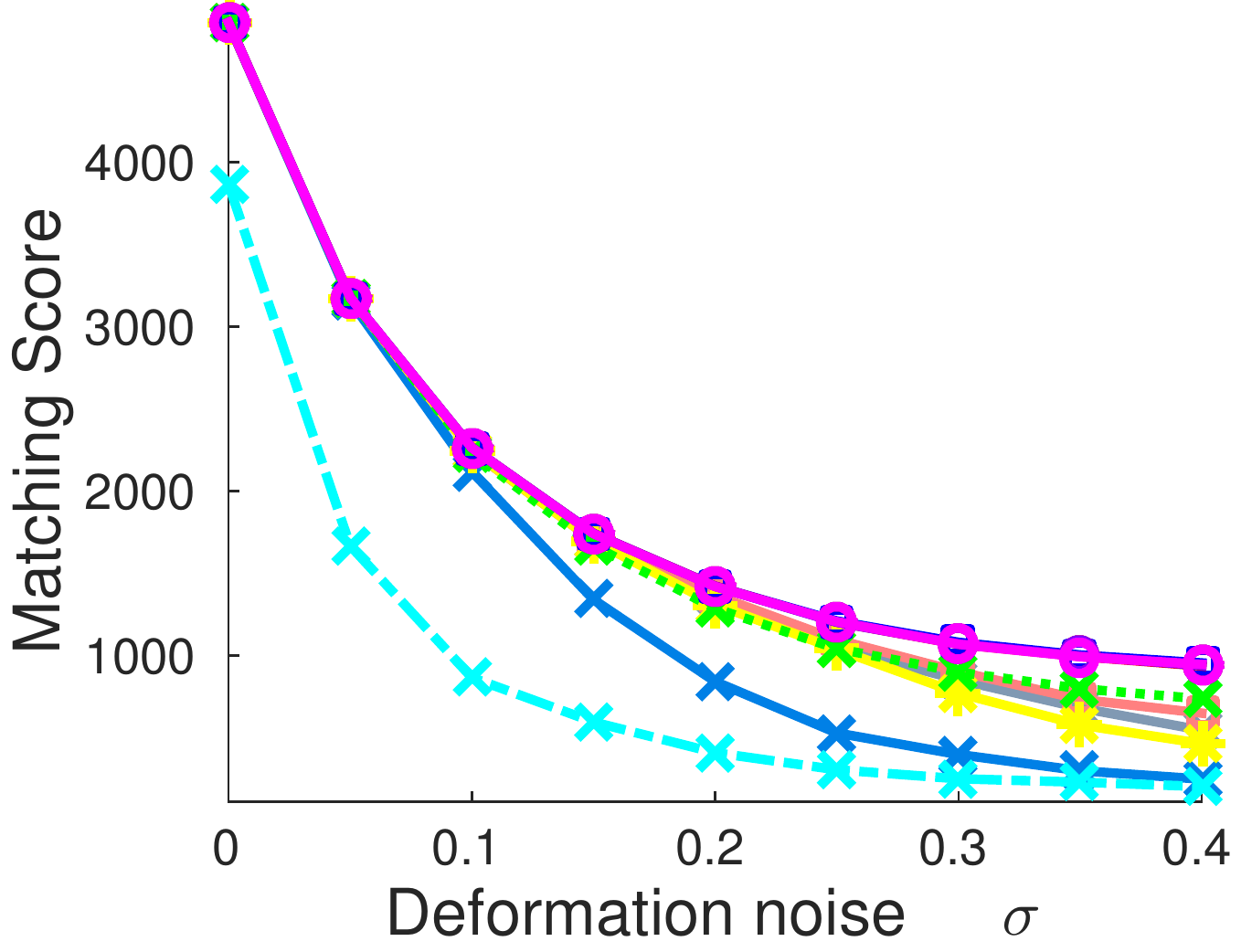} } 
    \subfloat{\includegraphics[width=0.31\linewidth]{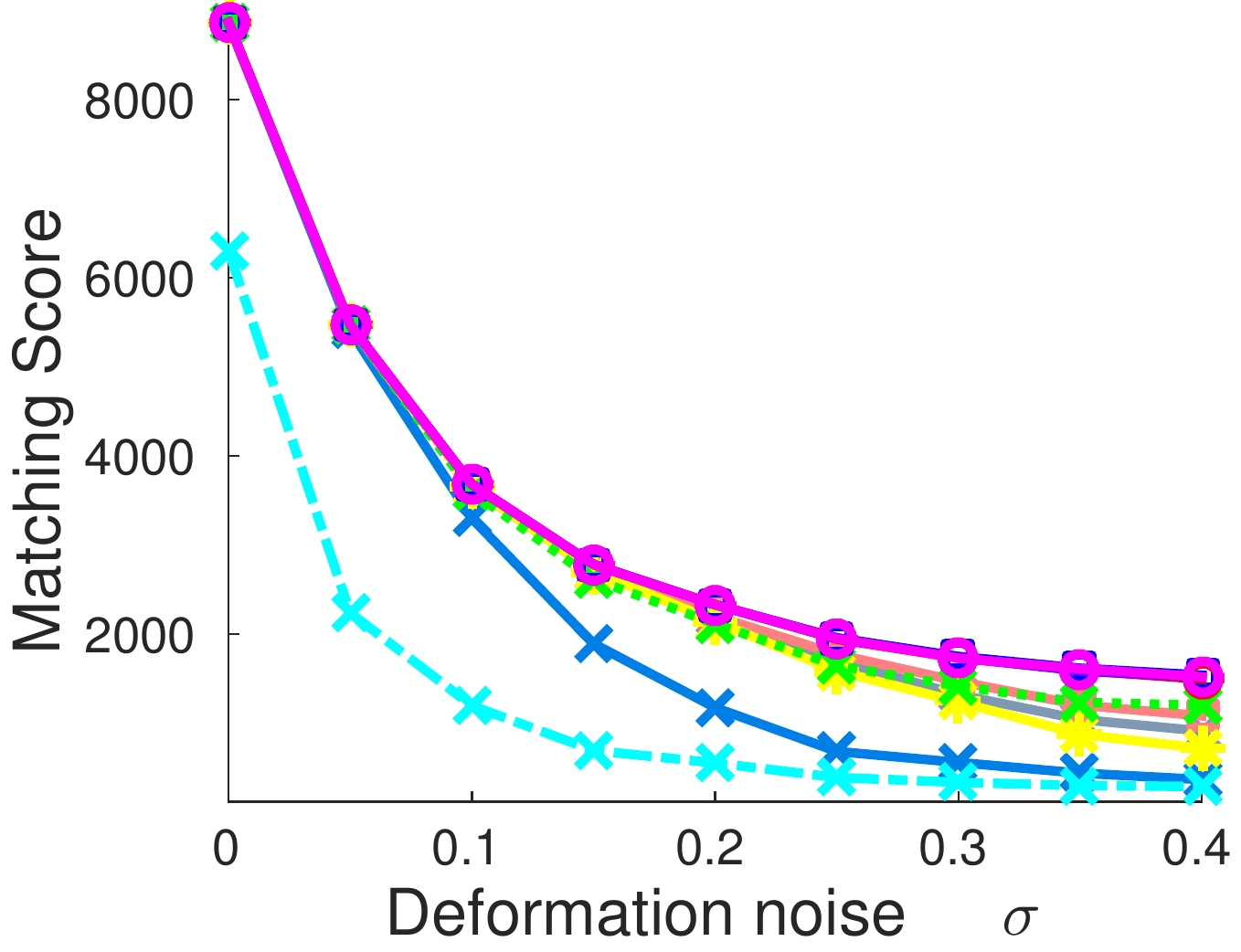} }  \\
    
    \setcounter{subfigure}{0}
    \subfloat[$n_{in}=20$]{ \includegraphics[width=0.31\linewidth]{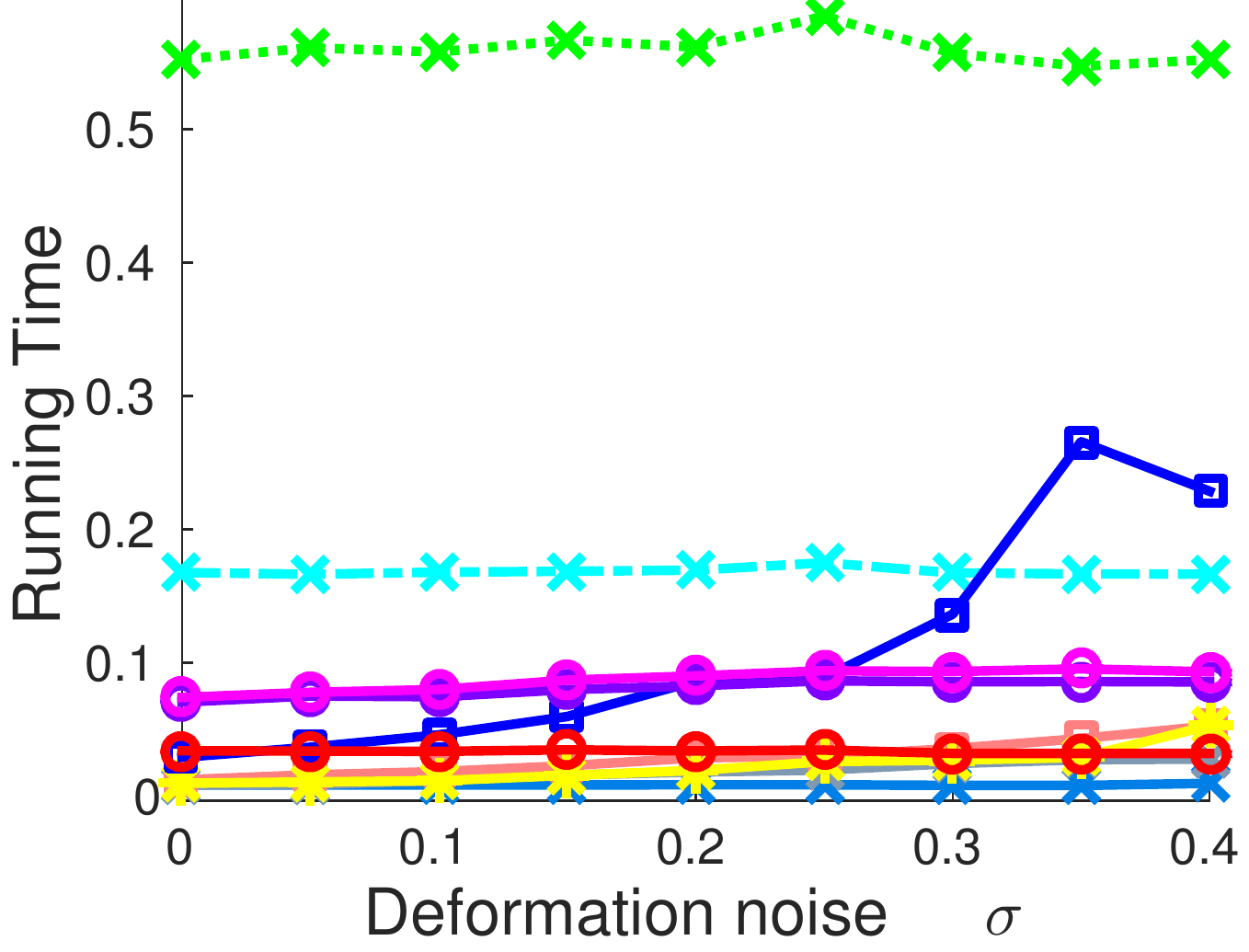} \label{appendix_fig:out_def_noscal} } 
    \subfloat[$n_{in}=30$]{ \includegraphics[width=0.31\linewidth]{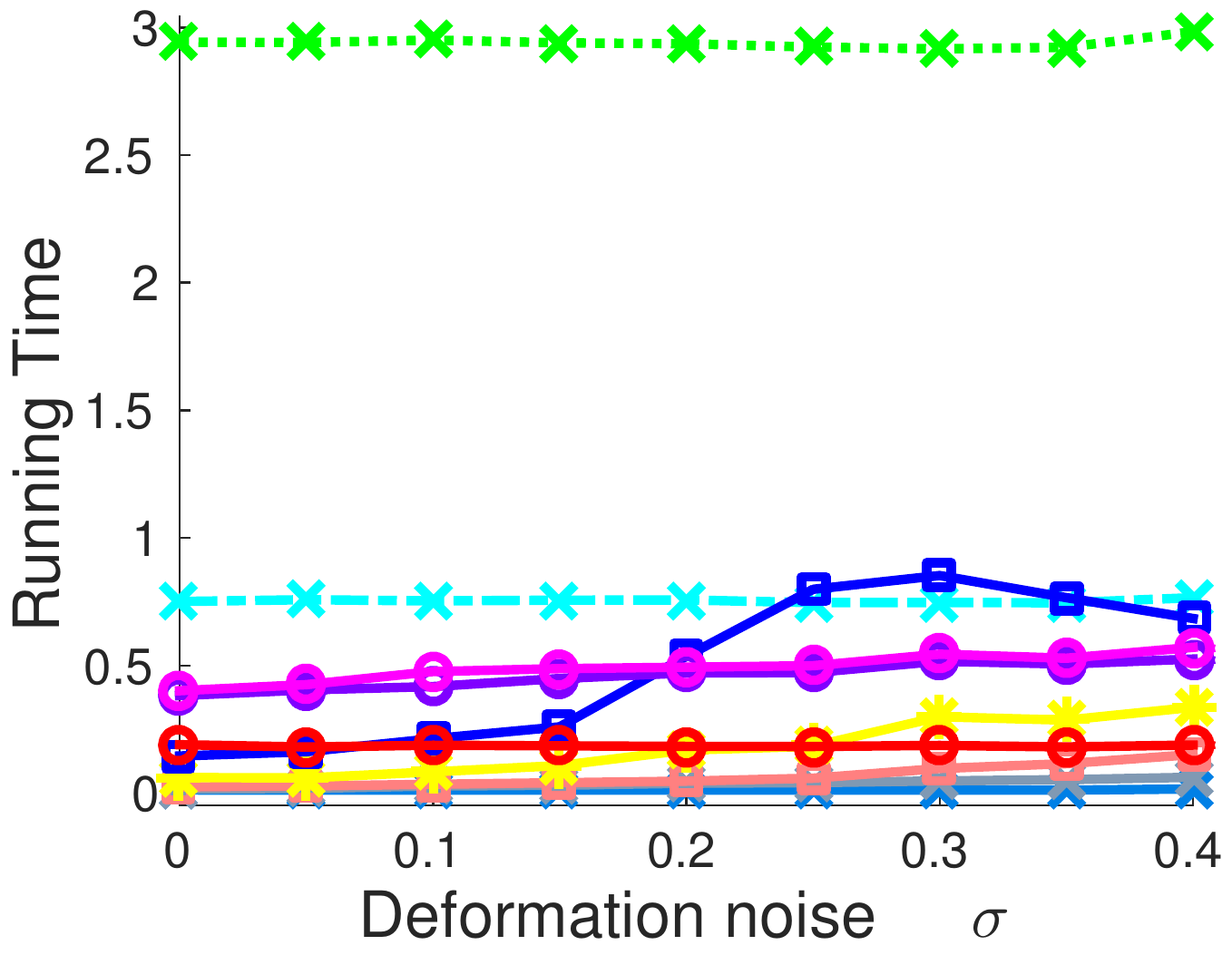} \label{appendix_fig:out_def_scal} } 
    \subfloat[$n_{in}=40$]{ \includegraphics[width=0.31\linewidth]{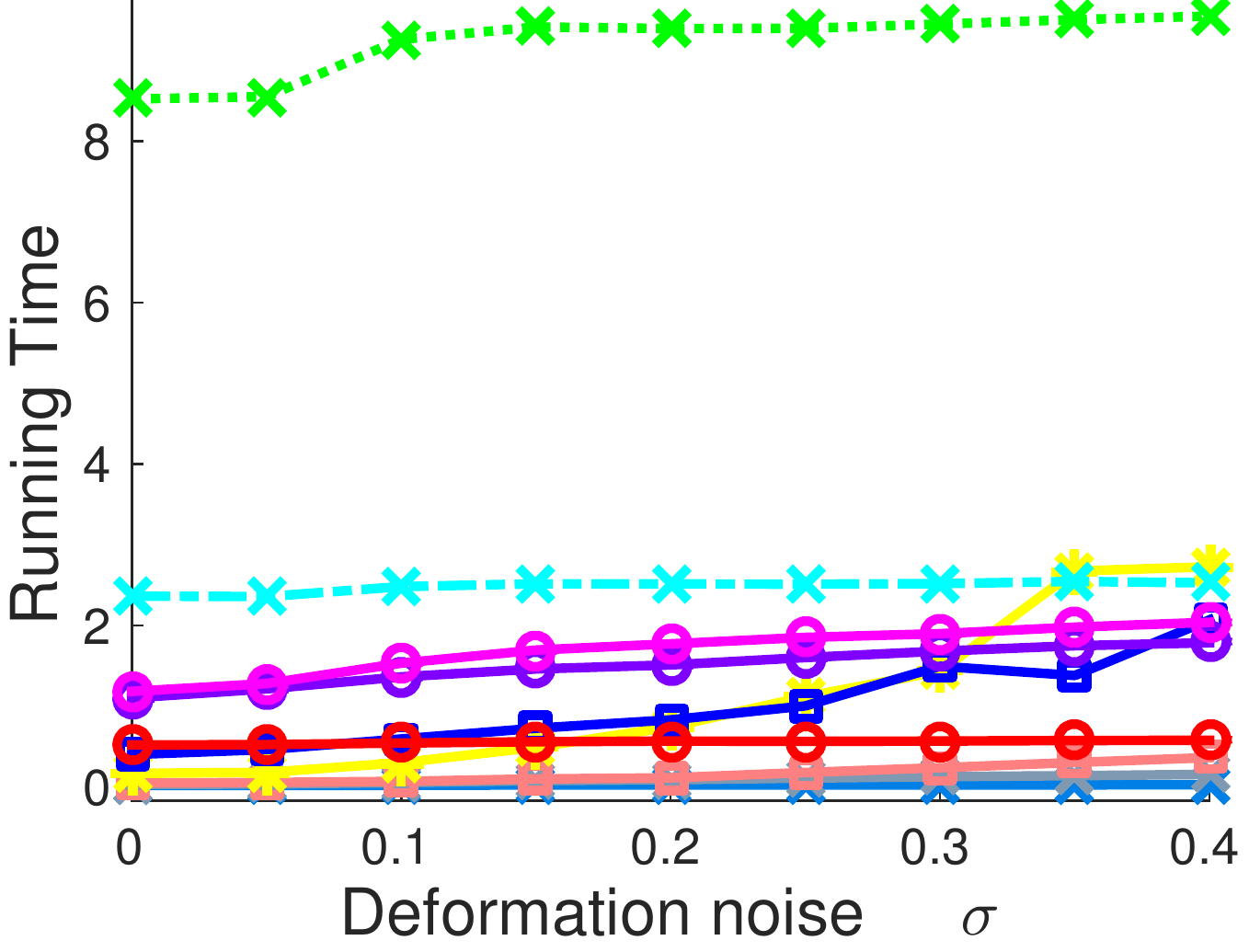} \label{appendix_fig:def} }
\end{center}    
\caption{
Matching point sets in $\R^2$ (deformation test):
The top row shows average accuracy while the middle row shows
average objective score and the bottom row shows average running time. 
(a) 20 inliers
(b) 30 inliers
(c) 40 inliers
(Best viewed in color.)
}
\label{appendix_fig:exp_def}
\end{figure*}

\section{CMU House Dataset}
Similar to the experiments done in Section \ref{subsec:exp_house}, we evaluate GM algorithms on two tasks 
where we match $15$ points to $30$ points and $25$ points to $30$ points in two corresponding images.
For each task, we match all the possible image pairs and compute the average result for each baseline.
The results in Figure \ref{appendix_fig:exp_house} show that our algorithms achieve competitive or better results than other methods for all the baselines.
\begin{figure*}[htb!]
\begin{center}
    \subfloat{ \includegraphics[width=0.31\linewidth]{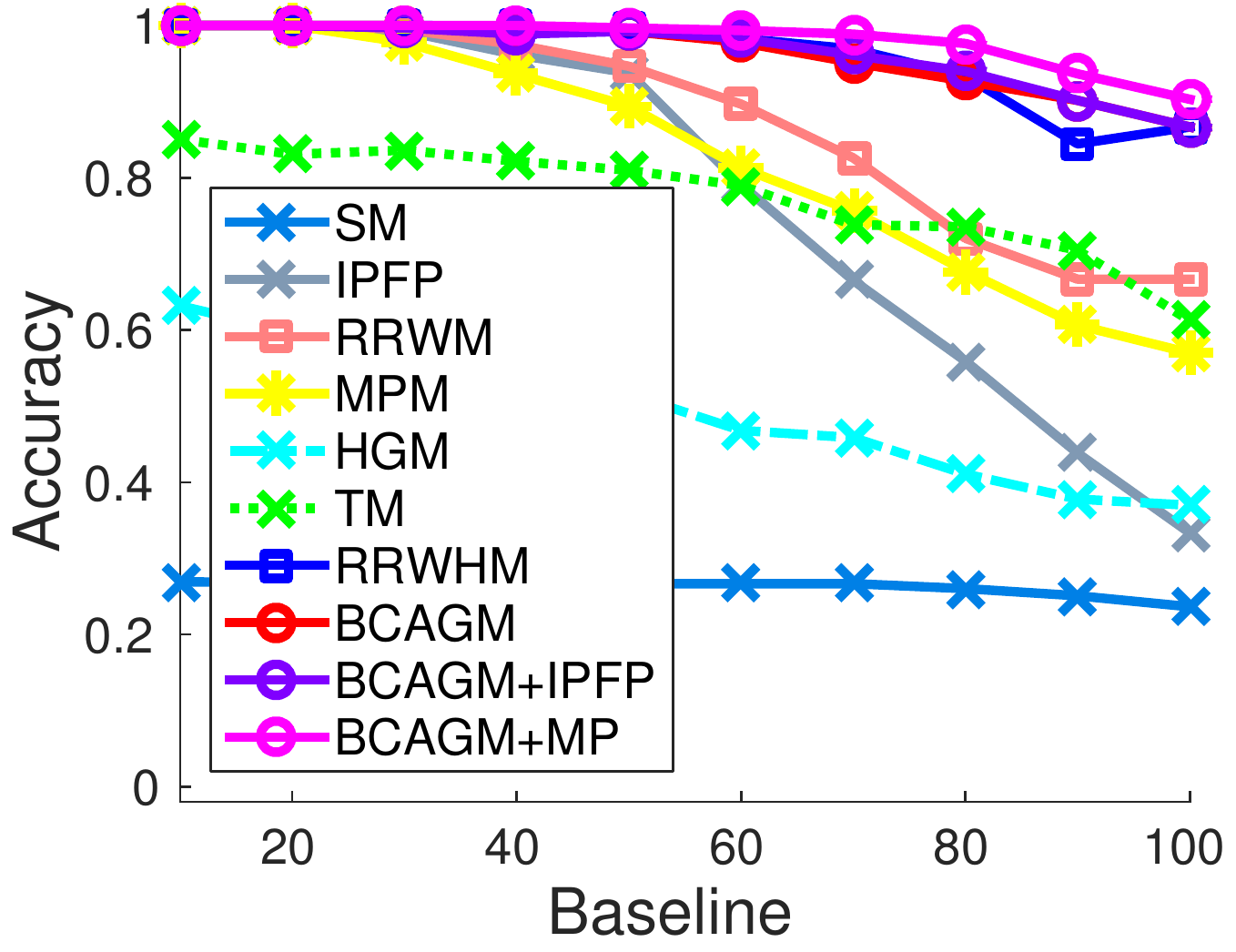} }  
    \subfloat{ \includegraphics[width=0.31\linewidth]{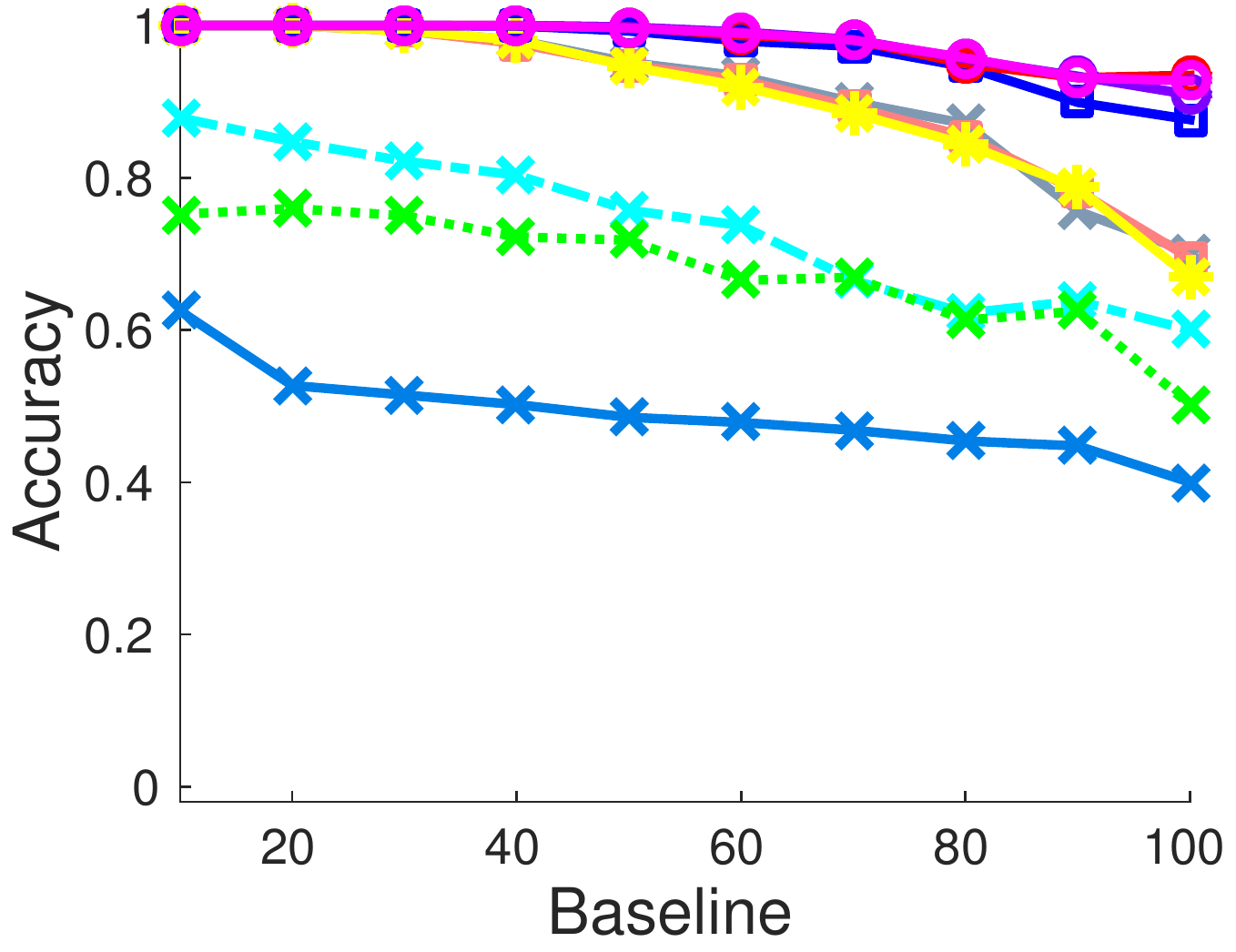} } \\
    \subfloat{ \includegraphics[width=0.31\linewidth]{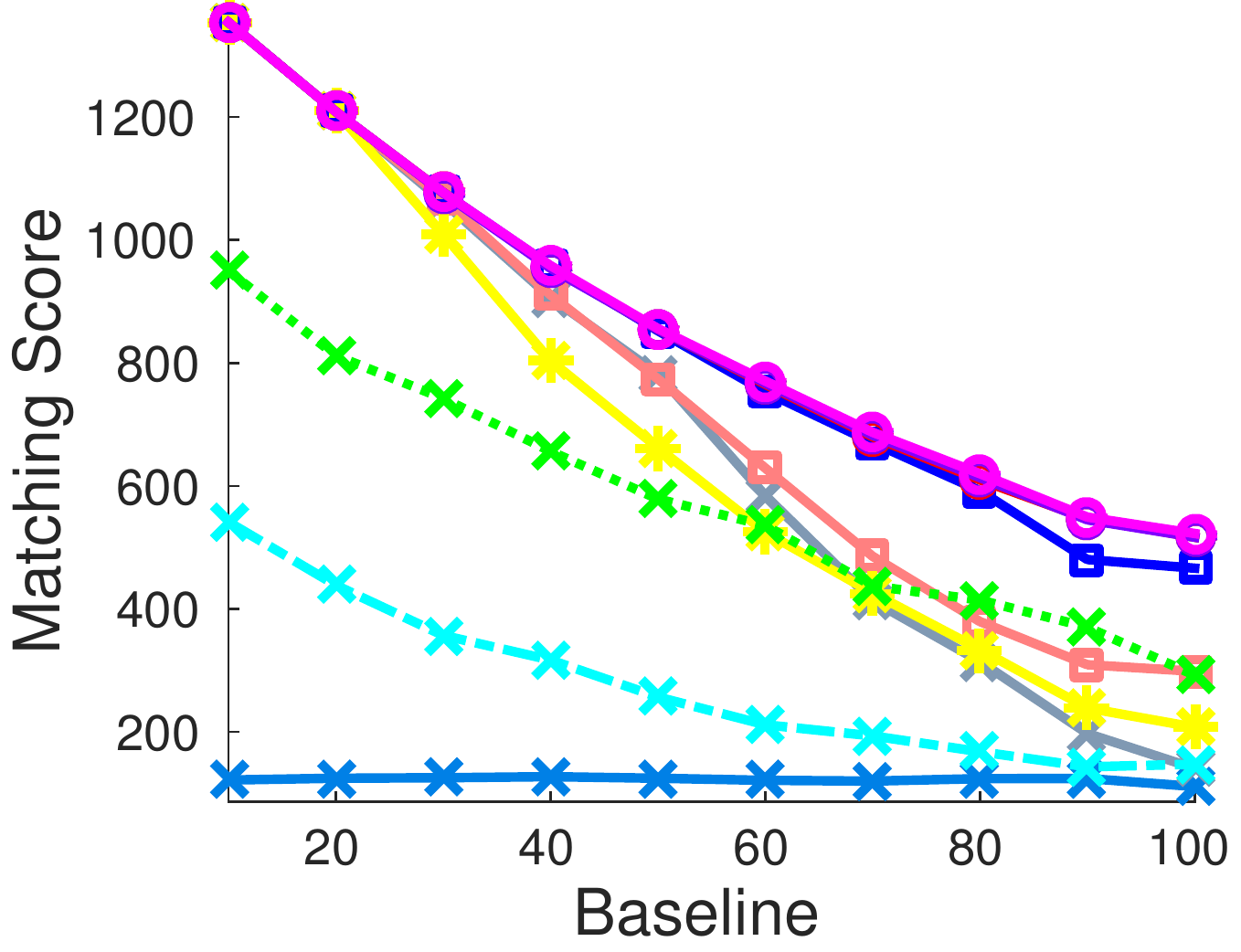} }  
    \subfloat{ \includegraphics[width=0.31\linewidth]{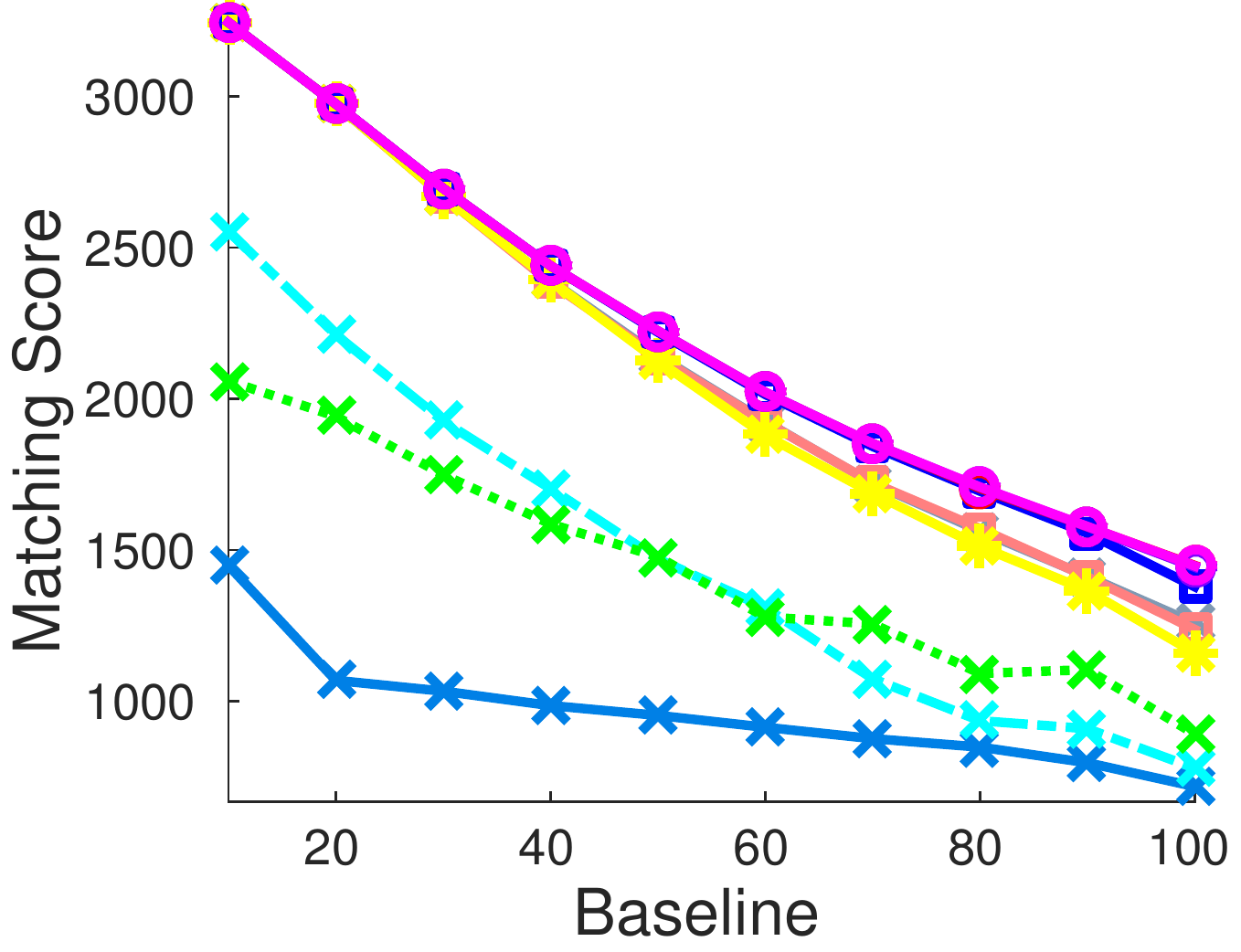} } \\
    \setcounter{subfigure}{0}
    \subfloat[15 pts vs 30 pts]{ \includegraphics[width=0.31\linewidth]{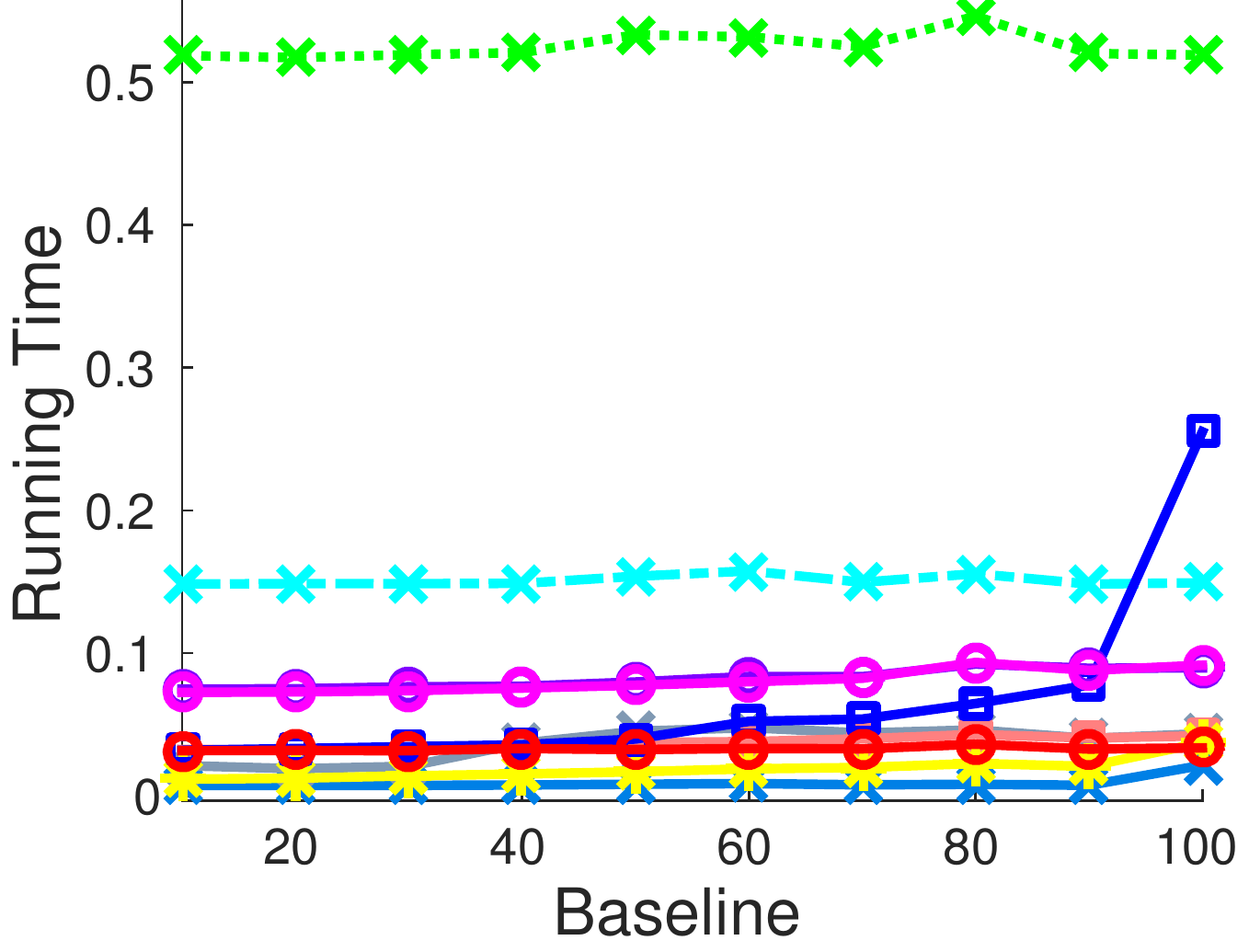} }  
    \subfloat[25 pts vs 30 pts]{ \includegraphics[width=0.31\linewidth]{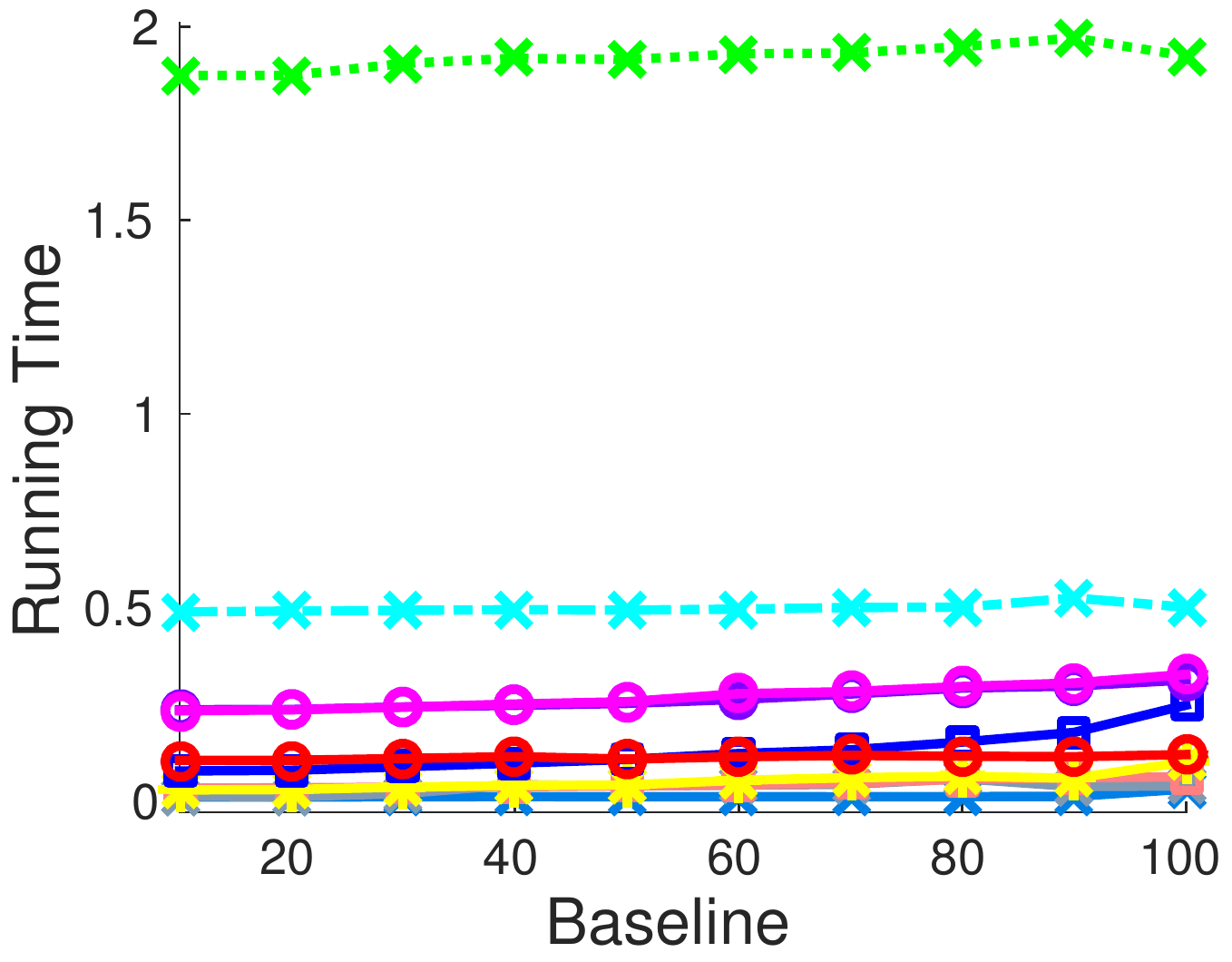} } 
\end{center}
\caption{
CMU house dataset:
The top row shows average matching accuracy while the middle row shows average 
objective score and the bottom row shows average running time. 
(Best viewed in color.)
}
\label{appendix_fig:exp_house}
\end{figure*}

\section{Car Motorbike Dataset}
Figure \ref{appendix_fig:exp_car_motor} shows the running time of higher order methods for the experiments done in 
Section \ref{subsec:exp_car_motor} on the Car and Motorbike dataset.
\begin{figure*}[htb!]
\begin{center}
    \subfloat[Car dataset]{ \includegraphics[width=0.31\linewidth]{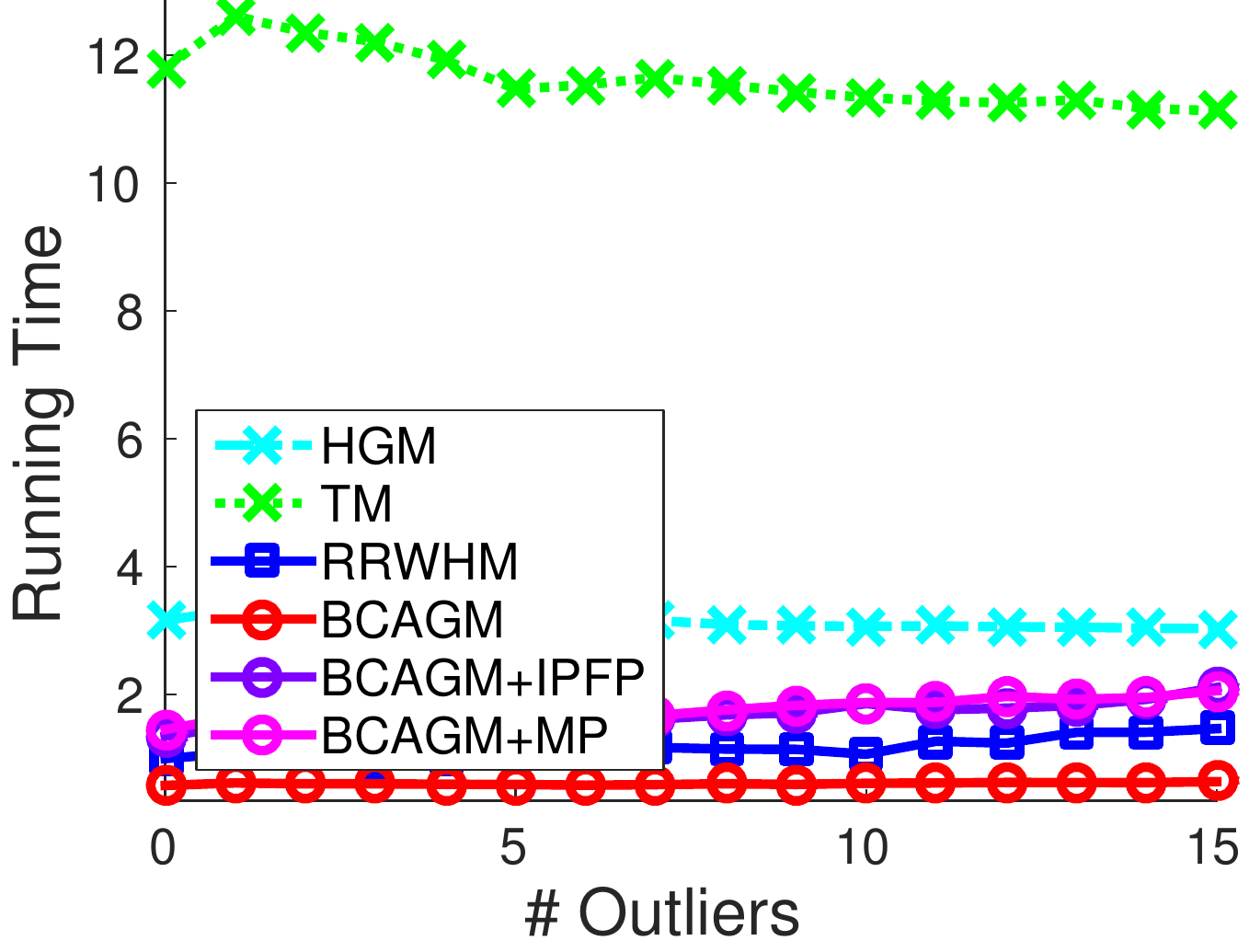} } 
    \subfloat[Motorbike dataset]{ \includegraphics[width=0.31\linewidth]{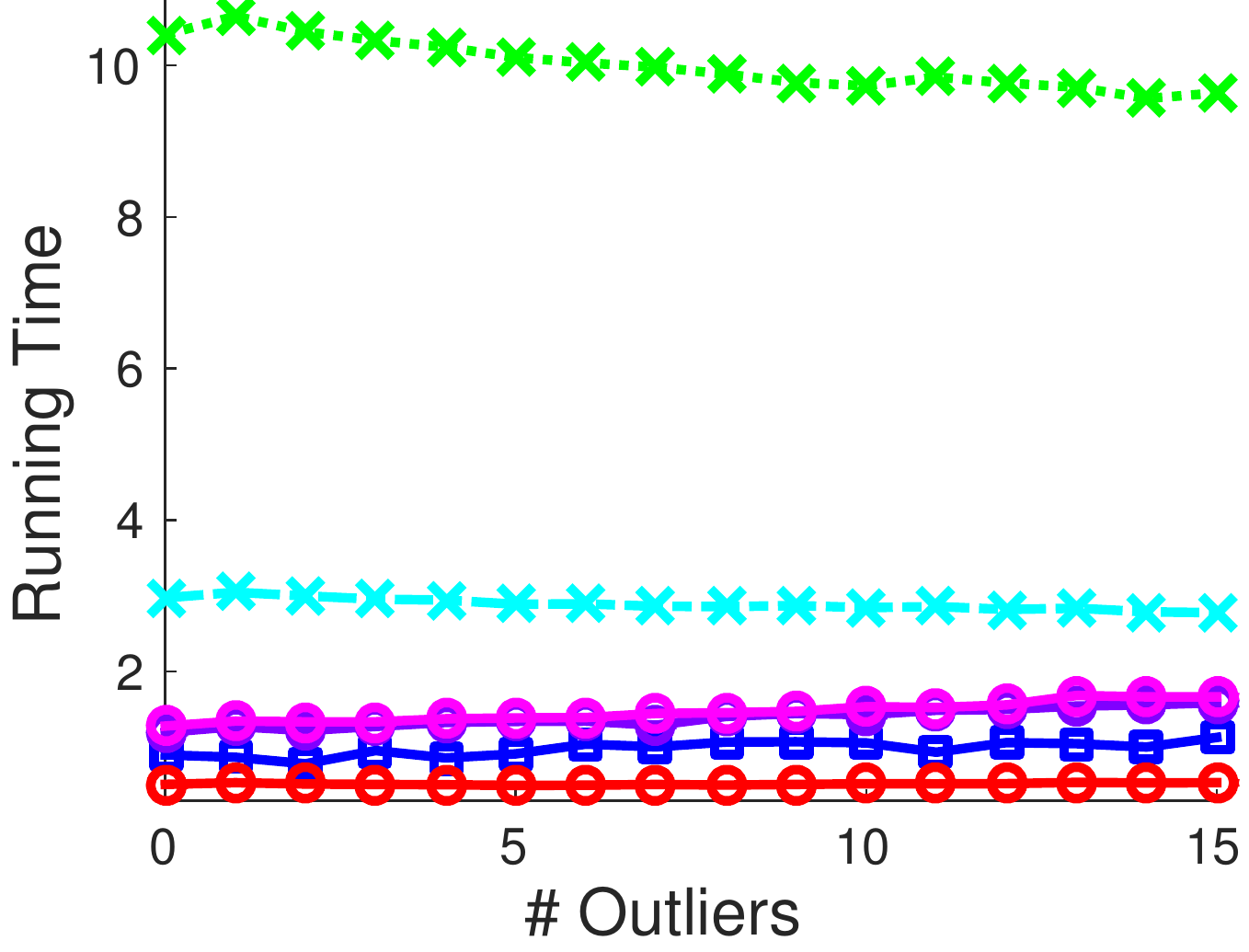} } 
\end{center}
\caption{
Car and Motorbike dataset: Running time of the third order methods. 
(Best viewed in color.)
}
\label{appendix_fig:exp_car_motor}
\end{figure*}

\end{document}